\documentclass{article}

     \usepackage[final]{neurips_2025}

\usepackage{graphicx}

\usepackage[utf8]{inputenc} 
\usepackage[T1]{fontenc}    
\usepackage{hyperref}       
\usepackage{url}            
\usepackage{booktabs}       
\usepackage{amsfonts}       
\usepackage{nicefrac}       
\usepackage{microtype}      
\usepackage{xcolor}         

\usepackage{algorithm}         
\usepackage{algpseudocode}
\usepackage{amsmath}
\usepackage{bm}
\usepackage{wrapfig}
\usepackage{multirow} 
\usepackage{enumitem}
\usepackage{amsthm}

\newtheorem{theo}{Theorem}

\newtheorem{lem}{Lemma}

\usepackage{nicefrac}       

\makeatletter
\newcommand*{\citelinktext}[2]{%
  \hyper@@link[cite]{}{cite.#1}{#2}}
\makeatother

\usepackage{pifont}
\newcommand{\cmark}{\ding{51}}%
\newcommand{\xmark}{\ding{55}}%

\usepackage[table,xcdraw]{xcolor}
\definecolor{lightgray}{gray}{0.9}

\usepackage{listings}
\usepackage{xcolor}
\definecolor{codegreen}{rgb}{0,0.6,0}
\definecolor{codegray}{rgb}{0.5,0.5,0.5}
\definecolor{codepurple}{rgb}{0.58,0,0.82}
\definecolor{backcolour}{rgb}{0.95,0.95,0.92}

\lstdefinestyle{mystyle}{
    backgroundcolor=\color{backcolour},   
    commentstyle=\color{codegreen},
    keywordstyle=\color{magenta},
    numberstyle=\tiny\color{codegray},
    stringstyle=\color{codepurple},
    basicstyle=\ttfamily\footnotesize,
    breakatwhitespace=false,         
    breaklines=true,                 
    captionpos=b,                    
    keepspaces=true,                 
    numbers=left,                    
    numbersep=5pt,                  
    showspaces=false,                
    showstringspaces=false,
    showtabs=false,                  
    tabsize=2
}

\lstset{style=mystyle}

\definecolor{blu}{RGB}{0, 0, 230} 

\newcommand{\ours}{\texttt{CoUn}}

\title{\ours: Empowering Machine Unlearning via Contrastive Learning}

\author{%
  Yasser H. Khalil \quad
  Mehdi Setayesh \quad
  Hongliang Li \\
  \\
  Huawei Noah's Ark Lab, Montreal, Canada \\
  \texttt{\{yasser.khalil1, mehdi.setayesh1, hongliang.li2\}@huawei.com}
}

\begin{document}

\maketitle

\begin{abstract}
Machine unlearning (MU) aims to remove the influence of specific ``forget'' data from a trained model while preserving its knowledge of the remaining ``retain'' data. Existing MU methods based on label manipulation or model weight perturbations often achieve limited unlearning effectiveness. To address this, we introduce {\ours}, a novel MU framework inspired by the observation that a model retrained from scratch using only retain data classifies forget data based on their semantic similarity to the retain data. {\ours} emulates this behavior by adjusting learned data representations through contrastive learning (CL) and supervised learning, applied exclusively to retain data. Specifically, {\ours} (1) leverages semantic similarity between data samples to indirectly adjust forget representations using CL, and (2) maintains retain representations within their respective clusters through supervised learning. Extensive experiments across various datasets and model architectures show that {\ours} consistently outperforms state-of-the-art MU baselines in unlearning effectiveness. Additionally, integrating our CL module into existing baselines empowers their unlearning effectiveness. \textbf{Code can be found in the supplementary material of our OpenReview submission.}
\end{abstract}

\section{Introduction}
\label{sec_introduction}

The widespread adoption of machine learning (ML) has raised concerns regarding data privacy and regulatory compliance, such as the General Data Protection Regulation (GDPR) \cite{mantelero2013eu, achille2024ai}. Machine unlearning (MU) \cite{li2025machine, shaik2024exploring, xu2024machine} addresses these concerns by removing the influence of specific training data (i.e., \textit{forget data}) from a trained model, termed the \textbf{Original model}, while preserving the knowledge of the remaining data (i.e., \textit{retain data}). Retraining the model from scratch on retain data is considered \textit{exact unlearning} \cite{di2022hidden, yan2022arcane}, termed the gold-standard \textbf{Retrain model}. While exact unlearning is effective, it is computationally inefficient. Alternatively, \textit{approximate unlearning} aims to efficiently achieve an unlearned model that performs approximately the same as the Retrain model \cite{khalil2025NoT, fan2024salun, jia2023model}.

To develop an effective approximate unlearning algorithm, we first analyze how the Retrain model classifies forget data. Figure \ref{fig_rep_space_retrain} illustrates the representation space of two Retrain models: one trained without `truck' class samples (i.e., \textit{class-wise forgetting}) and another trained without 10\% randomly selected samples (i.e., \textit{random forgetting}). Our analysis reveals that \textit{the Retrain model classifies forget samples into clusters of retain samples that exhibit the highest semantic similarity to them.} For instance, in class-wise forgetting, forget `truck' samples are mostly misclassified as semantically similar clusters, like `automobile' (69.32\%), `airplane' (13.47\%), and `ship' (12.60\%), with no correct classifications due to the absence of a `truck' cluster. In random forgetting, 97.42\% of forget `truck' samples are correctly classified as `truck' due to their semantic similarity with `truck' samples in the retain data, while most of the others  are misclassified as `automobile' (1.23\%), `airplane' (0.38\%), and `ship' (0.4\%). \begin{wrapfigure}{r}{0.51\textwidth}
  \vspace{-0.05in}
  \begin{center}
\includegraphics[trim={0.3cm 0.19cm 0.2cm 0.1cm},clip, scale=0.35]{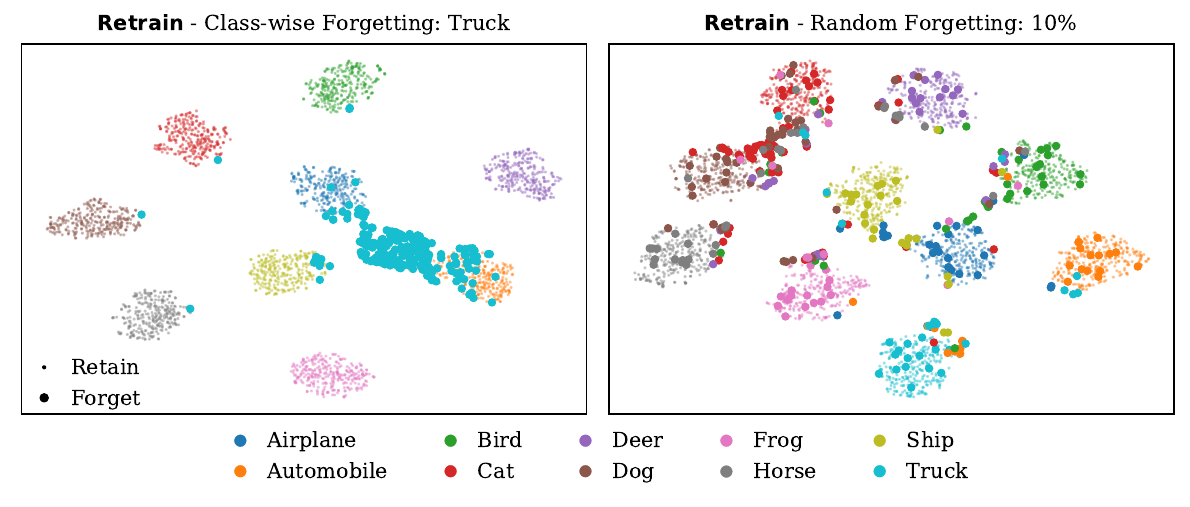}
  \end{center}
\vspace{-4.5mm}
\caption{\small\textbf{Representation space of the Retrain model} trained with ResNet-18 and CIFAR-10, excluding `truck' class samples (\textit{left}) and excluding 10\% randomly selected samples (\textit{right}). Small dots represent retain samples from different clusters, while larger dots indicate forget samples classified into clusters of retain samples that exhibit the highest semantic similarity to them.}
  \label{fig_rep_space_retrain}
  \vspace{-0.1in}
\end{wrapfigure}Thus, to closely match the performance of exact unlearning, approximate unlearning should adjust the learned data representations of the Original model such that: \ding{182} retain samples are correctly classified, and \ding{183} forget samples are pushed into clusters of other retain samples that exhibit the highest semantic similarity to them. In random forgetting, these clusters may be the same as or different from the original clusters of the forget samples; however, in class-wise forgetting, they must be different from the original clusters of the forget samples.

Existing MU methods \cite{khalil2025NoT, fan2024salun, jia2023model, zhao2024makes, chen2023boundary} attempt to close the performance gap between approximate and exact unlearning, focusing on three criteria: \ding{182} \textbf{good forget quality} (evaluating misclassification of forget data and success in membership inference attacks \cite{fan2024salun, zhao2024makes, song2019privacy}), \ding{183} \textbf{high utility} (maintaining high accuracy on retain data and generalizing well to test data), and \ding{184} \textbf{efficiency}. Prior methods \cite{kurmanji2023towards, fan2024salun, graves2021amnesiac, chundawat2023can} often rely on label manipulation, assigning forget samples to semantically inconsistent or random clusters to increase misclassification. However, this strategy can degrade the performance on retain data, thereby reducing utility. Moreover, these methods require access to forget data, which may not always be available—particularly in federated unlearning settings where clients may leave the network after requesting unlearning \cite{liu2024survey, khalil2025NoT}. Recent methods focus on model weight perturbations \cite{khalil2025NoT, jia2023model}, and eliminate the need for forget data. Yet, excessive perturbations can compromise utility by erasing important knowledge, while weak perturbations degrade forget quality. Thus, inadequate perturbations limit unlearning effectiveness.

In light of the above, we propose {\ours} (\textit{\underline{Co}ntrastive learning for empowering \underline{Un}learning}), a novel MU framework that leverages contrastive learning (CL) and supervised learning, applied exclusively to retain data. {\ours} exploits the fact that the learned representations of the Original model has already captured the semantic similarities among samples. Consequently, adjusting the retain representations during unlearning indirectly influences the forget representations. In {\ours}, the CL module (Figure~\ref{fig_overview_CoUn}) pulls together representations of augmented views of the same retain samples (positives) while pushing apart representations of different retain samples (negatives) \cite{gui2024survey, jaiswal2020survey, chen2020simple, denize2023similarity}. However, false negatives, which are samples that belong to the same cluster as positives but are treated as negatives, can cause clusters to overlap. This phenomenon is known as \textit{cluster collision} \cite{denize2023similarity, wei2021co, chuang2020debiased}. As a result, forget representations are indirectly pushed toward clusters of other retrain samples that exhibit the highest semantic similarity to them, thereby improving forget quality.  Additionally, {\ours} preserves high utility by mitigating cluster collision among retain representations through supervised learning. This adjustment of forget representations toward semantically similar retain representations, while maintaining the separation of retain clusters, enables effective unlearning. Our key \textbf{contributions} are:

$\bullet$ We introduce {\ours}, a novel MU framework that achieves effective unlearning by adjusting learned data representations based on semantic similarity using CL and supervised learning on retain data. 

$\bullet$ We provide empirical insights into how {\ours} effectively unlearns by pushing forget representations into clusters of semantically similar retain representations. We also present a theoretical analysis showing that {\ours} induces a higher misclassification rate on forget data while maintaining low misclassification rate on retain data, thereby contributing to better forget quality and utility.

$\bullet$ We validate {\ours} across different datasets, model architectures, and forgetting scenarios, showing a reduced performance gap with exact unlearning compared to MU baselines. We also demonstrate that integrating our CL module into existing baselines empowers their unlearning effectiveness.

\section{Related Work}
\label{sec_relatedworks}

\paragraph{Label Manipulation.} Existing MU methods attempt to unlearn by manipulating labels of forget data, thereby misleading the model into learning incorrect labels. \textbf{NegGrad} \cite{thudi2022unrolling, golatkar2020eternal} achieves this by applying gradient ascent on the forget data. \textbf{NegGrad+} \cite{kurmanji2023towards} combines fine-tuning (minimizing loss with respect to retain data) and gradient ascent (maximizing loss with respect to forget data). Both NegGrad and NegGrad+ exemplify label manipulation strategies by modifying the model's output distribution of the forget data through targeted gradient-based adjustments. \textbf{Amnesiac} \cite{graves2021amnesiac} and \textbf{SalUn} \cite{fan2024salun} randomly relabel the forget data and then apply joint optimization on both retain and forget data. \textbf{BadT} \cite{chundawat2023can} uses knowledge distillation from two teacher models (Original and Random models) into the unlearned model, while \textbf{SCRUB} \cite{kurmanji2023towards} extends BadT by incorporating NegGrad+. Both BadT and SCRUB manipulate the model's output distributions of forget data by using a random teacher, effectively altering the perceived labels of the forget data. However, such methods that assign forget samples to semantically inconsistent or random classes to increase misclassification rates often degrade performance on retain data, resulting in lower utility. Moreover, as shown in Figure \ref{fig_rep_space_retrain}, exact unlearning does not aim to misclassify all forget samples, as some can be correctly classified.

\paragraph{Model Weight Perturbation.} Recent methods perturb the model's weights to achieve unlearning. \textbf{$\bm{\ell_1}$-sparse} \cite{jia2023model} introduces an $\ell_1$ regularization term in the objective function, inspired by model pruning \cite{frankle2018lottery, ma2021sanity}. In addition to random labeling, SalUn \cite{fan2024salun} adjusts model weight parameters using gradient-based saliency masks. \textbf{SSD} \cite{foster2024fast} uses the Fisher information matrix to identify and dampen weight parameters critical to forget data. \textbf{NoT} \cite{khalil2025NoT} negates layer-wise weights to support unlearning. However, inadequate perturbations reduce unlearning effectiveness. Excessive perturbations degrade utility by erasing essential knowledge, while insufficient perturbations fail to properly eliminate the influence of forget data, thereby compromising forget quality.

\paragraph{Contrastive Learning.} The goal of CL is to maximize similarity between augmented views of the same sample while minimizing similarity between different samples \cite{gui2024survey, jaiswal2020survey, chen2020simple, tian2020makes, chen2020improved, zbontar2021barlow, bardes2022vicreg}. CL's ability to adjust representations aligns with the objective of MU, which seeks to disrupt the representations of forget data. This work focuses on CL with the InfoNCE loss, as used in SimCLR \cite{chen2020simple}. One method that applies CL for unlearning pushes forget samples away from retain samples of the same class and pulls them closer to retain samples of different classes \cite{zhang2024contrastive}. However, this method harms the model's utility by pushing forget samples away from their original clusters, resulting in high misclassification rates. {\ours} differs from \cite{zhang2024contrastive} by utilizing semantic similarity and cluster collision for effective unlearning, instead of forcing forget samples to be misclassified. Similar to exact unlearning, {\ours} yields a higher misclassification rate for forget data while maintaining low misclassification rate on retain data. Further discussion is provided in Appendix~\ref{sec_relatedworks_detailed}.

\section{Methodology}
We begin by formulating the MU problem in Section~\ref{sec_preliminaries}. Section~\ref{sec_ours_overview} then introduces our proposed MU framework, {\ours}. Section~\ref{sec_empirical} follows with empirical insights demonstrating how CL and supervised learning adjusts the representation space to facilitate effective unlearning. Finally, Section~\ref{sec_theoretical} provides theoretical insights.

\subsection{Machine Unlearning: Background}
\label{sec_preliminaries}
Given a dataset $\mathcal{D}$, partitioned into \textbf{forget data} $\mathcal{D}_u$ and \textbf{retain data} $\mathcal{D}_r = \mathcal{D} \setminus \mathcal{D}_u$, the goal of MU is to transform an \textbf{Original model} $\bm{\theta}_o$, trained on $\mathcal{D}$, into an \textbf{unlearned model} $\bm{\theta}_u$ that effectively removes the influence of $\mathcal{D}_u$. We define the forget data ratio as: $|\mathcal{D}_u|/|\mathcal{D}|\times100$.

In \textit{exact unlearning}, $\bm{\theta}_u$ is obtained by training a randomly initialized model only on $\mathcal{D}_r$, yielding the gold-standard \textbf{Retrain model}. While this ensures precise unlearning, it incurs significant computational cost. In contrast, \textit{approximate unlearning} methods aim to obtain $\bm{\theta}_u$ more efficiently, often at the expense of reduced effectiveness, resulting in a performance gap relative to the Retrain model. In approximate unlearning, $\bm{\theta}_u$ is initialized with the parameters of $\bm{\theta}_o$.

\subsection{{\ours} Overview}
\label{sec_ours_overview}
{\ours} is a CL-based MU framework designed to adjust the learned representation space of both retain and forget data for effective unlearning. The framework of {\ours} incorporates two key components: \ding{182} \textbf{contrastive learning} and \ding{183} \textbf{supervised learning}. The overall architecture of {\ours} is illustrated in Figure \ref{fig_overview_CoUn}. In {\ours}, CL is applied on retain data via a \textit{CL module} to adjust their representations by pulling the representations of two augmented views of the same sample (positives) closer together, while pushing representations of different samples (negatives) further apart \cite{gui2024survey, jaiswal2020survey, chen2020simple, denize2023similarity}.

\begin{wrapfigure}{r}{0.51\textwidth}
  \vspace{-0.05in}
  \begin{center}
\includegraphics[trim={0.7cm 0.55cm 1.3cm 0.75cm},clip, scale=0.82]{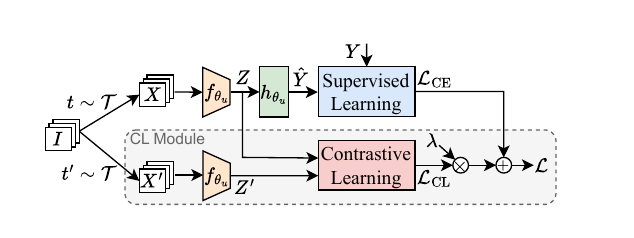}
  \end{center}
\vspace{-3mm}
\caption{\small\textbf{{\ours} framework.} Two augmented views are generated from a batch of retain image samples~$\bm{I}$. These views are processed by the feature extractor $f_{\bm{\theta}_u}$, yielding retain representations ($\bm{Z}$, $\bm{Z}^{\prime}$). A \textit{CL module} adjusts the representations, while supervised learning applied via the classifier head $h_{\bm{\theta}_u}$ enforces their cluster separation.}
  \label{fig_overview_CoUn}
  \vspace{-0.1in}
\end{wrapfigure}Let $(\bm{I},\bm{Y})$ denote a batch of images and their corresponding labels sampled from $\mathcal{D}_r$. We have $(\bm{I},\bm{Y}) = \{(\bm{i}_n, \bm{y}_n)\}_{n=1}^N$, where $N$ is the batch size and $(\bm{i}_n,\bm{y}_n) \in \mathcal{D}_r$. We consider that $\bm{Y} \in \mathbb{R}^{N\times K}$ represents a batch of one-hot encoded target labels, with $K$ denoting the number of classes. Two augmented views, $\bm{X} = t(\bm{I})$ and $\bm{X}^{\prime} = t^{\prime}(\bm{I})$ are generated using transformations $t, t^{\prime}\sim\mathcal{T}$, respectively, where $\mathcal{T}$ is a transformation distribution combining multiple image augmentations (e.g. random cropping, random horizontal flipping, and color normalization). These augmented views are encoded by the feature extractor $f_{\bm{\theta}_u}$, producing \textbf{representations} $\bm{Z} = f_{\bm{\theta}_u}(\bm{X})$ and $\bm{Z}^{\prime} = f_{\bm{\theta}_u}(\bm{X}^{\prime}) \in \mathbb{R}^{N \times D}$, where $D$ is the representation dimension. The representations are compared using cosine similarity, a commonly employed measure in CL to assess the similarity between normalized vectors. The cosine similarity between two normalized vectors $\bm{w}$ and $\bm{v}$ is computed as $\bm{w} \cdot \bm{v} = \bm{w}^T \bm{v}$. Let $\bm{z}_n$ and $\bm{z}^{\prime}_n$ denote the $n$-th row of the representation matrices $\bm{Z}$ and $\bm{Z}^{\prime}$, respectively. For a given representation $\bm{z}_n$ corresponding to image $\bm{i}_n$, the CL loss for the positive pair of representations ($\bm{z}_n$, $\bm{z}^{\prime}_n$) is calculated as: 
\begin{equation} 
\label{eq_cl_subcomponent}
l\big(\bm{z}_n\big) = -\log{\frac{\exp({\bm{z}_n \cdot\bm{z}_n^{\prime}}/{\tau} )}{\sum_{j=1}^{N}{\exp({\bm{z}_n\cdot\bm{z}_{j}^{\prime}}/{\tau} )}}}, 
\end{equation}
where $\tau$ is a temperature constant. Each representation $\bm{z}^{\prime}_j \neq \bm{z}^{\prime}_n$ in the batch is considered a negative for $\bm{z}_n$. This CL loss encourages positive pairs of representations to be closer in the representation space while pushing them away from negative representations. Following SimCLR \cite{chen2020simple}, we employ a symmetric formulation for the overall CL loss, defined as:
\begin{equation} 
\label{eq_cl}
\mathcal{L}_{\textrm{CL}}(\bm{Z}, \bm{Z}^{\prime})=\frac{1}{N}\sum_{n=1}^{N}{\big( l\big(\bm{z}_n\big) + l\big(\bm{z}_n^{\prime}\big) \big)}.
\end{equation}
Since {\ours} samples a random batch, it does not explicitly select negatives. Consequently, false negatives, which are negative image samples sharing the same class label as the positive image sample, may be present. Specifically, for each image sample $\bm{i}_n$, images $\bm{i}_j$ in the batch (with $j \neq n$) are false negatives if $\bm{y}_j=\bm{y}_n$. Including false negatives in CL introduces the \textit{cluster collision} problem \cite{denize2023similarity, wei2021co, chuang2020debiased}, where samples from the same cluster are pushed apart, potentially bringing them closer to semantically similar samples from other clusters, thus causing cluster overlap. However, false negatives that are more semantically similar with other samples within their own clusters will remain in their original clusters. In other words, not all false negatives will leave their original clusters.

Moreover, since the Original model is trained on both retain and forget data, which share semantic information, this information is captured in the model's weights and reflected in its learned representations. Arora et al. \cite{saunshi2019theoretical} provide theoretical and empirical evidence showing that learned representations connect similarity in the training data to the semantic information that is implicitly present in downstream tasks. Their work also establishes provable guarantees on the performance of such representations in downstream settings. This supports the observation that the Retrain model—trained solely on retain data—classifies forget samples based on semantic similarity, and this explains why in random forgetting it can correctly classify some of the forget data. From the Retrain model's perspective, retain data serve as the training set, and forget data act as the downstream task. Consequently, any adjustment to retain representations will indirectly influence forget representations. 

Building on this principle, when the learned retain representations are adjusted using CL, forget representations are indirectly pushed toward clusters of other retain samples that exhibit the highest semantic similarity to them. In random forgetting scenario, these retain samples may belong to clusters that are either the same as or different from the original clusters of the forget samples. In contrast, in class-wise forgetting scenario, they necessarily belong to different clusters. As a result, the CL module in {\ours} can induce cluster collisions among both retain and forget representations. While this improves forget quality, it also degrades model's utility. To address this, {\ours} mitigates the impact of cluster collision on retain representations by applying supervised learning to the retain data. This ensures that retain representations are preserved within their respective clusters, thereby maintaining high model utility.

Let $\bm{\hat{Y}} \in \mathbb{R}^{N\times K}$ represent the model predictions. We have $\hat{\bm{Y}} = h_{\bm{\theta}_u}(\bm{Z})$, where the classifier head $h_{\bm{\theta}_u}$ takes representations $\bm{Z}$ as input to produce predictions $\hat{\bm{Y}}$. Cross-entropy (CE) loss is used in {\ours} for supervised learning to maintain cluster separation for retain samples, and is defined as: 
\begin{equation} \label{eq_ce}
\mathcal{L}_{\textrm{CE}}(\bm{Y}, \bm{\hat{Y})} = - \frac{1}{N} \sum_{n=1}^{N} \sum_{k=1}^{K} \bm{y}_{n,k} \log(\hat{\bm{y}}_{n,k}).
\end{equation}
Therefore, the final loss function combines CE and CL losses, weighted by a scaling factor $\lambda$: 
\begin{equation}
\label{eq_overall_loss}
    \mathcal{L} = \mathcal{L}_{\textrm{CE}} + \lambda\mathcal{L}_{\textrm{CL}}.
\end{equation}
Appendix~\ref{code_conun} presents the pseudo-code and the PyTorch implementation of {\ours}.

\subsection{Empirical Analysis}
\label{sec_empirical}

\begin{wrapfigure}{r}{0.51\textwidth}
  \vspace{-0.3in}
\centering
\begin{minipage}{\linewidth}\centering
\includegraphics[scale=.35]{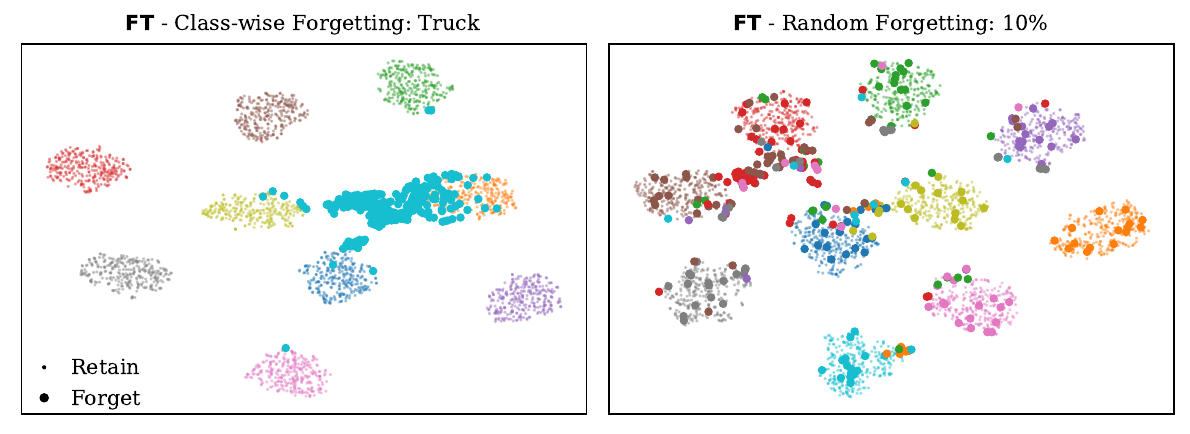}
\end{minipage}
\\
\begin{minipage}{\linewidth}\centering
\includegraphics[scale=.35]{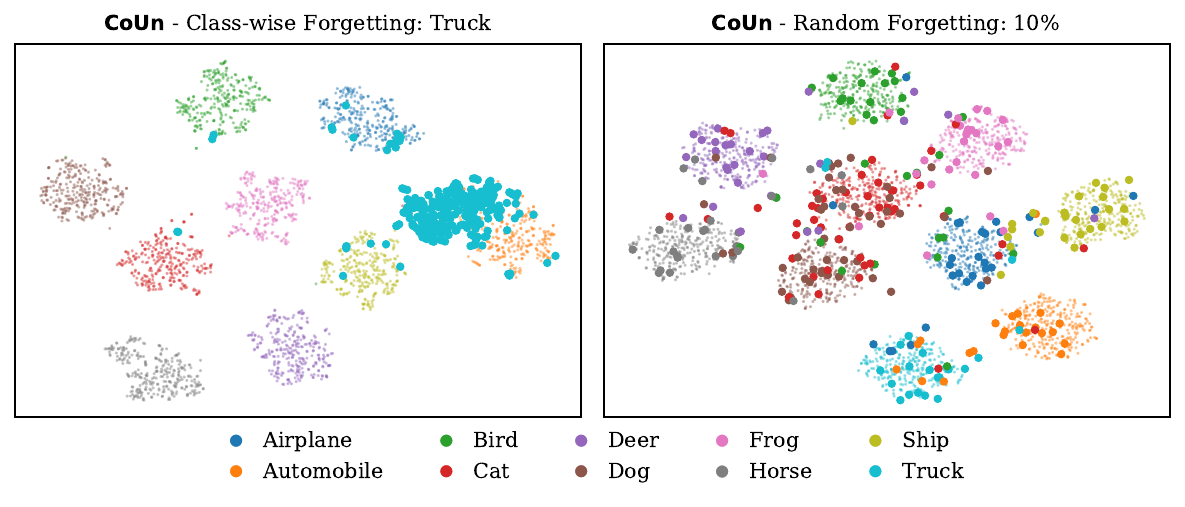}
\end{minipage}
\vspace{-3mm}
\caption{\small\textbf{Representation space of FT and {\ours} unlearned models} (\textit{rows}). \textit{Columns} correspond to two forgetting scenarios: class-wise (`truck') and random (10\% forget ratio). The Original model is trained on CIFAR-10 using ResNet-18. Small dots represent retain samples from different clusters, while larger dots indicate forget samples classified into the corresponding clusters. To achieve effective unlearning, {\ours} adjusts representations based on semantic similarity, pushing forget representations into clusters of other retain samples with the highest semantic similarity to them, and preserving retain representations within their clusters.} 
\label{fig_rep_space_ft_coun}
\vspace{-0.18in}
\end{wrapfigure}Figure~\ref{fig_rep_space_ft_coun} visualizes the representation space of four $\bm{\theta}_u$ models produced by the fine-tune (FT) and {\ours} unlearning methods under class-wise and random forgetting scenarios. All four $\bm{\theta}_u$ models are initialized from $\bm{\theta}_o$, which is trained on the entire CIFAR-10 training data using ResNet-18. FT reduces the influence of $\mathcal{D}_u$ through catastrophic forgetting by fine-tuning $\bm{\theta}_o$ on $\mathcal{D}_r$. Although the CL module in {\ours} is applied only to retain data, the visualization shows that the forget representations are affected—they are pushed toward clusters of other retain samples that exhibit the highest semantic similarity to them. Additionally, we can see that for random forgetting these clusters may either be the same as or different from the original clusters of the forget samples; and for class-wise forgetting, they are necessarily different. Meanwhile, retain representations remain well-clustered due to supervised learning. Notably, under class-wise forgetting setting, FT performs comparably to Retrain (Figure~\ref{fig_rep_space_retrain}), due to the relatively low entanglement between $\mathcal{D}_r$ and $\mathcal{D}_u$ in this setting, making it a simpler unlearning task than random forgetting \cite{zhao2024makes}. 

\begin{wraptable}{R}{8cm}
\vspace{-0.2in}
\centering
\caption{\small\textbf{Predictions of forget `truck' samples based on most semantically similar classes}. Experiments are conducted using CIFAR-10 and ResNet-18. The \textcolor{blue}{difference ($\Delta$)} and the (\textcolor{red}{best}) \textcolor{blue}{average difference} between each method and Retrain are reported.}
\resizebox{\linewidth}{!}{
\begin{tabular}{lllllll}
\toprule
\multirow{2}{*}{\shortstack[c]{\textbf{Forgetting} \\ \textbf{Scenario}}} & \multirow{2}{*}{\shortstack[c]{\textbf{Method}}} & \multicolumn{4}{c}{\textbf{Predictions (\%) - (\textcolor{blue}{$\Delta \downarrow$})}} & \multicolumn{1}{c}{\multirow{2}{*}{\shortstack[c]{\textbf{Avg.} \\ \textbf{Diff. $\downarrow$}}}} \\
\cmidrule(r){3-6}  
&&\multicolumn{1}{c}{\textbf{Truck}} & \multicolumn{1}{c}{\textbf{Automobile}} & \multicolumn{1}{c}{\textbf{Airplane}} & \multicolumn{1}{c}{\textbf{Ship}} & \\
\midrule

\multirow{3}{*}{\shortstack[c]{Class \\(`truck')}} & Retrain & 0.00 (\normalsize{\textcolor{blue}{0.00})} & 69.32 (\normalsize{\textcolor{blue}{0.00})} & 13.47 (\normalsize{\textcolor{blue}{0.00})} & 12.60 (\normalsize{\textcolor{blue}{0.00})}& \normalsize{\textcolor{blue}{0.00}} \\

\cline{2-7}
& FT & 0.00 (\normalsize{\textcolor{blue}{0.00})} & 70.29 (\normalsize{\textcolor{blue}{0.97})} & 12.38 (\normalsize{\textcolor{blue}{1.09})} & 13.12 (\normalsize{\textcolor{blue}{0.52})}& \normalsize{\textcolor{blue}{0.65}} \\

\cline{2-7}
& {\ours} & 0.00 (\normalsize{\textcolor{blue}{0.00})} & 69.60 (\normalsize{\textcolor{blue}{0.28})} & 13.96 (\normalsize{\textcolor{blue}{0.49})} & 13.13 (\normalsize{\textcolor{blue}{0.53})}& \normalsize{\textcolor{red}{0.33}} \\
\hline
\hline

\multirow{3}{*}{\shortstack[c]{Random \\(10\%)}} & Retrain & 97.42 (\normalsize{\textcolor{blue}{0.00})} & 1.23 (\normalsize{\textcolor{blue}{0.00})} & 0.38 (\normalsize{\textcolor{blue}{0.00})} & 0.40 (\normalsize{\textcolor{blue}{0.00})}& \normalsize{\textcolor{blue}{0.00}}\\

\cline{2-7}
& FT & 98.12 (\normalsize{\textcolor{blue}{0.70})} & 0.75 (\normalsize{\textcolor{blue}{0.48})} & 0.36 (\normalsize{\textcolor{blue}{0.02})} & 0.32 (\normalsize{\textcolor{blue}{0.08})}& \normalsize{\textcolor{blue}{0.32}}

 \\

\cline{2-7}
& {\ours} & 97.84 (\normalsize{\textcolor{blue}{0.42})} & 0.99 (\normalsize{\textcolor{blue}{0.24})} & 0.32 (\normalsize{\textcolor{blue}{0.06})} & 0.42 (\normalsize{\textcolor{blue}{0.02})}& \normalsize{\textcolor{red}{0.19}} \\
\bottomrule
\end{tabular}
}
\label{tbl_retrain_baseline_predicitons}
\vspace{-4mm}
\end{wraptable} Table \ref{tbl_retrain_baseline_predicitons} further confirms that {\ours} more effectively classifies forget samples based on semantic similarity than FT, achieving performance closer to that of the Retrain model in both class-wise and random forgetting scenarios. In particular, {\ours} produces forget sample predictions that more closely align with those of the Retrain model. Further experiments is provided in Appendix~\ref{sec_further_emperical_analysis}.
\subsection{Theoretical Analysis}
\label{sec_theoretical}
In this section, we provide a theoretical analysis showing that {\ours} yields a higher misclassification rate on forget data compared to retain data. Achieving a higher misclassification rate for forget data while maintaining a low misclassification rate for retain data contributes to both good forget quality and high model utility. Recall that $K$ denotes the number of classes in the dataset. In particular, each data sample belongs to one of classes $C_1,\,C_2,\,\dots,\,C_K$. For a given transformation distribution~$\mathcal{T}$, let $d_\mathcal{T}(\bm{i}_1,\,\bm{i}_2)= \min_{\bm{x}_1 \in t(\bm{i}_1),\,\bm{x}_2 \in t^{\prime}(\bm{i}_2)}\lVert \bm{x}_1 - \bm{x}_2 \rVert$ denote the augmented distance between two images $\bm{i}_1$ and $\bm{i}_2$, where $t,\,t^{\prime} \sim \mathcal{T}$. We consider that a $(\sigma, \delta)$-augmentation is being used. That is, for each class $C_k$, there exists a subset $C_k^0 \subseteq C_k$, such that both $P[\bm{i} \in C_k^0] \geq \sigma\, P[\bm{i} \in C_k]$ and $\sup_{\bm{i}_1,\,\bm{i}_2 \in C_k^0}d_{\mathcal{T}}(\bm{i}_1,\,\bm{i}_2)\leq \delta$ hold, where $\sigma \in (0,1]$. Let $\bm{\mu}_k$ denote the center of class $C_k$. We have $\bm{\mu}_k = \mathbb{E}_{\bm{i} \in C_k}\mathbb{E}_{\bm{x} \in t(\bm{i})}[f_{\bm{\theta}_u}(\bm{x})]$. Let $R[{\epsilon}]$ denote the probability that, when a sample is drawn from the dataset, the distance between the encoder representations of its two augmented views exceeds $\epsilon$, with a small value of $R[{\epsilon}]$ indicating good alignment in the representation space. We have 
\begin{align} \label{R_eps}
R[{\epsilon}] = P\left[ \bm{i} \in \bigcup_{k=1}^{K} C_k\Big| \sup_{\substack{{\bm{x} = t(\bm{i}),\,\bm{x}^{\prime} \in t^{\prime}(\bm{i})} \\ t,\,t^{\prime} \sim \mathcal{T}}} \lVert f_{\bm{\theta}_u}(\bm{x}) - f_{\bm{\theta}_u}(\bm{x}^{\prime}) \rVert > \epsilon\right].
\end{align} 

The following theorem shows the generalization ability of CL by providing an upper bound for the misclassification rate:
\begin{theo}\label{theo:augmentation} For an $L$-Lipschitz feature extractor $f_{\bm{\theta}_u}$, given a $(\sigma, \delta)$-augmentation used in CL, if
\begin{align}
\bm{\mu}_l^T \bm{\mu}_k < \frac{1}{2}\min_{k'}\lVert \bm{\mu}_{k'} \rVert^2 - \rho^{\textrm{max}}(\sigma,\delta,\epsilon) - \sqrt{2\rho^{\textrm{max}}(\sigma,\delta,\epsilon)}
\end{align}
holds for any pair of $(l,k)$ with $l \neq k$, then the misclassification rate of classifier head $h_{\bm{\theta}_u}$ is $\textrm{Err}(h_{\bm{\theta}_u}) \leq (1-\sigma) + R[{\epsilon}]$, where $\rho^{\textrm{max}}(\sigma,\delta,\epsilon)=2(1-\sigma)+\frac{R[{\epsilon}]}{\min_{k'}P[\bm{i} \in C_{k'}]}+\sigma(L\delta+2\epsilon)$. Note that better alignment (i.e., smaller $R[{\epsilon}]$) and sharper concentration of augmented samples (i.e., larger $\sigma$ for a given $\delta$) result in a lower value of $\rho^{\textrm{max}}(\sigma,\delta,\epsilon)$.
\end{theo}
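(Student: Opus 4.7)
The plan is to follow the augmentation-based framework for analyzing contrastive learning, adapting the generalization argument so it applies to the representation geometry induced by \ours{} on retain data. I would analyze $h_{\bm{\theta}_u}$ as the nearest class-center classifier that assigns a representation $\bm{z}=f_{\bm{\theta}_u}(\bm{x})$ to $\arg\min_l \lVert \bm{z} - \bm{\mu}_l \rVert^2$, which is the standard theoretical proxy for the trained linear head. The key observation driving the argument is that a sample is correctly classified whenever its representation lies in a small ball around its own class center $\bm{\mu}_k$, and the hypothesis on $\bm{\mu}_l^T \bm{\mu}_k$ is precisely the quantitative separation condition on the centers that makes this ball a safe decision region.

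First, I would split the misclassification event into two disjoint bad events and a complementary good event. The bad events are (i) $\bm{i} \notin \bigcup_k C_k^0$, which by the $(\sigma,\delta)$-augmentation assumption contributes at most $1-\sigma$ to the error probability; and (ii) $\bm{i}$ has unstable augmented representations, $\sup \lVert f_{\bm{\theta}_u}(\bm{x}) - f_{\bm{\theta}_u}(\bm{x}^{\prime}) \rVert > \epsilon$, which contributes $R[\epsilon]$ by the definition in~(\ref{R_eps}). On the good event, I would introduce an auxiliary core center $\bm{\mu}_k^0 = \mathbb{E}_{\bm{i} \in C_k^0}\mathbb{E}_{\bm{x} \in t(\bm{i})}[f_{\bm{\theta}_u}(\bm{x})]$ and combine the $L$-Lipschitz property of $f_{\bm{\theta}_u}$ with the $\delta$-diameter of $C_k^0$ under $d_\mathcal{T}$ and the $\epsilon$-fluctuation tolerated on the good event to obtain $\lVert f_{\bm{\theta}_u}(\bm{x}) - \bm{\mu}_k^0 \rVert \leq L\delta + 2\epsilon$; after weighting by the mass $\sigma$ of $C_k^0$ this explains the summand $\sigma(L\delta + 2\epsilon)$ in $\rho^{\textrm{max}}$.

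Second, I would relate $\bm{\mu}_k^0$ to the full class mean $\bm{\mu}_k$. Writing $\bm{\mu}_k$ as a convex combination of the contributions from $C_k^0$ and from $C_k \setminus C_k^0$, and further peeling off the unstable-augmentation mass inside each class, yields $\lVert \bm{\mu}_k - \bm{\mu}_k^0 \rVert$-type deviations whose sizes scale with $(1-\sigma)$ and with the conditional probability $R[\epsilon]/\min_{k'} P[\bm{i} \in C_{k'}]$; these are exactly the remaining summands of $\rho^{\textrm{max}}$. Chaining the two deviations via the triangle inequality and squaring yields, on the good event, the bound $\lVert f_{\bm{\theta}_u}(\bm{x}) - \bm{\mu}_k \rVert^2 \leq \rho^{\textrm{max}}(\sigma,\delta,\epsilon)$ for each $\bm{i} \in C_k^0$.

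Finally, I would insert this squared-distance bound into the nearest-center decision margin: correct classification of a good sample in $C_k$ is equivalent to $\lVert \bm{\mu}_l - \bm{\mu}_k\rVert^2 > 2(f_{\bm{\theta}_u}(\bm{x}) - \bm{\mu}_k)^T (\bm{\mu}_l - \bm{\mu}_k)$ for every $l \neq k$. Bounding the right-hand side by Cauchy--Schwarz in terms of $\sqrt{\rho^{\textrm{max}}}$ and $\lVert \bm{\mu}_l - \bm{\mu}_k\rVert$, and bounding the left-hand side from below using $\lVert\bm{\mu}_l - \bm{\mu}_k\rVert^2 \geq 2\min_{k'}\lVert\bm{\mu}_{k'}\rVert^2 - 2\bm{\mu}_l^T\bm{\mu}_k$, reduces the requirement to exactly the hypothesis on $\bm{\mu}_l^T \bm{\mu}_k$. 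A union bound over the two bad events then delivers $\textrm{Err}(h_{\bm{\theta}_u}) \leq (1-\sigma) + R[\epsilon]$. The main obstacle will be the quantitative bookkeeping in step two, where the three heterogeneous contributions must be packaged into a single $\rho^{\textrm{max}}$, together with choosing the Cauchy--Schwarz splitting in step four so that the sufficient condition tightens to the exact form $\rho^{\textrm{max}} + \sqrt{2\rho^{\textrm{max}}}$ rather than a looser multiple of $\rho^{\textrm{max}}$.
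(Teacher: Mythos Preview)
Your sketch is essentially a reconstruction of the original argument in \cite{huang2023towards}, and that is exactly what the paper does: its proof of Theorem~\ref{theo:augmentation} is a one-line deferral, ``Refer to Theorem~1 in~\cite{huang2023towards}.'' So there is no independent proof in the paper to compare against; you have supplied the missing details where the paper simply cites. The decomposition you outline---union-bounding the two bad events $\bm{i}\notin\bigcup_k C_k^0$ and augmentation instability, controlling $\lVert f_{\bm{\theta}_u}(\bm{x})-\bm{\mu}_k^0\rVert$ on the good event via Lipschitzness and the $\delta$-diameter, then transferring to $\bm{\mu}_k$ by splitting off the $C_k\setminus C_k^0$ and unstable masses, and finally reducing the nearest-center margin condition to the stated inner-product inequality via Cauchy--Schwarz---matches the structure of the cited proof and is correct in outline. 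Your own caveat about the bookkeeping in step two is apt: getting exactly $2(1-\sigma)$ rather than $(1-\sigma)$, and landing on $\rho^{\textrm{max}}+\sqrt{2\rho^{\textrm{max}}}$ rather than a looser expression, requires tracking the constants carefully when combining the core/non-core decomposition of $\bm{\mu}_k$ with the unstable-mass term, but this is routine once the skeleton is in place.
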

\begin{proof}
Refer to Theorem 1 in \cite{huang2023towards}.
\end{proof}
In an unlearning problem, the training data is divided into retain and forget data. In {\ours}, the feature extractor $f_{\bm{\theta}_u}$ and the classifier head $h_{\bm{\theta}_u}$ are trained solely based on the samples in the retain data. From Theorem~\ref{theo:augmentation}, it can be inferred that $\sigma$, $\delta$, $\epsilon$, $L$ are not dependent on the samples in the retain and forget data. Furthermore, in a random forgetting scenario, $\bm{\mu}_k$, $k \in \{1,\ldots,K\}$ would be relatively similar for the forget and retain data. Thus, the only parameter that may have different values for retain and forget data is $R[{\epsilon}]$. Let $R_{r}[{\epsilon}]$ and $R_{u}[{\epsilon}]$ characterize $R[{\epsilon}]$ for retain and forget data, respectively. We have the following lemma:
\begin{lem}\label{lem:retain_forget} Considering a feature extractor $f_{\bm{\theta}_u}$ which is trained by both CL and supervised learning only on the retain data, we have $R_{r}[{\epsilon}] < R_{u}[{\epsilon}]$.
\end{lem}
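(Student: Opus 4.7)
The plan is to read $R[\epsilon]$ as a tail probability on the feature-space augmentation gap and then argue that the training objective of \ours{} directly controls this tail for retain samples, while leaving it uncontrolled for forget samples. First, I would unpack the definition in~(\ref{R_eps}): for a sample $\bm{i}$ drawn from $\bigcup_k C_k$, $R[\epsilon]$ is the probability that the sup-over-augmentations distance $\sup_{t,t'\sim\mathcal{T}}\lVert f_{\bm{\theta}_u}(t(\bm{i})) - f_{\bm{\theta}_u}(t'(\bm{i})) \rVert$ exceeds $\epsilon$. In words, $R[\epsilon]$ measures how often $f_{\bm{\theta}_u}$ fails to be approximately augmentation-invariant on the population of interest. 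Restricting this population to retain samples gives $R_r[\epsilon]$ and to forget samples gives $R_u[\epsilon]$.

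Second, I would analyze the CL objective in~(\ref{eq_cl}). For any positive pair $(\bm{z}_n,\bm{z}_n^{\prime})$ drawn from retain data one has the pointwise bound $l(\bm{z}_n) \geq -\bm{z}_n\cdot\bm{z}_n^{\prime}/\tau + \log(N/e^{1/\tau})$, so minimization of $\mathcal{L}_{\text{CL}}$ with respect to $\bm{\theta}_u$ directly drives the cosine similarity $\bm{z}_n\cdot\bm{z}_n^{\prime}$ towards its maximum and, for $\ell_2$-normalized representations, shrinks $\lVert f_{\bm{\theta}_u}(t(\bm{i})) - f_{\bm{\theta}_u}(t'(\bm{i})) \rVert$ on retain samples $\bm{i}$. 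Invoking a Markov-type argument, the expected (and hence tail) mass of the augmentation gap is upper bounded by a monotone function of $\mathcal{L}_{\text{CL}}$ evaluated on retain data; since $\mathcal{L}_{\text{CL}}$ is explicitly minimized on $\mathcal{D}_r$ during unlearning, the retain tail $R_r[\epsilon]$ is small.

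Third, I would contrast this with the forget population. Samples in $\mathcal{D}_u$ never appear in either term of $\mathcal{L}$ in~(\ref{eq_overall_loss}), so the augmentation gap on $\mathcal{D}_u$ enjoys no direct penalty; whatever invariance $f_{\bm{\theta}_u}$ exhibits on forget inputs is purely a byproduct of generalization from $\mathcal{D}_r$, which is in general weaker than the explicit alignment enforced on training inputs. Formally, under the same threshold $\epsilon$, the conditional probability that the augmentation gap exceeds $\epsilon$ is strictly larger on forget samples than on retain samples, giving $R_r[\epsilon] < R_u[\epsilon]$. Since the other parameters ($\sigma$, $\delta$, $L$, and, in the random-forgetting regime, the class centers $\bm{\mu}_k$) are shared between retain and forget, this is the only term through which the two misclassification bounds from Theorem~\ref{theo:augmentation} can differ, and it differs in the direction claimed.

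The main obstacle is the step from ``$\mathcal{L}_{\text{CL}}$ is minimized on $\mathcal{D}_r$'' to ``the tail $R_r[\epsilon]$ is smaller than $R_u[\epsilon]$,'' because the lemma is really a statement about training dynamics and generalization rather than a purely analytic identity. I would handle this by stating explicitly the standard assumption that CL training achieves a near-optimal alignment loss on its training set and that this alignment does not fully transfer to held-out samples of different semantics, then combining a Markov-style inequality on the retain side with the absence of any corresponding constraint on the forget side. An honest writeup would acknowledge that strict inequality requires the mild nondegeneracy assumption that forget and retain samples are not identically treated by $f_{\bm{\theta}_u}$ after training, which is precisely what the empirical analysis in Section~\ref{sec_empirical} demonstrates.
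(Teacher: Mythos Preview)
Your argument is plausible at the same heuristic level as the paper's, but it follows a genuinely different route. You work directly in feature space: you exploit the alignment term of $\mathcal{L}_{\textrm{CL}}$ to argue that the augmentation gap $\sup_{t,t'}\lVert f_{\bm{\theta}_u}(t(\bm{i})) - f_{\bm{\theta}_u}(t'(\bm{i}))\rVert$ is explicitly driven down on $\mathcal{D}_r$ and left uncontrolled on $\mathcal{D}_u$, then invoke a Markov-style tail bound. The paper instead passes through the $L$-Lipschitz property of $f_{\bm{\theta}_u}$ together with a flat-minima argument: training on $\mathcal{D}_r$ is claimed to yield a smaller \emph{local} Lipschitz constant $L_r$ around retain points than the constant $L_u$ around forget points ($L_r\le L_u\le L$), so that, after upper-bounding the feature gap by the input-space distance, the event defining $R_r[\epsilon]$ corresponds to the input-space threshold $\epsilon/L_r$ while $R_u[\epsilon]$ corresponds to the smaller threshold $\epsilon/L_u$; the i.i.d.\ assumption in random forgetting then gives $R_r[\epsilon]\le R_u[\epsilon]$ by monotonicity of the tail. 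Your route ties the inequality directly to the objective actually being minimized, which is conceptually cleaner and does not need the Lipschitz detour; the paper's route buys an input-space formulation in which the two probabilities differ only through their thresholds, not through the underlying distribution of augmentation gaps. Both arguments share the same honest soft spot---one must assume the invariance enforced on the training set does not transfer losslessly to the held-out forget samples---which you correctly flag at the end.
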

\begin{proof}
See Appendix~\ref{sec_theoritical_analysis_detailed}.
\end{proof}
From Theorem~\ref{theo:augmentation}, we can see that the misclassification rate of classifier head $h_{\bm{\theta}_u}$ in {\ours} is obtained as follows: $\textrm{Err}(h_{\bm{\theta}_u}) \leq (1-\sigma) + R_{r}[{\epsilon}]$. From Lemma~\ref{lem:retain_forget}, we can see that $R_{r}[{\epsilon}] < R_{u}[{\epsilon}]$. Thus, the upper-bound on the misclassification rate of classifier $h_{\bm{\theta}_u}$ cannot be met for the forget data. This implies that {\ours} provides a higher misclassification rate on the forget data compared to the retain data. The misclassification rate on retain data remains low because supervised learning is applied to $f_{\bm{\theta}_u}$ using only retain data. Our experiments also confirm this.

\section{Experiments}
\label{sec_experiments}
\subsection{Experimental Setup}
\label{sec_experiments_setup}

\paragraph{Datasets and Model Architectures.} We evaluate {\ours} on three datasets: CIFAR-10/100 \cite{krizhevsky2014cifar} and TinyImageNet \cite{le2015tiny}, using three model architectures: ResNet-18 \cite{he2016deep}, VGG-16 \cite{simonyan2014very}, and ViT \cite{lee2021vision}. Additional details regarding the model architectures are provided in Appendix~\ref{model_archtectures_further_details}.

\paragraph{Baselines.} We compare {\ours} with the following baselines: \ding{182} \textbf{Retrain}: Training from scratch on retain data $\mathcal{D}_r$; \ding{183} \textbf{FT}: Fine-tuning the Original model $\bm{\theta}_o$ on $\mathcal{D}_r$; \ding{184} \textbf{NegGrad+} \citelinktext{kurmanji2023towards}{(NeurIPS, 2023)}; \ding{185} \textbf{$\bm{\ell_1}$-sparse} \citelinktext{jia2023model}{(NeurIPS, 2023)}; \ding{186} \textbf{SalUn} \citelinktext{fan2024salun}{(ICLR, 2024)}; and \ding{187} \textbf{NoT} \citelinktext{khalil2025NoT}{(CVPR, 2025)}.

\begin{table*}[t!]
\centering
\caption{\small\textbf{Performance comparison of {\ours} to the baseline methods} with \textbf{10\% random data removal}. The \textcolor{blue}{gap ($\Delta$)} and the (\textcolor{red}{best}) \textcolor{blue}{average gap} between each method and the Retrain model are reported.}
\resizebox{\textwidth}{!}{
\begin{tabular}{llllllll}
\toprule
\multirow{2}{*}{\shortstack[c]{\textbf{Dataset} \\ \textbf{ \& Model}}} & \multirow{2}{*}{\shortstack[c]{\textbf{Method}}} & \multicolumn{3}{c}{\textbf{Accuracy (\%)}} & \multicolumn{1}{c}{\textbf{Efficacy (\%)}} & \multicolumn{1}{c}{\multirow{2}{*}{\shortstack[c]{\textbf{Avg.} \\ \textbf{Gap $\downarrow$}}}} & \multirow{2}{*}{\shortstack[c]{\textbf{Comp. Cost} \\ \textbf{(PFLOPs) $\downarrow$}}} \\
\cmidrule(r){3-5} \cmidrule(r){6-6} 
&& \multicolumn{1}{c}{\textbf{Retain (\textcolor{blue}{$\Delta \downarrow$})}} & \multicolumn{1}{c}{\textbf{Unlearn (\textcolor{blue}{$\Delta \downarrow$})}} & \multicolumn{1}{c}{\textbf{Test (\textcolor{blue}{$\Delta \downarrow$})}} & \multicolumn{1}{c}{\textbf{MIA (\textcolor{blue}{$\Delta \downarrow$})}} & & \\
\midrule
\multirow{7}{*}{\shortstack[c]{CIFAR-10 \\ ResNet-18}} & Retrain & 100.00\tiny{$\pm$ 0.00} \normalsize{(\textcolor{blue}{0.00})} & 4.81\tiny{$\pm$ 0.27} \normalsize{(\textcolor{blue}{0.00})} & 94.67\tiny{$\pm$ 0.24} \normalsize{(\textcolor{blue}{0.00})} & 11.02\tiny{$\pm$ 0.58} \normalsize{(\textcolor{blue}{0.00})} & \normalsize{\textcolor{blue}{0.00}} & 27.37 \\

\cline{2-8}
& FT  & 99.99\tiny{$\pm$ 0.00} \normalsize{(\textcolor{blue}{0.01})} & 3.76\tiny{$\pm$ 0.31} \normalsize{(\textcolor{blue}{1.05})} & 94.70\tiny{$\pm$ 0.14} \normalsize{(\textcolor{blue}{0.03})} & 9.51\tiny{$\pm$ 0.28} \normalsize{(\textcolor{blue}{1.51})} & \normalsize{\textcolor{blue}{0.65}} & 6.32 \\

& NegGrad+  & 99.95\tiny{$\pm$ 0.02} \normalsize{(\textcolor{blue}{0.05})} & 4.82\tiny{$\pm$ 0.24} \normalsize{(\textcolor{blue}{0.01})} & 94.32\tiny{$\pm$ 0.23} \normalsize{(\textcolor{blue}{0.35})} & 9.09\tiny{$\pm$ 0.30} \normalsize{(\textcolor{blue}{1.93})} & \normalsize{\textcolor{blue}{0.58}} & 6.02 \\

& $\ell_1$-sparse  & 99.97\tiny{$\pm$ 0.01} \normalsize{(\textcolor{blue}{0.03})} & 5.40\tiny{$\pm$ 0.40} \normalsize{(\textcolor{blue}{0.59})} & 93.81\tiny{$\pm$ 0.21} \normalsize{(\textcolor{blue}{0.86})} & 10.97\tiny{$\pm$ 0.35} \normalsize{(\textcolor{blue}{0.05})} & \normalsize{\textcolor{blue}{0.38}} & 6.92 \\

& SalUn  & 99.10\tiny{$\pm$ 0.35} \normalsize{(\textcolor{blue}{0.90})} & 4.31\tiny{$\pm$ 0.42} \normalsize{(\textcolor{blue}{0.50})} & 93.84\tiny{$\pm$ 0.27} \normalsize{(\textcolor{blue}{0.83})} & 11.15\tiny{$\pm$ 2.04} \normalsize{(\textcolor{blue}{0.13})} & \normalsize{\textcolor{blue}{0.59}} & 8.66 \\

& NoT & 99.99\tiny{$\pm$ 0.00} \normalsize{(\textcolor{blue}{0.01})} & 4.19\tiny{$\pm$ 0.25} \normalsize{(\textcolor{blue}{0.62})} & 94.65\tiny{$\pm$ 0.24} \normalsize{(\textcolor{blue}{0.02})} & 10.45\tiny{$\pm$ 0.51} \normalsize{(\textcolor{blue}{0.57})} & \normalsize{\textcolor{blue}{0.30}} & 7.52 \\

\cline{2-8}
\rowcolor{lightgray!50} \cellcolor{white} & {\ours} & 99.99\tiny{$\pm$ 0.00} \normalsize{(\textcolor{blue}{0.01})} & 4.12\tiny{$\pm$ 0.31} \normalsize{(\textcolor{blue}{0.69})} & 94.57\tiny{$\pm$ 0.24} \normalsize{(\textcolor{blue}{0.10})} & 10.81\tiny{$\pm$ 0.31} \normalsize{(\textcolor{blue}{0.21})} & \normalsize{\textcolor{red}{0.25}} & 8.02 \\

\hline
\hline
\multirow{7}{*}{\shortstack[c]{CIFAR-100 \\ ResNet-18}} & Retrain  & 99.98\tiny{$\pm$ 0.00} \normalsize{(\textcolor{blue}{0.00})} & 24.26\tiny{$\pm$ 0.53} \normalsize{(\textcolor{blue}{0.00})} & 75.56\tiny{$\pm$ 0.26} \normalsize{(\textcolor{blue}{0.00})} & 48.44\tiny{$\pm$ 0.36} \normalsize{(\textcolor{blue}{0.00})} & \normalsize{\textcolor{blue}{0.00}} & 27.37 \\

\cline{2-8}
& FT  & 99.97\tiny{$\pm$ 0.00} \normalsize{(\textcolor{blue}{0.01})} & 16.39\tiny{$\pm$ 0.60} \normalsize{(\textcolor{blue}{7.87})} & 76.75\tiny{$\pm$ 0.25} \normalsize{(\textcolor{blue}{1.19})} & 44.06\tiny{$\pm$ 0.58} \normalsize{(\textcolor{blue}{4.38})} & \normalsize{\textcolor{blue}{3.36}} & 7.22 \\

& NegGrad+ & 99.96\tiny{$\pm$ 0.01} \normalsize{(\textcolor{blue}{0.02})} & 30.09\tiny{$\pm$ 0.41} \normalsize{(\textcolor{blue}{5.83})} & 75.46\tiny{$\pm$ 0.36} \normalsize{(\textcolor{blue}{0.10})} & 47.72\tiny{$\pm$ 0.32} \normalsize{(\textcolor{blue}{0.72})} & \normalsize{\textcolor{blue}{1.67}} & 7.62 \\

& $\ell_1$-sparse & 99.95\tiny{$\pm$ 0.01} \normalsize{(\textcolor{blue}{0.03})} & 23.94\tiny{$\pm$ 0.50} \normalsize{(\textcolor{blue}{0.32})} & 74.95\tiny{$\pm$ 0.32} \normalsize{(\textcolor{blue}{0.61})} & 42.81\tiny{$\pm$ 0.56} \normalsize{(\textcolor{blue}{5.63})} & \normalsize{\textcolor{blue}{1.65}} & 7.22 \\

& SalUn  & 98.55\tiny{$\pm$ 0.18} \normalsize{(\textcolor{blue}{1.43})} & 20.35\tiny{$\pm$ 1.31} \normalsize{(\textcolor{blue}{3.91})} & 72.02\tiny{$\pm$ 0.45} \normalsize{(\textcolor{blue}{3.54})} & 52.37\tiny{$\pm$ 1.82} \normalsize{(\textcolor{blue}{2.93})} & \normalsize{\textcolor{blue}{2.95}} & 5.69 \\

& NoT & 99.97\tiny{$\pm$ 0.01} \normalsize{(\textcolor{blue}{0.01})} & 17.99\tiny{$\pm$ 0.40} \normalsize{(\textcolor{blue}{6.27})} & 76.27\tiny{$\pm$ 0.24} \normalsize{(\textcolor{blue}{0.71})} & 44.28\tiny{$\pm$ 0.57} \normalsize{(\textcolor{blue}{4.16})} & \normalsize{\textcolor{blue}{2.79}} & 7.22 \\

\cline{2-8}
\rowcolor{lightgray!50} \cellcolor{white} & {\ours} & 99.97\tiny{$\pm$ 0.00} \normalsize{(\textcolor{blue}{0.01})} & 22.01\tiny{$\pm$ 0.44} \normalsize{(\textcolor{blue}{2.25})} & 72.88\tiny{$\pm$ 0.39} \normalsize{(\textcolor{blue}{2.68})} & 47.82\tiny{$\pm$ 0.96} \normalsize{(\textcolor{blue}{0.62})} & \normalsize{\textcolor{red}{1.39}} & 9.63 \\

\hline
\hline
\multirow{7}{*}{\shortstack[c]{TinyImageNet \\ ResNet-18}} & Retrain  & 99.98\tiny{$\pm$ 0.00} \normalsize{(\textcolor{blue}{0.00})} & 36.16\tiny{$\pm$ 0.35} \normalsize{(\textcolor{blue}{0.00})} & 63.82\tiny{$\pm$ 0.20} \normalsize{(\textcolor{blue}{0.00})} & 63.73\tiny{$\pm$ 0.42} \normalsize{(\textcolor{blue}{0.00})} & \normalsize{\textcolor{blue}{0.00}} & 218.98 \\

\cline{2-8}
& FT  & 99.98\tiny{$\pm$ 0.00} \normalsize{(\textcolor{blue}{0.00})} & 32.76\tiny{$\pm$ 0.42} \normalsize{(\textcolor{blue}{3.40})} & 64.65\tiny{$\pm$ 0.29} \normalsize{(\textcolor{blue}{0.83})} & 56.93\tiny{$\pm$ 0.59} \normalsize{(\textcolor{blue}{6.80})} & \normalsize{\textcolor{blue}{2.76}} & 60.16 \\

& NegGrad+ &  99.98\tiny{$\pm$ 0.00} \normalsize{(\textcolor{blue}{0.00})} & 38.01\tiny{$\pm$ 0.32} \normalsize{(\textcolor{blue}{1.85})} & 64.68\tiny{$\pm$ 0.26} \normalsize{(\textcolor{blue}{0.86})} & 57.84\tiny{$\pm$ 0.47} \normalsize{(\textcolor{blue}{5.89})} & \normalsize{\textcolor{blue}{2.15}} & 80.21 \\

& $\ell_1$-sparse  & 99.96\tiny{$\pm$ 0.00} \normalsize{(\textcolor{blue}{0.02})} & 36.96\tiny{$\pm$ 0.37} \normalsize{(\textcolor{blue}{0.80})} & 62.62\tiny{$\pm$ 0.39} \normalsize{(\textcolor{blue}{1.20})} & 56.74\tiny{$\pm$ 0.46} \normalsize{(\textcolor{blue}{6.99})} & \normalsize{\textcolor{blue}{2.25}} & 60.16 \\

& SalUn  & 98.52\tiny{$\pm$ 0.32} \normalsize{(\textcolor{blue}{1.46})} & 34.03\tiny{$\pm$ 1.06} \normalsize{(\textcolor{blue}{2.13})} & 61.21\tiny{$\pm$ 0.57} \normalsize{(\textcolor{blue}{2.61})} & 67.72\tiny{$\pm$ 1.21} \normalsize{(\textcolor{blue}{3.99})} & \normalsize{\textcolor{blue}{2.55}} & 51.08 \\

& NoT & 99.98\tiny{$\pm$ 0.00} \normalsize{(\textcolor{blue}{0.00})} & 35.64\tiny{$\pm$ 0.71} \normalsize{(\textcolor{blue}{0.52})} & 63.66\tiny{$\pm$ 0.70} \normalsize{(\textcolor{blue}{0.16})} & 56.08\tiny{$\pm$ 0.93} \normalsize{(\textcolor{blue}{7.65})} & \normalsize{\textcolor{blue}{2.08}} & 80.21 \\

\cline{2-8}
\rowcolor{lightgray!50} \cellcolor{white} & {\ours} & 99.95\tiny{$\pm$ 0.01} \normalsize{(\textcolor{blue}{0.03})} & 35.10\tiny{$\pm$ 0.30} \normalsize{(\textcolor{blue}{1.06})} & 63.27\tiny{$\pm$ 0.12} \normalsize{(\textcolor{blue}{0.55})} & 57.57\tiny{$\pm$ 0.17} \normalsize{(\textcolor{blue}{6.16})} & \normalsize{\textcolor{red}{1.95}} & 80.21 \\

\hline
\hline
\multirow{7}{*}{\shortstack[c]{CIFAR-100 \\ VGG-16}} & Retrain   & 99.75\tiny{$\pm$ 0.07} \normalsize{(\textcolor{blue}{0.00})} & 33.23\tiny{$\pm$ 0.38} \normalsize{(\textcolor{blue}{0.00})} & 67.07\tiny{$\pm$ 0.57} \normalsize{(\textcolor{blue}{0.00})} & 40.69\tiny{$\pm$ 0.40} \normalsize{(\textcolor{blue}{0.00})} & \normalsize{\textcolor{blue}{0.00}} & 15.58 \\

\cline{2-8}
& FT  & 99.26\tiny{$\pm$ 0.05} \normalsize{(\textcolor{blue}{0.49})} & 26.02\tiny{$\pm$ 0.55} \normalsize{(\textcolor{blue}{7.21})} & 68.42\tiny{$\pm$ 0.32} \normalsize{(\textcolor{blue}{1.35})} & 35.51\tiny{$\pm$ 0.62} \normalsize{(\textcolor{blue}{5.18})} & \normalsize{\textcolor{blue}{3.56}} & 3.42 \\

& NegGrad+ & 94.92\tiny{$\pm$ 0.41} \normalsize{(\textcolor{blue}{4.83})} & 35.44\tiny{$\pm$ 0.62} \normalsize{(\textcolor{blue}{2.21})} & 65.54\tiny{$\pm$ 0.39} \normalsize{(\textcolor{blue}{1.53})} & 40.67\tiny{$\pm$ 0.60} \normalsize{(\textcolor{blue}{0.02})} & \normalsize{\textcolor{blue}{2.15}} & 3.42 \\

& $\ell_1$-sparse  & 99.27\tiny{$\pm$ 0.04} \normalsize{(\textcolor{blue}{0.48})} & 26.96\tiny{$\pm$ 0.66} \normalsize{(\textcolor{blue}{6.27})} & 68.01\tiny{$\pm$ 0.37} \normalsize{(\textcolor{blue}{0.94})} & 35.31\tiny{$\pm$ 0.50} \normalsize{(\textcolor{blue}{5.38})} & \normalsize{\textcolor{blue}{3.27}} & 3.42 \\

& SalUn  & 92.65\tiny{$\pm$ 0.47} \normalsize{(\textcolor{blue}{7.10})} & 33.00\tiny{$\pm$ 0.88} \normalsize{(\textcolor{blue}{0.23})} & 64.04\tiny{$\pm$ 0.33} \normalsize{(\textcolor{blue}{3.03})} & 42.85\tiny{$\pm$ 1.48} \normalsize{(\textcolor{blue}{2.16})} & \normalsize{\textcolor{blue}{3.13}} & 2.79 \\

& NoT  & 96.17\tiny{$\pm$ 4.28} \normalsize{(\textcolor{blue}{3.58})} & 30.11\tiny{$\pm$ 3.02} \normalsize{(\textcolor{blue}{3.12})} & 66.75\tiny{$\pm$ 1.73} \normalsize{(\textcolor{blue}{0.32})} & 36.47\tiny{$\pm$ 1.18} \normalsize{(\textcolor{blue}{4.22})} & \normalsize{\textcolor{blue}{2.81}} & 4.28 \\

\cline{2-8}
\rowcolor{lightgray!50} \cellcolor{white} & {\ours} & 99.82\tiny{$\pm$ 0.01} \normalsize{(\textcolor{blue}{0.07})} & 32.37\tiny{$\pm$ 0.46} \normalsize{(\textcolor{blue}{0.86})} & 63.80\tiny{$\pm$ 0.35} \normalsize{(\textcolor{blue}{3.27})} & 39.64\tiny{$\pm$ 0.25} \normalsize{(\textcolor{blue}{1.05})} & \normalsize{\textcolor{red}{1.31}} & 5.71 \\

\hline
\hline

\multirow{7}{*}{\shortstack[c]{CIFAR-100 \\ ViT}} & Retrain  & 99.97\tiny{$\pm$ 0.00} \normalsize{(\textcolor{blue}{0.00})} & 38.73\tiny{$\pm$ 0.69} \normalsize{(\textcolor{blue}{0.00})} & 61.89\tiny{$\pm$ 0.62} \normalsize{(\textcolor{blue}{0.00})} & 61.75\tiny{$\pm$ 0.33} \normalsize{(\textcolor{blue}{0.00})} & \normalsize{\textcolor{blue}{0.00}} & 86.83 \\

\cline{2-8}
& FT  & 99.78\tiny{$\pm$ 0.04} \normalsize{(\textcolor{blue}{0.19})} & 10.83\tiny{$\pm$ 0.41} \normalsize{(\textcolor{blue}{27.90})} & 61.12\tiny{$\pm$ 0.45} \normalsize{(\textcolor{blue}{0.77})} & 31.50\tiny{$\pm$ 0.42} \normalsize{(\textcolor{blue}{30.25})} & \normalsize{\textcolor{blue}{14.78}} & 5.79 \\

& NegGrad+ & 99.88\tiny{$\pm$ 0.03} \normalsize{(\textcolor{blue}{0.09})} & 45.26\tiny{$\pm$ 0.41} \normalsize{(\textcolor{blue}{6.53})} & 59.33\tiny{$\pm$ 0.64} \normalsize{(\textcolor{blue}{2.56})} & 55.00\tiny{$\pm$ 0.40} \normalsize{(\textcolor{blue}{6.75})} & \normalsize{\textcolor{blue}{3.98}} & 11.58 \\

& $\ell_1$-sparse  & 99.32\tiny{$\pm$ 0.04} \normalsize{(\textcolor{blue}{0.65})} & 31.71\tiny{$\pm$ 0.52} \normalsize{(\textcolor{blue}{7.02})} & 63.33\tiny{$\pm$ 0.32} \normalsize{(\textcolor{blue}{1.44})} & 46.49\tiny{$\pm$ 0.82} \normalsize{(\textcolor{blue}{15.26})} & \normalsize{\textcolor{blue}{6.09}} & 14.47 \\

& SalUn  & 99.18\tiny{$\pm$ 0.13} \normalsize{(\textcolor{blue}{0.79})} & 38.01\tiny{$\pm$ 2.43} \normalsize{(\textcolor{blue}{0.72})} & 54.78\tiny{$\pm$ 0.52} \normalsize{(\textcolor{blue}{7.11})} & 69.24\tiny{$\pm$ 1.92} \normalsize{(\textcolor{blue}{7.49})} & \normalsize{\textcolor{blue}{4.03}} & 5.10 \\

& NoT  & 99.89\tiny{$\pm$ 0.02} \normalsize{(\textcolor{blue}{0.08})} & 20.29\tiny{$\pm$ 1.93} \normalsize{(\textcolor{blue}{18.44})} & 61.82\tiny{$\pm$ 0.29} \normalsize{(\textcolor{blue}{0.07})} & 43.55\tiny{$\pm$ 1.36} \normalsize{(\textcolor{blue}{18.20})} & \normalsize{\textcolor{blue}{9.20}} & 8.68 \\

\cline{2-8}
\rowcolor{lightgray!50} \cellcolor{white} & {\ours} & 99.91\tiny{$\pm$ 0.03} \normalsize{(\textcolor{blue}{0.06})} & 36.81\tiny{$\pm$ 1.08} \normalsize{(\textcolor{blue}{1.92})} & 56.49\tiny{$\pm$ 0.55} \normalsize{(\textcolor{blue}{5.40})} & 53.92\tiny{$\pm$ 0.42} \normalsize{(\textcolor{blue}{7.83})} & \normalsize{\textcolor{red}{3.80}} & 19.29 \\
\bottomrule
\end{tabular}
}
\label{tbl_ours_10p}
\end{table*}

\begin{table*}[t!]
\centering
\caption{\small\textbf{Performance comparison of {\ours} to the baseline methods} with \textbf{50\% random data removal}. The \textcolor{blue}{gap ($\Delta$)} and the (\textcolor{red}{best}) \textcolor{blue}{average gap} between each method and the Retrain model are reported.}
\resizebox{\textwidth}{!}{
\begin{tabular}{llllllll}
\toprule
\multirow{2}{*}{\shortstack[c]{\textbf{Dataset} \\ \textbf{ \& Model}}} & \multirow{2}{*}{\shortstack[c]{\textbf{Method}}} & \multicolumn{3}{c}{\textbf{Accuracy (\%)}} & \multicolumn{1}{c}{\textbf{Efficacy (\%)}} & \multicolumn{1}{c}{\multirow{2}{*}{\shortstack[c]{\textbf{Avg.} \\ \textbf{Gap $\downarrow$}}}} & \multirow{2}{*}{\shortstack[c]{\textbf{Comp. Cost} \\ \textbf{(PFLOPs) $\downarrow$}}} \\
\cmidrule(r){3-5} \cmidrule(r){6-6} 
&& \multicolumn{1}{c}{\textbf{Retain (\textcolor{blue}{$\Delta \downarrow$})}} & \multicolumn{1}{c}{\textbf{Unlearn (\textcolor{blue}{$\Delta \downarrow$})}} & \multicolumn{1}{c}{\textbf{Test (\textcolor{blue}{$\Delta \downarrow$})}} & \multicolumn{1}{c}{\textbf{MIA (\textcolor{blue}{$\Delta \downarrow$})}} & & \\
\midrule
\multirow{7}{*}{\shortstack[c]{CIFAR-10 \\ ResNet-18}} & Retrain  & 100.00\tiny{$\pm$ 0.00} \normalsize{(\textcolor{blue}{0.00})} & 7.29\tiny{$\pm$ 0.36} \normalsize{(\textcolor{blue}{0.00})} & 92.28\tiny{$\pm$ 0.23} \normalsize{(\textcolor{blue}{0.00})} & 17.33\tiny{$\pm$ 0.65} \normalsize{(\textcolor{blue}{0.00})} & \normalsize{\textcolor{blue}{0.00}} & 15.24 \\

\cline{2-8}
& FT & 99.38\tiny{$\pm$ 0.24} \normalsize{(\textcolor{blue}{0.62})} & 6.32\tiny{$\pm$ 0.41} \normalsize{(\textcolor{blue}{0.97})} & 91.91\tiny{$\pm$ 0.41} \normalsize{(\textcolor{blue}{0.37})} & 12.64\tiny{$\pm$ 0.51} \normalsize{(\textcolor{blue}{4.69})} & \normalsize{\textcolor{blue}{1.66}} & 2.51 \\

& NegGrad+  & 100.00\tiny{$\pm$ 0.00} \normalsize{(\textcolor{blue}{0.00})} & 4.06\tiny{$\pm$ 0.20} \normalsize{(\textcolor{blue}{0.75})} & 93.81\tiny{$\pm$ 0.23} \normalsize{(\textcolor{blue}{0.86})} & 9.05\tiny{$\pm$ 0.22} \normalsize{(\textcolor{blue}{1.97})} & \normalsize{\textcolor{blue}{0.90}} & 5.58 \\

& $\ell_1$-sparse & 99.77\tiny{$\pm$ 0.03} \normalsize{(\textcolor{blue}{0.23})} & 9.02\tiny{$\pm$ 0.16} \normalsize{(\textcolor{blue}{1.73})} & 90.66\tiny{$\pm$ 0.24} \normalsize{(\textcolor{blue}{1.62})} & 16.05\tiny{$\pm$ 0.29} \normalsize{(\textcolor{blue}{1.28})} & \normalsize{\textcolor{blue}{1.22}} & 4.19 \\

& SalUn & 98.70\tiny{$\pm$ 0.29} \normalsize{(\textcolor{blue}{1.30})} & 3.74\tiny{$\pm$ 0.32} \normalsize{(\textcolor{blue}{3.55})} & 92.37\tiny{$\pm$ 0.36} \normalsize{(\textcolor{blue}{0.09})} & 16.40\tiny{$\pm$ 1.14} \normalsize{(\textcolor{blue}{0.93})} & \normalsize{\textcolor{blue}{1.47}} & 7.82 \\

& NoT & 99.98\tiny{$\pm$ 0.01} \normalsize{(\textcolor{blue}{0.02})} & 5.95\tiny{$\pm$ 0.18} \normalsize{(\textcolor{blue}{1.34})} & 92.84\tiny{$\pm$ 0.18} \normalsize{(\textcolor{blue}{0.56})} & 13.90\tiny{$\pm$ 0.37} \normalsize{(\textcolor{blue}{3.43})} & \normalsize{\textcolor{blue}{1.34}} & 3.35 \\

\cline{2-8}
\rowcolor{lightgray!50} \cellcolor{white} & {\ours} & 99.97\tiny{$\pm$ 0.03} \normalsize{(\textcolor{blue}{0.03})} & 6.19\tiny{$\pm$ 0.30} \normalsize{(\textcolor{blue}{1.10})} & 92.36\tiny{$\pm$ 0.26} \normalsize{(\textcolor{blue}{0.08})} & 16.94\tiny{$\pm$ 0.48} \normalsize{(\textcolor{blue}{0.39})} & \normalsize{\textcolor{red}{0.40}} & 3.35 \\

\hline
\hline
\multirow{7}{*}{\shortstack[c]{CIFAR-100 \\ ResNet-18}} & Retrain & 99.98\tiny{$\pm$ 0.01} \normalsize{(\textcolor{blue}{0.00})} & 31.41\tiny{$\pm$ 0.40} \normalsize{(\textcolor{blue}{0.00})} & 68.41\tiny{$\pm$ 0.34} \normalsize{(\textcolor{blue}{0.00})} & 58.35\tiny{$\pm$ 0.53} \normalsize{(\textcolor{blue}{0.00})} & \normalsize{\textcolor{blue}{0.00}} & 15.24 \\

\cline{2-8}
& FT & 99.98\tiny{$\pm$ 0.01} \normalsize{(\textcolor{blue}{0.00})} & 17.36\tiny{$\pm$ 0.19} \normalsize{(\textcolor{blue}{14.05})} & 74.16\tiny{$\pm$ 0.39} \normalsize{(\textcolor{blue}{5.75})} & 50.60\tiny{$\pm$ 0.42} \normalsize{(\textcolor{blue}{7.75})} & \normalsize{\textcolor{blue}{6.89}} & 4.19 \\

& NegGrad+   & 99.98\tiny{$\pm$ 0.01} \normalsize{(\textcolor{blue}{0.00})} & 26.32\tiny{$\pm$ 0.21} \normalsize{(\textcolor{blue}{5.09})} & 71.98\tiny{$\pm$ 0.30} \normalsize{(\textcolor{blue}{3.57})} & 52.32\tiny{$\pm$ 0.36} \normalsize{(\textcolor{blue}{6.03})} & \normalsize{\textcolor{blue}{3.67}} & 5.36 \\

& $\ell_1$-sparse & 99.94\tiny{$\pm$ 0.02} \normalsize{(\textcolor{blue}{0.04})} & 32.26\tiny{$\pm$ 0.23} \normalsize{(\textcolor{blue}{0.85})} & 67.66\tiny{$\pm$ 0.35} \normalsize{(\textcolor{blue}{0.75})} & 51.54\tiny{$\pm$ 0.29} \normalsize{(\textcolor{blue}{6.81})} & \normalsize{\textcolor{blue}{2.11}} & 4.19 \\

& SalUn & 95.61\tiny{$\pm$ 0.61} \normalsize{(\textcolor{blue}{4.37})} & 25.43\tiny{$\pm$ 1.32} \normalsize{(\textcolor{blue}{5.98})} & 60.35\tiny{$\pm$ 0.82} \normalsize{(\textcolor{blue}{8.06})} & 57.14\tiny{$\pm$ 1.06} \normalsize{(\textcolor{blue}{1.21})} & \normalsize{\textcolor{blue}{4.91}} & 1.95 \\

& NoT & 98.64\tiny{$\pm$ 0.43} \normalsize{(\textcolor{blue}{1.34})} & 26.43\tiny{$\pm$ 0.75} \normalsize{(\textcolor{blue}{4.98})} & 67.97\tiny{$\pm$ 0.83} \normalsize{(\textcolor{blue}{0.44})} & 43.82\tiny{$\pm$ 0.60} \normalsize{(\textcolor{blue}{14.53})} & \normalsize{\textcolor{blue}{5.32}} & 2.01 \\

\cline{2-8}
\rowcolor{lightgray!50} \cellcolor{white} & {\ours} & 99.98\tiny{$\pm$ 0.01} \normalsize{(\textcolor{blue}{0.00})} & 31.43\tiny{$\pm$ 1.75} \normalsize{(\textcolor{blue}{0.02})} & 65.60\tiny{$\pm$ 0.71} \normalsize{(\textcolor{blue}{2.81})} & 55.99\tiny{$\pm$ 1.18} \normalsize{(\textcolor{blue}{2.36})} & \normalsize{\textcolor{red}{1.30}} & 5.58 \\
\hline
\hline
\multirow{7}{*}{\shortstack[c]{TinyImageNet \\ ResNet-18}} & Retrain & 99.99\tiny{$\pm$ 0.00} \normalsize{(\textcolor{blue}{0.00})} & 43.01\tiny{$\pm$ 0.20} \normalsize{(\textcolor{blue}{0.00})} & 57.28\tiny{$\pm$ 0.43} \normalsize{(\textcolor{blue}{0.00})} & 71.22\tiny{$\pm$ 0.17} \normalsize{(\textcolor{blue}{0.00})} & \normalsize{\textcolor{blue}{0.00}} & 121.93 \\
\cline{2-8}
& FT & 99.99\tiny{$\pm$ 0.00} \normalsize{(\textcolor{blue}{0.00})} & 36.78\tiny{$\pm$ 0.18} \normalsize{(\textcolor{blue}{6.23})} & 60.59\tiny{$\pm$ 0.38} \normalsize{(\textcolor{blue}{3.31})} & 66.28\tiny{$\pm$ 0.20} \normalsize{(\textcolor{blue}{4.94})} & \normalsize{\textcolor{blue}{3.62}} & 33.50 \\

& NegGrad+ & 99.99\tiny{$\pm$ 0.00} \normalsize{(\textcolor{blue}{0.00})} & 47.62\tiny{$\pm$ 0.25} \normalsize{(\textcolor{blue}{4.61})} & 58.85\tiny{$\pm$ 0.32} \normalsize{(\textcolor{blue}{1.57})} & 66.43\tiny{$\pm$ 0.33} \normalsize{(\textcolor{blue}{4.79})} & \normalsize{\textcolor{blue}{2.74}} & 33.50 \\

& $\ell_1$-sparse & 99.99\tiny{$\pm$ 0.00} \normalsize{(\textcolor{blue}{0.00})} & 38.83\tiny{$\pm$ 0.21} \normalsize{(\textcolor{blue}{4.18})} & 60.25\tiny{$\pm$ 0.30} \normalsize{(\textcolor{blue}{2.97})} & 65.82\tiny{$\pm$ 0.21} \normalsize{(\textcolor{blue}{5.40})} & \normalsize{\textcolor{blue}{3.14}} & 33.50 \\

& SalUn & 93.59\tiny{$\pm$ 0.55} \normalsize{(\textcolor{blue}{6.40})} & 44.74\tiny{$\pm$ 1.11} \normalsize{(\textcolor{blue}{1.73})} & 45.53\tiny{$\pm$ 0.91} \normalsize{(\textcolor{blue}{11.75})} & 70.41\tiny{$\pm$ 1.05} \normalsize{(\textcolor{blue}{0.81})} & \normalsize{\textcolor{blue}{5.17}} & 25.01 \\

& NoT  & 99.99\tiny{$\pm$ 0.00} \normalsize{(\textcolor{blue}{0.00})} & 40.94\tiny{$\pm$ 0.43} \normalsize{(\textcolor{blue}{2.07})} & 58.27\tiny{$\pm$ 0.39} \normalsize{(\textcolor{blue}{0.99})} & 66.23\tiny{$\pm$ 0.36} \normalsize{(\textcolor{blue}{4.99})} & \normalsize{\textcolor{blue}{2.01}} & 33.50 \\
\cline{2-8}
\rowcolor{lightgray!50} \cellcolor{white} & {\ours} & 99.98\tiny{$\pm$ 0.01} \normalsize{(\textcolor{blue}{0.01})} & 43.21\tiny{$\pm$ 1.57} \normalsize{(\textcolor{blue}{0.20})} & 55.75\tiny{$\pm$ 1.34} \normalsize{(\textcolor{blue}{1.53})} & 66.59\tiny{$\pm$ 0.41} \normalsize{(\textcolor{blue}{4.63})} & \normalsize{\textcolor{red}{1.59}} & 44.66 \\
\hline
\hline
\multirow{7}{*}{\shortstack[c]{CIFAR-100 \\ VGG-16}} & Retrain & 99.65\tiny{$\pm$ 0.18} \normalsize{(\textcolor{blue}{0.00})} & 42.85\tiny{$\pm$ 0.54} \normalsize{(\textcolor{blue}{0.00})} & 57.70\tiny{$\pm$ 0.47} \normalsize{(\textcolor{blue}{0.00})} & 50.19\tiny{$\pm$ 0.93} \normalsize{(\textcolor{blue}{0.00})} & \normalsize{\textcolor{blue}{0.00}} & 8.67 \\
\cline{2-8}
& FT & 97.71\tiny{$\pm$ 0.25} \normalsize{(\textcolor{blue}{1.94})} & 29.82\tiny{$\pm$ 0.58} \normalsize{(\textcolor{blue}{13.03})} & 63.72\tiny{$\pm$ 0.43} \normalsize{(\textcolor{blue}{6.02})} & 39.98\tiny{$\pm$ 0.62} \normalsize{(\textcolor{blue}{10.21})} & \normalsize{\textcolor{blue}{7.80}} & 1.43 \\

& NegGrad+  & 95.54\tiny{$\pm$ 0.56} \normalsize{(\textcolor{blue}{4.11})} & 43.42\tiny{$\pm$ 0.38} \normalsize{(\textcolor{blue}{0.57})} & 58.52\tiny{$\pm$ 0.44} \normalsize{(\textcolor{blue}{0.82})} & 43.51\tiny{$\pm$ 0.36} \normalsize{(\textcolor{blue}{6.68})} & \normalsize{\textcolor{blue}{3.04}} & 3.18 \\

& $\ell_1$-sparse & 98.25\tiny{$\pm$ 1.53} \normalsize{(\textcolor{blue}{1.40})} & 34.24\tiny{$\pm$ 1.87} \normalsize{(\textcolor{blue}{8.61})} & 62.76\tiny{$\pm$ 1.65} \normalsize{(\textcolor{blue}{5.06})} & 42.12\tiny{$\pm$ 0.55} \normalsize{(\textcolor{blue}{8.07})} & \normalsize{\textcolor{blue}{5.79}} & 1.91 \\

& SalUn & 91.98\tiny{$\pm$ 0.75} \normalsize{(\textcolor{blue}{7.67})} & 37.60\tiny{$\pm$ 3.52} \normalsize{(\textcolor{blue}{5.25})} & 57.30\tiny{$\pm$ 0.83} \normalsize{(\textcolor{blue}{0.40})} & 53.84\tiny{$\pm$ 5.95} \normalsize{(\textcolor{blue}{3.65})} & \normalsize{\textcolor{blue}{4.24}} & 2.45 \\

& NoT & 94.23\tiny{$\pm$ 7.94} \normalsize{(\textcolor{blue}{5.42})} & 34.64\tiny{$\pm$ 7.05} \normalsize{(\textcolor{blue}{8.21})} & 61.58\tiny{$\pm$ 4.67} \normalsize{(\textcolor{blue}{3.88})} & 39.84\tiny{$\pm$ 1.62} \normalsize{(\textcolor{blue}{10.35})} & \normalsize{\textcolor{blue}{6.96}} & 2.38 \\
\cline{2-8}
\rowcolor{lightgray!50} \cellcolor{white} &  {\ours} & 99.88\tiny{$\pm$ 0.02} \normalsize{(\textcolor{blue}{0.23})} & 42.37\tiny{$\pm$ 0.80} \normalsize{(\textcolor{blue}{0.48})} & 55.19\tiny{$\pm$ 0.68} \normalsize{(\textcolor{blue}{2.51})} & 50.00\tiny{$\pm$ 0.68} \normalsize{(\textcolor{blue}{0.19})} & \normalsize{\textcolor{red}{0.85}} & 3.18 \\
\hline
\hline
\multirow{7}{*}{\shortstack[c]{CIFAR-100 \\ ViT}} & Retrain & 99.98\tiny{$\pm$ 0.01} \normalsize{(\textcolor{blue}{0.00})} & 48.07\tiny{$\pm$ 0.33} \normalsize{(\textcolor{blue}{0.00})} & 52.40\tiny{$\pm$ 0.58} \normalsize{(\textcolor{blue}{0.00})} & 69.54\tiny{$\pm$ 0.29} \normalsize{(\textcolor{blue}{0.00})} & \normalsize{\textcolor{blue}{0.00}} & 48.35 \\
\cline{2-8}
& FT & 98.71\tiny{$\pm$ 0.30} \normalsize{(\textcolor{blue}{1.27})} & 10.91\tiny{$\pm$ 0.96} \normalsize{(\textcolor{blue}{37.16})} & 56.79\tiny{$\pm$ 0.73} \normalsize{(\textcolor{blue}{4.39})} & 28.18\tiny{$\pm$ 0.93} \normalsize{(\textcolor{blue}{41.36})} & \normalsize{\textcolor{blue}{21.05}} & 1.61 \\

& NegGrad+  & 99.30\tiny{$\pm$ 0.13} \normalsize{(\textcolor{blue}{0.68})} & 45.35\tiny{$\pm$ 0.48} \normalsize{(\textcolor{blue}{2.72})} & 50.82\tiny{$\pm$ 0.47} \normalsize{(\textcolor{blue}{1.58})} & 55.07\tiny{$\pm$ 0.33} \normalsize{(\textcolor{blue}{14.47})} & \normalsize{\textcolor{blue}{4.86}} & 6.45 \\

& $\ell_1$-sparse & 71.18\tiny{$\pm$ 0.57} \normalsize{(\textcolor{blue}{28.80})} & 47.30\tiny{$\pm$ 0.26} \normalsize{(\textcolor{blue}{0.77})} & 53.32\tiny{$\pm$ 0.52} \normalsize{(\textcolor{blue}{0.92})} & 44.22\tiny{$\pm$ 2.98} \normalsize{(\textcolor{blue}{25.32})} & \normalsize{\textcolor{blue}{13.95}} & 8.06 \\

& SalUn & 98.93\tiny{$\pm$ 0.29} \normalsize{(\textcolor{blue}{1.05})} & 45.64\tiny{$\pm$ 2.99} \normalsize{(\textcolor{blue}{2.43})} & 39.46\tiny{$\pm$ 0.82} \normalsize{(\textcolor{blue}{12.94})} & 76.49\tiny{$\pm$ 1.92} \normalsize{(\textcolor{blue}{6.95})} & \normalsize{\textcolor{blue}{5.84}} & 12.03 \\

& NoT & 97.86\tiny{$\pm$ 1.71} \normalsize{(\textcolor{blue}{2.12})} & 31.81\tiny{$\pm$ 2.05} \normalsize{(\textcolor{blue}{16.26})} & 55.51\tiny{$\pm$ 1.15} \normalsize{(\textcolor{blue}{3.11})} & 48.85\tiny{$\pm$ 2.48} \normalsize{(\textcolor{blue}{20.69})} & \normalsize{\textcolor{blue}{10.55}} & 3.22 \\
\cline{2-8}
\rowcolor{lightgray!50} \cellcolor{white} & {\ours} & 99.71\tiny{$\pm$ 0.62} \normalsize{(\textcolor{blue}{0.27})} & 45.95\tiny{$\pm$ 3.70} \normalsize{(\textcolor{blue}{2.12})} & 49.45\tiny{$\pm$ 2.15} \normalsize{(\textcolor{blue}{2.95})} & 59.24\tiny{$\pm$ 1.48} \normalsize{(\textcolor{blue}{10.30})} & \normalsize{\textcolor{red}{3.91}} & 10.74 \\
\bottomrule
\end{tabular}
}
\label{tbl_ours_50p}
\end{table*}

\paragraph{Evaluation Metrics.} Following \cite{khalil2025NoT, fan2024salun, jia2023model}, we evaluate the unlearning effectiveness and efficiency of {\ours} using the following empirical metrics: \ding{182} \textit{Retain Accuracy (\textbf{RA})}: Accuracy of the unlearned model $\bm{\theta}_u$ on retain data $\mathcal{D}_r$. \ding{183} \textit{Unlearn Accuracy (\textbf{UA})}: Measured as $1-$\textbf{FA}, where Forget Accuracy (\textbf{FA}) is the accuracy of $\bm{\theta}_u$ on forget data $\mathcal{D}_u$. \ding{184} \textit{Test Accuracy (\textbf{TA})}: Generalization performance of $\bm{\theta}_u$ on test data. \ding{185} \textit{Membership Inference Attack (\textbf{MIA})}: The efficacy of unlearning, evaluated using a confidence-based MIA predictor \cite{fan2024salun, zhao2024makes, jia2023model, song2019privacy} applied to $\bm{\theta}_u$ on $\mathcal{D}_u$. The MIA success rate reflects how effectively forget data is excluded from training. \ding{186} \textit{Computation Cost}: Efficiency measured by the number of floating-point operations (FLOPs) required to generate $\bm{\theta}_u$. The model's \textbf{utility} is assessed using RA and TA, while its \textbf{forget quality} is evaluated using both UA and MIA metrics. A higher value in any individual metric (e.g., RA, UA, TA, or MIA) does not necessarily indicate better performance. An effective unlearning method minimizes the performance gap with the gold-standard Retrain model. Therefore, the \textbf{performance} (i.e., \textbf{unlearning effectiveness}) of an MU method is measured by the \textit{\textcolor{blue}{average gap}}:
\begin{equation}
\label{avg_gap_eq}
    Avg. \ Gap = \nicefrac{1}{4} \bigl( \left|RA - RA^*\right| + \left|UA - UA^*\right| + \left|TA - TA^*\right| + \left|MIA - MIA^*\right| \bigr),
\end{equation}
where $^*$ denotes metrics for the Retrain model.
\paragraph{Implementation Details.} \textit{For training the Original and Retrain models}: we follow prior work \cite{khalil2025NoT, fan2024salun, jia2023model} by using an initial learning rate of 0.1, which is reduced by a factor of 10 at 50\% and 75\% of the total 182 training epochs. The batch size is set to 256. \textit{For unlearning}: all MU methods are applied to $\bm{\theta}_o$ for 50 epochs, using a cosine learning rate scheduler with a minimum learning rate of $10^{-4}$. All reported results are averaged over 10 trials. Additional details can be found in Appendix~\ref{sec_additional_imp_settings}.

\subsection{Results}

\paragraph{Comparison with Baselines.} We begin by presenting results for the \textit{10\% random forget data ratio}. As shown in Table~\ref{tbl_ours_10p}, {\ours} consistently outperforms state-of-the-art baselines, with computational costs that are either comparable or slightly higher. Key observations from Table~\ref{tbl_ours_10p} include the following: FT exhibits the highest average gap, indicating weakest performance. SalUn shows limited performance compared to other baselines, as the random labeling of forget data reduces the model's utility. While NegGrad+, $\ell_1$-sparse and NoT achieve the most competitive results, {\ours} demonstrates superior performance across different datasets and model architectures. For example, using ResNet-18, {\ours} outperforms the best baseline by 16.7\% on CIFAR-10, 15.8\% on CIFAR-100, and 6.3\% on TinyImageNet. Similarly, with VGG-16 and ViT on CIFAR-100, {\ours} achieves performance improvements of 53.4\% and 4.5\%, respectively. Even under a \textit{50\% random forget data ratio}, {\ours} still outperforms the baselines (Table~\ref{tbl_ours_50p}). Comparisons with \textit{additional baselines} and \textit{class-wise forgetting} are provided in Appendix~\ref{sec_rand_f10_additional_baselines} and Appendix~\ref{class_wise_forgetting}, respectively.


\paragraph{Integration of {\ours}'s CL Module into Baselines.} To evaluate whether our proposed CL module empowers baseline methods, we conduct experiments integrating {\ours}'s CL module with competitive baselines. Figure~\ref{fig_avggap_addon} presents the average gap comparisons, with percentage improvements, across various datasets and model architectures for 10\% and 50\% forget data ratios. Detailed results can be found in Appendix~\ref{sec_additional_addon_res}. Our results demonstrate a substantial performance boost achieved by incorporating our CL module into the baselines. For example, using CIFAR-10 with ResNet-18, integrating our CL module with NegGrad+, $\ell_1$-sparse and NoT results in percentage improvements of 44.8\%, 42.1\%, and 43.3\%, respectively, under a 10\% forget data ratio; while for a 50\% forget ratio, improvements of 50.0\%, 87.7\%, and 73.1\% are achieved. Although the CL module introduces a slight increase in computational cost due to the additional model inference required for obtaining representations of the second sample view, Figure~\ref{fig_overview_results} shows that the performance enhancements remain significant even when the computation budgets are matched.

\begin{wrapfigure}{r}{0.515\textwidth}
  \vspace{-0.02in}
\centering
\begin{minipage}{\linewidth}\centering
\includegraphics[scale=0.4]{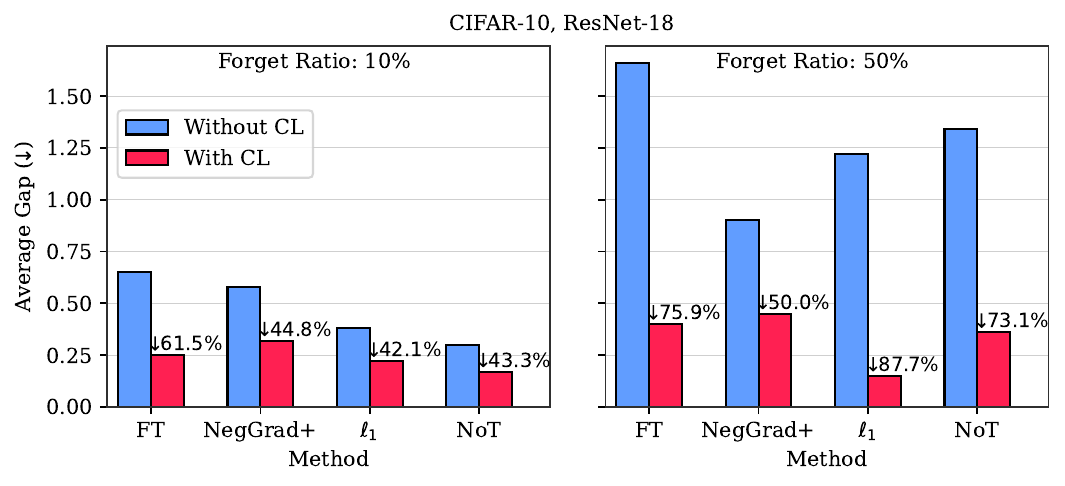}
\end{minipage}
\\  \vspace*{-1mm}
\begin{minipage}{\linewidth}\centering
\includegraphics[scale=0.4]{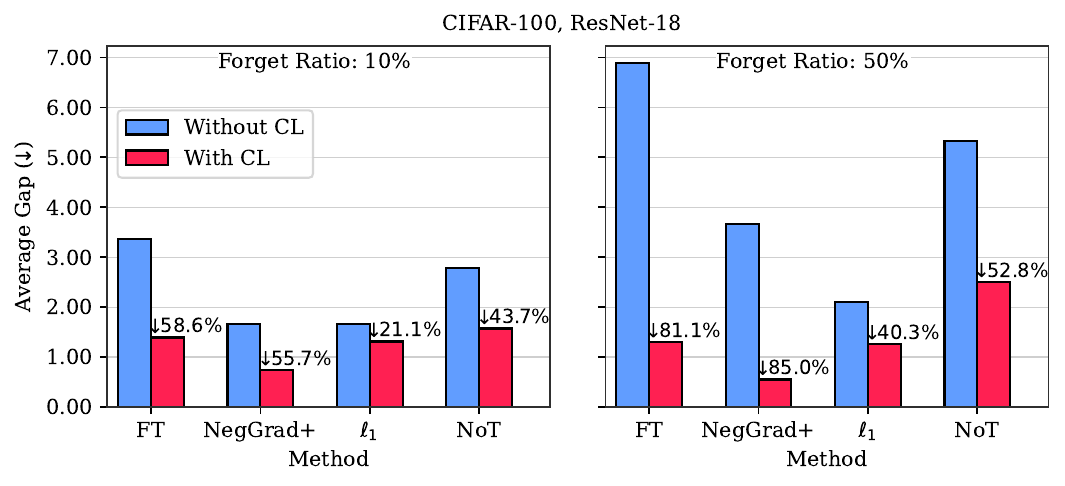}
\end{minipage}
\\  \vspace*{-1mm}
\begin{minipage}{\linewidth}\centering
\includegraphics[scale=0.4]{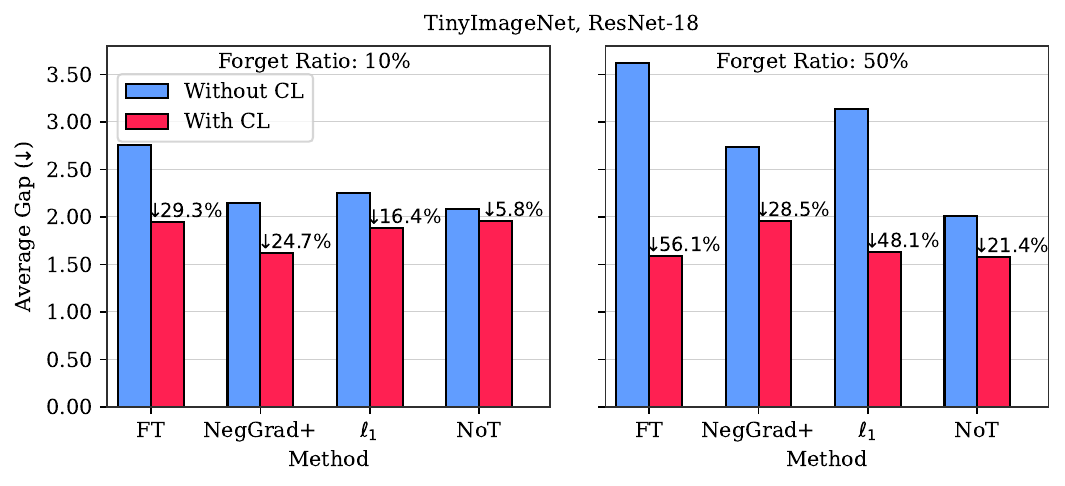}
\end{minipage}
\\  \vspace*{-1mm}
\begin{minipage}{\linewidth}\centering
\includegraphics[scale=0.4]{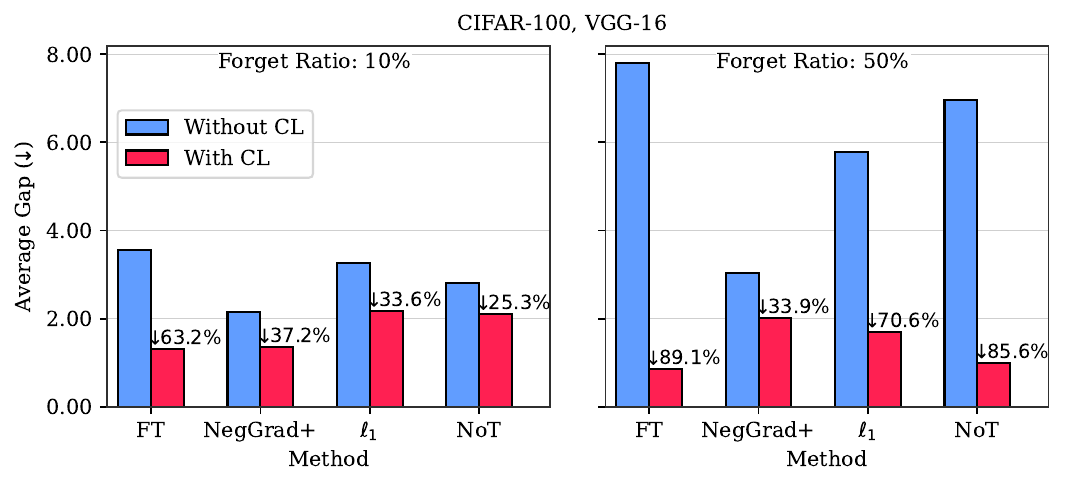}
\end{minipage}
\\  \vspace*{-1mm}
\begin{minipage}{\linewidth}\centering
\includegraphics[scale=0.396]{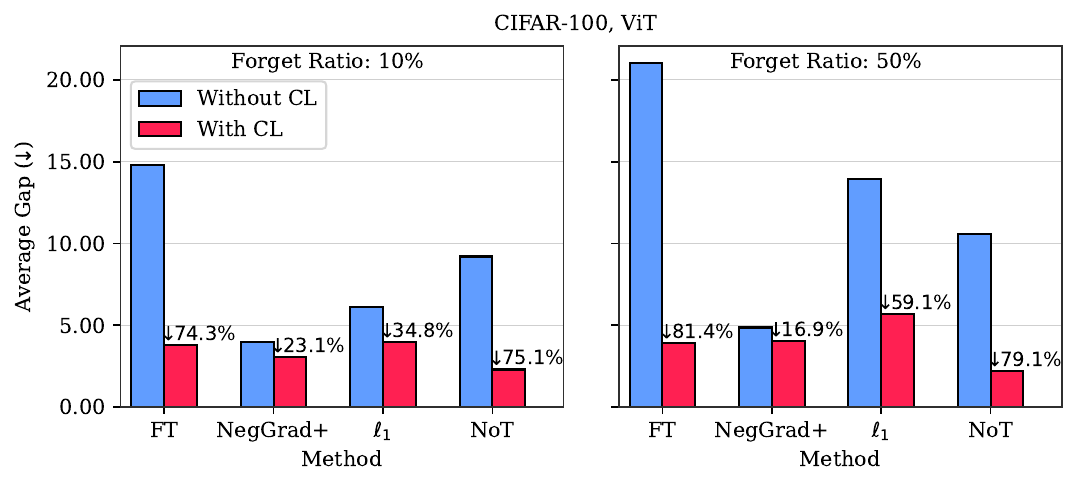}
\end{minipage}
\vspace*{-3mm}
\caption{\small\textbf{Percentage improvement from integrating {\ours}'s CL module into baseline methods}.  Incorporating our CL module consistently improves baseline unlearning performance compared to the original MU methods (without CL). The performance improvements further increase with a 50\% forget ratio.}
\label{fig_avggap_addon}
  \vspace{-0.9in}
\end{wrapfigure} \paragraph{Sequential Unlearning.} Figure~\ref{fig_seq_data_removal} presents results for scenarios where 10\% of random data is sequentially removed every 10 epochs, up to 50 epochs. {\ours} consistently outperforms baseline methods across all five stages, with varying forget ratios. Additionally, integrating {\ours}'s CL module into baseline methods further empowers their unlearning effectiveness.

\subsection{Ablation Study}
\label{sec_ablation}
All ablation experiments are conducted using CIFAR-100, ResNet-18, and a forget ratio of 50\%. 

\paragraph{Effect of Scaling Factor.} The scaling factor $\lambda$, defined in Equation~\eqref{eq_overall_loss}, controls the relative contribution of the CE and CL losses. Figure~\ref{fig_diff_lambda} demonstrates the substantial impact of $\lambda$ on {\ours}'s performance. Improper tuning can lead to suboptimal results, emphasizing the importance of careful hyperparameter selection. For example, a high $\lambda$ reduces the influence of supervised learning in the objective, causing retain representations to be less constrained within their respective clusters and more susceptible to cluster collisions. This, in turn, degrades model utility and compromises unlearning effectiveness.

\paragraph{Effect of CL Temperature.} Similar to SimCLR \cite{chen2020simple}, we investigate the influence of the CL temperature $\tau$, defined in Equation~\eqref{eq_cl_subcomponent}, on unlearning effectiveness. Figure~\ref{fig_diff_cl_temp} demonstrates that decreasing $\tau$ generally improves the effectiveness of CL, and reduces the average gap with the Retrain model. Nevertheless, excessively low $\tau$ values can negatively impact performance. As shown in Figure~\ref{fig_diff_cl_temp}, optimal performance is observed at $\tau=$ 0.1, with performance deteriorating as $\tau$ deviates from this value. In line with previously reported findings from SimCLR, the results confirm that the best performance emerges at low $\tau$, whereas performance gradually declines as $\tau$ increases beyond the optimal value.

\begin{figure*}[!t]
    \begin{minipage}[t]{0.58\linewidth}
        \centering
    \includegraphics[trim={0.5cm 0.2cm 0.2cm 5cm},scale=0.405]{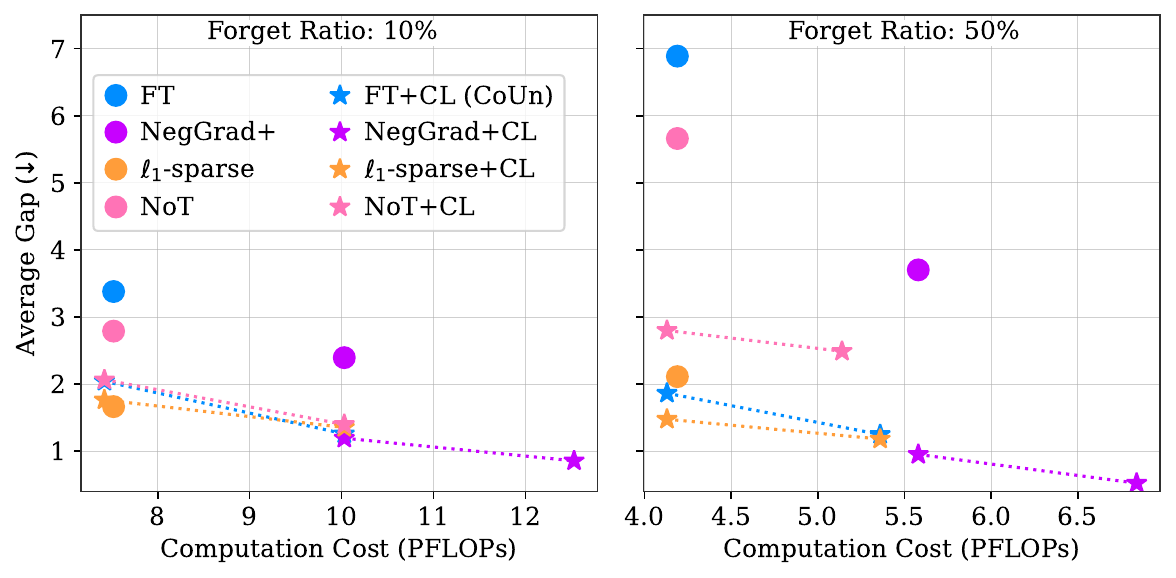}
    \vspace*{-2mm}
    \caption{\small\textbf{Performance comparison of MU methods} on CIFAR-100 with ResNet-18, where 10\% (\textit{left}) and 50\% (\textit{right}) of training data are randomly selected as forget data. The best performance of each method is reported. {\ours} outperforms all baselines, and integrating its CL module empowers baseline performance. Although CL increases computational cost, the performance improvement persists even with the same computational budget.} 
  \label{fig_overview_results}
    \end{minipage} \hfill
        \begin{minipage}[t]{0.39\linewidth}
        \centering
    \includegraphics[trim={0.25cm 0.25cm 0.24cm 0.24cm},clip,scale=0.55]{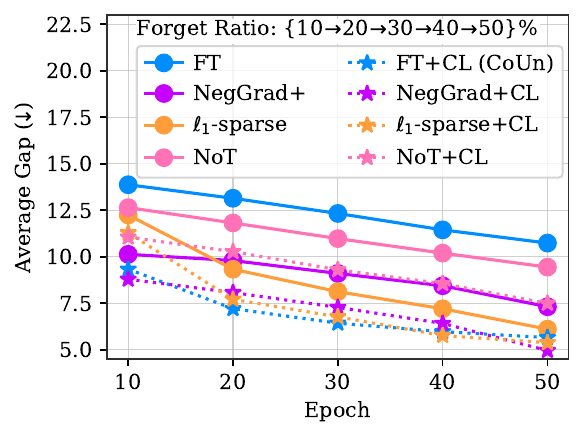}
    \vspace*{-2mm}
    \caption{\small\textbf{Sequential data removal.} Experiments with CIFAR-100, ResNet-18, and up to 50\% random forget data (10\% of data is removed every 10 epochs). {\ours} consistently outperforms baselines, and can further empower baselines' performance when {\ours}'s CL module is integrated into them.}
    \label{fig_seq_data_removal}
    \end{minipage} \hfill
\end{figure*}

\begin{figure*}[t!]
    \begin{minipage}[t]{0.32\linewidth}
     \centering
     \includegraphics[trim={0.25cm 0.25cm 0.24cm 0.24cm},clip,width=\linewidth]{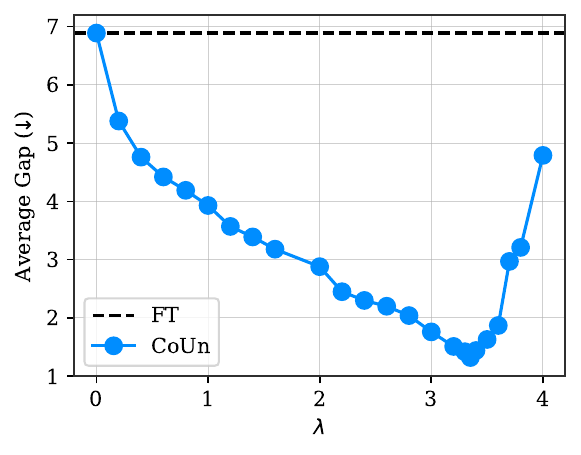}
     \vspace*{-6mm}
    \caption{\small\textbf{Effect of scaling constant $\bm\lambda$.} Properly tuning $\lambda$ in Equation~\eqref{eq_overall_loss} is essential for optimizing {\ours}’s performance.} 
    \label{fig_diff_lambda}
   \end{minipage} \hfill
     \begin{minipage}[t]{0.32\linewidth}
        \centering
        \includegraphics[trim={0.25cm 0.25cm 0.24cm 0.24cm},clip,width=\linewidth]{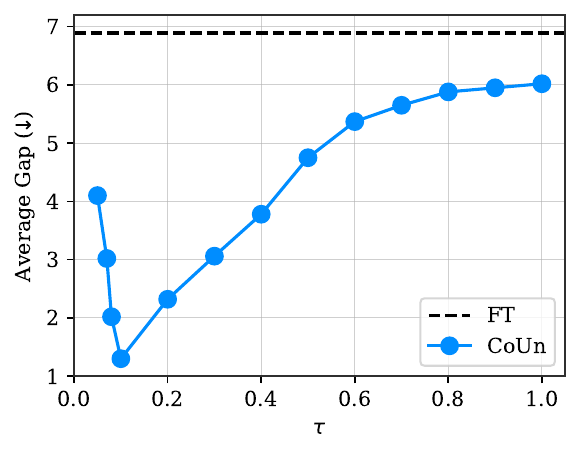}
        \vspace*{-6mm}
        \caption{\small\textbf{Effect of CL temperature $\bm\tau$.} Properly tuning $\tau$ in Equation \eqref{eq_cl_subcomponent} is essential for optimizing {\ours}'s performance.}
    \label{fig_diff_cl_temp}
    \end{minipage} \hfill
    \begin{minipage}[t]{0.32\textwidth}
        \centering
        \includegraphics[trim={0.25cm 0.25cm 0.24cm 0.24cm},clip,width=\linewidth]{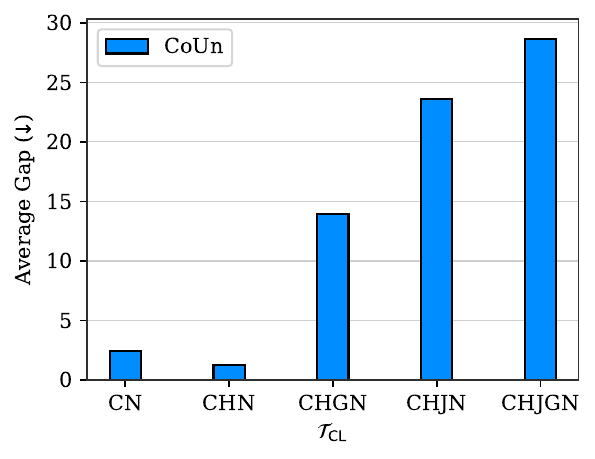}
        \vspace*{-6mm}
        \caption{\small\textbf{Effect of CL transformation $\bm{\mathcal{T}_{\textrm{CL}}}$.} Strong $\mathcal{T}_{\textrm{CL}}$ degrades performance, while simple $\mathcal{T}_{\textrm{CL}}$ fails to sufficiently push representations.} 
        \label{fig_diff_aug}
    \end{minipage} \hfill
            \vspace*{-0.05in}

\end{figure*}

\paragraph{Effect of CL Transformation Distribution.} To isolate the impact of CL transformation distributions $\mathcal{T}_{\textrm{CL}}$, we fix the supervised learning transformation $\mathcal{T}_{\textrm{CE}}$ to CHN, consistent with the transformation used in the Retrain model. The details of the augmentation operations C, H, J, G, and N are provided in Appendix~\ref{sec_data_aug}. Figure~\ref{fig_diff_aug} demonstrates that employing stronger transformations for CL (e.g., CHJGN), compared to those used for supervised learning, can degrade performance. This degradation arises primarily from the formation of tighter clustering of forget representations, along with slower convergence caused by the additional complexity introduced by stronger transformations. On the other hand, overly simple transformations (e.g., CN) may not sufficiently adjust the forget representations, which in turn result in suboptimal performance. Our experiments show that the best performance is achieved when $\mathcal{T} = \mathcal{T}_{\textrm{CE}} = \mathcal{T}_{\textrm{CL}}$, i.e., when $\mathcal{T}=$ CHN. This configuration also enables shared use of representations from a single augmented image between CL and supervised learning, thereby \begin{wrapfigure}{r}{0.42\textwidth}
\centering
    \includegraphics[trim={0.25cm 0.25cm 0.24cm 0.24cm},clip,scale=0.51]{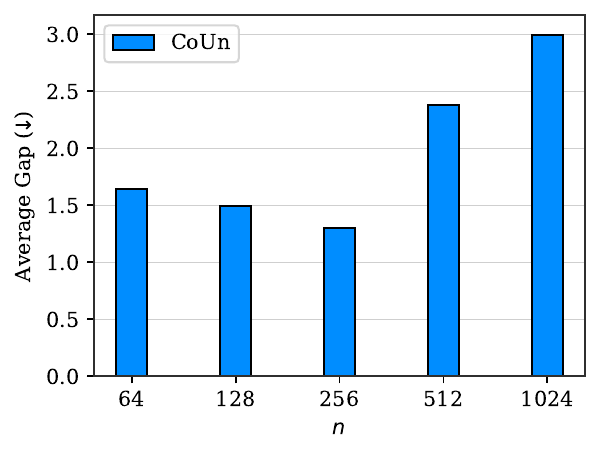}
     \vspace*{-3mm}
    \caption{\small\textbf{Effect of batch size $\bm{n}$.} Different $n$ for {\ours}, results in varying performance. Retrain batch size is set to 256.} \label{fig_diff_batch_size}
 \vspace*{-16mm}
\end{wrapfigure}reducing computational cost. The impact of strong versus simple CL transformations on forget representations is further illustrated in Appendix~\ref{sec_tsne_data_aug}.

\paragraph{Effect of Batch Size} Batch size impacts the performance of {\ours}. Figure~\ref{fig_diff_batch_size} presents results for varying batch sizes, with the batch size for the Retrain model fixed at 256. Our findings indicate that the best performance for {\ours} is achieved with a batch size of 256, which matches the batch size used for the Retrain model. 

\section{Conclusion}
\label{sec_conclusion}
We presented {\ours}, a novel CL-based MU framework that enables effective unlearning by adjusting learned data representations based on semantic similarity. {\ours} applies a CL module on retain data to adjust their representations and leverages the cluster collision issue to promote cluster overlap. Due to semantic similarity between retain and forget samples, forget representations are indirectly influenced in the same manner; thereby enhancing forget quality. To preserve utility, {\ours} applies supervised learning to retain data to mitigate cluster collision for retain representations. Our results showed that {\ours} consistently outperforms state-of-the-art MU baselines, and that integrating its CL module into existing baselines empowers their unlearning effectiveness.

\bibliographystyle{unsrt}
\bibliography{main}

\newpage
\appendix

We provide more details and results about our work in the appendices. Here are the contents:
\begin{itemize}
    \item Appendix~\ref{sec_relatedworks_detailed}: More discussion on related work.
    \item Appendix~\ref{code_conun}: Pseudo-Code and PyTorch implementation of {\ours}.
    \item Appendix~\ref{sec_further_emperical_analysis}: Additional empirical analysis.
    \item Appendix~\ref{sec_theoritical_analysis_detailed}: Proof of Lemma~\ref{theo:augmentation}.
    \item Appendix~\ref{sec_further_imp_details}: More details about experimental and implementation settings.
    \item Appendix~\ref{sec_additional_results}: Additional experiment results.
    \item Appendix~\ref{sec_broader_impacts}: Broader impacts of our proposed method.
    \item Appendix~\ref{sec_limitations}: Limitations of our proposed method.
\end{itemize}

\section{Related Work: Further Details}
\label{sec_relatedworks_detailed}

\paragraph{Contrastive Learning.} Zhang et al. \cite{zhang2024contrastive} applies \textit{supervised CL} \cite{khosla2020supervised} to push forget samples away from retain samples of the same cluster and pull them closer to retain samples of different clusters. Essentially, \cite{zhang2024contrastive} pushes representations away from positive samples and toward negative ones. This approach requires that forget and retain samples from the same cluster to be included in each batch. Moreover, under this definition there are no positive samples in class-wise unlearning, thus \cite{zhang2024contrastive} modifies the objective to only pull forget samples toward retain samples from different clusters. However, this approach aims to push forget samples outside their clusters, potentially harming model utility. As shown in Figure \ref{fig_rep_space_retrain}, the goal of unlearning is not to misclassify forget samples as some can be correctly classified to maintain model performance. Additionally, \cite{zhang2024contrastive} requires access to forget data. 

In contrast, {\ours} follows the way how Retrain model classifies forget data, which is based on semantic similarity. {\ours} utilizes \textit{self-supervised CL} \cite{gui2024survey, jaiswal2020survey, chen2020simple, tian2020makes, chen2020improved, zbontar2021barlow, bardes2022vicreg} to achieve the same goal. Self-supervised CL uses augmentations to generate positive views, instead of using samples from different clusters as the positive views. The use of augmented samples as positives and the remaining as negatives allows three advantages: \ding{182} access to class labels is not required during CL, \ding{183} we do not need to guarantee that samples from different clusters need to exist in the batch, and \ding{184} it does not force samples out of their original clusters. Lastly, {\ours} does not require access to forget data.

\section{{\ours} Algorithm}
\label{code_conun}

\subsection{Pseudo-Code}
Algorithm \ref{alg_ous} details our proposed unlearning method, which leverages CL and supervised learning.

\begin{algorithm}[H]
\caption{\small{\ours} Algorithm}
\label{alg_ous}
\textbf{Input}: Original model $\bm{\theta}_o$, transformation distribution $\mathcal{T}$, and retain data $\mathcal{D}_r$\\
\textbf{Hyper-parameter}: Learning rate $\eta$, temperature $\tau$, and scaling factor $\lambda$\\
\textbf{Output}: Unlearned model $\bm{\theta}_u$
\begin{algorithmic}[1]
    \State $\bm{\theta}_u \leftarrow \bm{\theta}_o$ 
    \For{epoch $e = 1, 2,\ldots, E$}
        \For{each batch $(\bm{I}, \bm{Y}) \in \mathcal{D}_r$}
            \State Sample transformations $t, t^{\prime} \sim \mathcal{T}$
            \State $\bm{X}, \bm{X}^{\prime} = t(\bm{I}), t^{\prime}(\bm{I})$ 
            \State $\bm{Z}, \bm{Z}^{\prime} = f_{\bm{\theta}_u}(\bm{X}), f_{\bm{\theta}_u}(\bm{X}^{\prime})$ 
            \State $\hat{\bm{Y}} = h_{\bm{\theta}_u}(\bm{Z})$  
            \State $\mathcal{L}_{\textrm{CL}}$ is obtained from Equation~\eqref{eq_cl} using $\bm{Z}, \bm{Z}^{\prime}$
            \State $\mathcal{L}_{\textrm{CE}}$ is obtained from Equation~\eqref{eq_ce} using $\bm{Y}$, $\hat{\bm{Y}}$
            \State $\bm{\theta}_u \leftarrow \bm{\theta}_u - \eta\nabla_{\bm{\theta}_u}\left(\mathcal{L}_{\textrm{CE}} + \lambda\mathcal{L}_{\textrm{CL}}\right)$
        \EndFor
    \EndFor
    \State \textbf{return} $\bm{\theta}_u$
\end{algorithmic}
\end{algorithm}

\subsection{PyTorch Code} 
This section provides the PyTorch implementation of {\ours}.  
\lstinputlisting[language=Python]{code/coun_latex.py}

\section{Further Details on Empirical Analysis}
\label{sec_further_emperical_analysis}

\paragraph{Representation Space of Original Model.} Figure~\ref{fig_tsne_original} illustrates the representation space of the Original model trained on CIFAR-10 using ResNet-18. Since it is trained on both retain and forget data, the model achieves perfect accuracy on both.

\begin{wraptable}{R}{8cm}
\vspace{-0.2in}
\centering
\caption{\small\textbf{L2 distances of forget `truck' to retain centroids.} The most semantically similar clusters to `truck' samples are presented. Experiments conducted using CIFAR-10 and ResNet-18. The \textcolor{blue}{difference ($\Delta$)} and the (\textcolor{red}{best}) \textcolor{blue}{average difference} between each method and Retrain are reported.}
\resizebox{\linewidth}{!}{
\begin{tabular}{llllll}
\toprule
\multirow{2}{*}{\shortstack[c]{\textbf{Forgetting} \\ \textbf{Scenario}}} & \multirow{2}{*}{\shortstack[c]{\textbf{Method}}} & \multicolumn{3}{c}{\textbf{L2 - (\textcolor{blue}{$\Delta \downarrow$})}} & \multicolumn{1}{c}{\multirow{2}{*}{\shortstack[c]{\textbf{Avg.} \\ \textbf{Diff. $\downarrow$}}}} \\
\cmidrule(r){3-5}  
&& \multicolumn{1}{c}{\textbf{Automobile}} & \multicolumn{1}{c}{\textbf{Airplane}} & \multicolumn{1}{c}{\textbf{Ship}} & \\
\midrule

\multirow{4}{*}{\shortstack[c]{Class \\(`truck')}} & Original & 0.93 & 0.97 & 0.96 & - \\
& Retrain & 0.90 (\normalsize{\textcolor{blue}{0.00})} & 0.96 (\normalsize{\textcolor{blue}{0.00})} & 0.95 (\normalsize{\textcolor{blue}{0.00})} & \normalsize{\textcolor{red}{0.00}} \\

\cline{2-6}
& FT & 0.86 (\normalsize{\textcolor{blue}{0.04})} & 0.94 (\normalsize{\textcolor{blue}{0.02})} & 0.91 (\normalsize{\textcolor{blue}{0.04})} &  \normalsize{\textcolor{red}{0.033}} \\

\cline{2-6}
\rowcolor{lightgray!50} \cellcolor{white} & {\ours} & 0.87 (\normalsize{\textcolor{blue}{0.03})} & 0.96 (\normalsize{\textcolor{blue}{0.00})} & 0.93 (\normalsize{\textcolor{blue}{0.02})} & \normalsize{\textcolor{red}{0.017}} \\
\bottomrule
\end{tabular}
}
\label{tbl_statistical_analysis}
\end{wraptable}\paragraph{Statistical Comparisons.} We provide a statistical comparison between forget representations and retain clusters, we grouped the forget samples by their true class labels and, for each group, computed the average Euclidean distance (L2) from its samples to all retain class centroids. This yielded a per-class distance profile showing how far forget representations lie from each retain cluster. To enable comparison across different models, we then applied normalization on each group’s averaged distances. Table~\ref{tbl_statistical_analysis} summarizes the results for CIFAR-10, ResNet-18, and the statistics for `truck' forget samples in a 10\% random forgetting (same setup as Table~\ref{tbl_retrain_baseline_predicitons}). The findings show that forget representations in {\ours} are consistently closer to semantically similar retain clusters, and more importantly, {\ours} achieves distance statistics that are closer to those of the Retrain model compared to other baselines. The smaller the distance means higher semantic similarity. From Table~\ref{tbl_statistical_analysis}, we can see that `truck' samples have the highest semantic similarity with `automobile'.

\paragraph{Prediction-Level Results.} Tables \ref{tbl_retrain_baseline_predicitons_baselines_cifar10} and \ref{tbl_retrain_baseline_predicitons_baselines_cifar100} present additional prediction-level results for a single forget class across different baselines and datasets. The Original model has 100\% accuracy on the forget `truck' samples since these samples are part of its training data. Furthermore, baselines based on label manipulation or weight perturbation produce predictions somewhat similar to Retrain, but their misclassifications are less concentrated on semantically related classes. By comparison, {\ours} more effectively redirects forget samples toward semantically similar clusters, thereby yielding prediction distributions that are closer to those of the Retrain model.

\begin{figure*}
\centering
\includegraphics[trim={0.25cm 0.25cm 0.24cm 0.24cm},clip,scale=0.7]{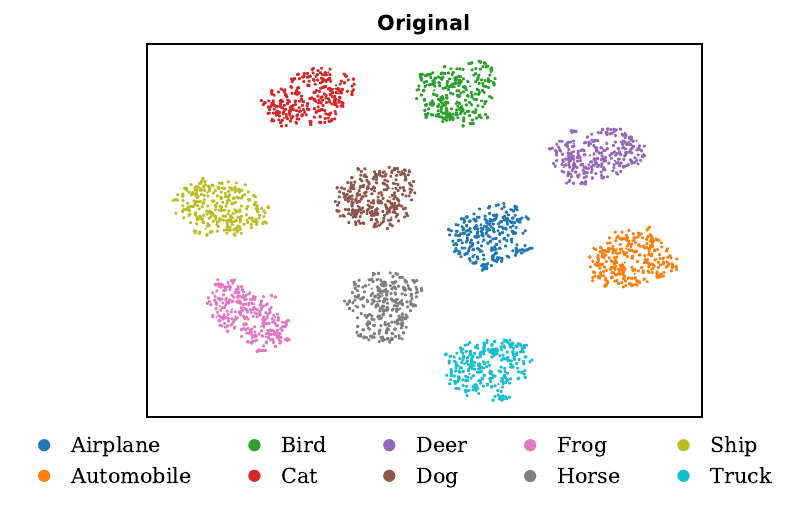}
 \vspace*{-2mm}
\caption{\small\textbf{Representation space of the Original model}. The Original model is trained on the entire CIFAR-10 training data (i.e., union of retain and forget data) using ResNet-18. There are no misclassifications for either retain or forget samples since the model was trained on them. A single visualization of the Original's model representation space is shown for both class-wise and random scenarios, as this model serves as the checkpoint for both scenarios. Dots denote training samples, where each being correctly clustered into the corresponding class.}
\label{fig_tsne_original}
\end{figure*}

\section{Proof of Lemma 1}
\label{sec_theoritical_analysis_detailed}
We have 
\begin{align}
\lVert f_{\bm{\theta}_u}(\bm{x}) - f_{\bm{\theta}_u}(\bm{x}^{\prime}) \rVert &\leq L  \lVert \bm{x} - \bm{x}^{\prime} \rVert, \label{lip} \\
\sup_{\substack{{\bm{x} = t(\bm{i}),\,\bm{x}^{\prime} \in t^{\prime}(\bm{i})} \\ t,\,t^{\prime} \sim \mathcal{T}}}\lVert f_{\bm{\theta}_u}(\bm{x}) - f_{\bm{\theta}_u}(\bm{x}^{\prime}) \rVert &\leq L  \sup_{\substack{{\bm{x} = t(\bm{i}),\,\bm{x}^{\prime} \in t^{\prime}(\bm{i})} \\ t,\,t^{\prime} \sim \mathcal{T}}}\lVert \bm{x} - \bm{x}^{\prime} \rVert, \label{sup}
\end{align}
where inequality \eqref{lip} is obtained due to the L-Lipschitz assumption for feature extractor $f_{\bm{\theta}_u}$ and inequality \eqref{sup} is obtained by taking supremum from both sides of inequality \eqref{lip}. 

Training $f_{\bm{\theta}_u}$ on the retain data tends to converge to flat minima, i.e., regions in parameter space where the loss landscape is broad and has low curvature~\cite{hochreiter1997flat}. Consequently, flat minima correspond to functions whose outputs vary only gently under small perturbations around the training points (i.e., they have a low local Lipschitz constant at those points). However, there is no guarantee that, at these flat minima, the loss landscape remains flat with respect to the forget data. Therefore, in general, after convergence we have
\begin{align} \label{L_rel}
    L_r \leq L_u \leq L,
\end{align}
where $L_r$ and $L_u$ denote the Lipschitz constants evaluated on the retain and forget data, respectively.

Based on Equation \eqref{R_eps} and inequality \eqref{sup}, we have 
\begin{align}  \label{R_r}
R_{r}[{\epsilon}] = P\left[\bm{i} \in   \left(\bigcup_{k=1}^{K} C_k \right) \cap \mathcal{D}_r \Big| \sup_{\substack{{\bm{x} = t(\bm{i}),\,\bm{x}^{\prime} \in t^{\prime}(\bm{i})} \\ t,\,t^{\prime} \sim \mathcal{T}}} \lVert \bm{x} - \bm{x}^{\prime} \rVert > \epsilon/L_r \right],
\end{align}
and 
\begin{align} \label{R_u}
R_{u}[{\epsilon}] = P\left[\bm{i} \in   \left(\bigcup_{k=1}^{K} C_k \right) \cap \mathcal{D}_u \Big| \sup_{\substack{{\bm{x} = t(\bm{i}),\,\bm{x}^{\prime} \in t^{\prime}(\bm{i})} \\ t,\,t^{\prime} \sim \mathcal{T}}} \lVert \bm{x} - \bm{x}^{\prime} \rVert > \epsilon/L_u \right].
\end{align}

In particular, Equation \eqref{R_r} computes the probability that, for images in the retain data, $\sup_{\substack{{\bm{x} = t(\bm{i}),\,\bm{x}^{\prime} \in t^{\prime}(\bm{i})} \\ t,\,t^{\prime} \sim \mathcal{T}}} \lVert \bm{x} - \bm{x}^{\prime} \rVert $ exceeds $\epsilon/L_r$, while Equation \eqref{R_u} computes the probability that, for images in the forget data, $\sup_{\substack{{\bm{x} = t(\bm{i}),\,\bm{x}^{\prime} \in t^{\prime}(\bm{i})} \\ t,\,t^{\prime} \sim \mathcal{T}}} \lVert \bm{x} - \bm{x}^{\prime} \rVert $ exceeds $\epsilon/L_u$. Considering inequality \eqref{L_rel}, we have $\epsilon/L_u \leq \epsilon/L_r$. Since in random forgetting the samples are drawn I.I.D., we have $R_{r}[{\epsilon}] \leq R_{u}[{\epsilon}]$. This follows because the threshold $\epsilon/L_u$ used to compute $R_{u}[{\epsilon}]$ is lower than the threshold $\epsilon/L_r$ used to compute $R_{r}[{\epsilon}]$.

\section{Further Implementation Details}
\label{sec_further_imp_details}

\subsection{Model Architectures}
\label{model_archtectures_further_details}
For VGG-16 \cite{simonyan2014very}, we use a 1024-dimensional encoder head. For the ViT \cite{lee2021vision} model, we adopt a patch size of 4 $\times$ 4, an embedding dimension of 512, an MLP hidden dimension of 1024, 12 attention heads of size 64, and a depth of 6 transformer layers. Both dropout and embedding dropout are set to 0.1.

\subsection{Training and Unlearning Configurations}
\label{sec_additional_imp_settings}
\paragraph{Original Model $\bm{\theta}_o$ Training.} Following \cite{khalil2025NoT,fan2024salun, jia2023model}, we train original models for all different datasets and model architectures for a total of 182 epochs. The batch size is set to 256. An SGD optimizer is used with an initial learning rate of 0.1 and a multi-step learning rate scheduler that reduces the learning rate by a factor of 10 at 50\% and 75\% of the training epochs. Momentum is set to 0.9, and weight decay is set to 5 $\times$ 10$^{-4}$. The transformation distribution used for data augmentation is described in Appendix~\ref{sec_data_aug}. For VGG-16, we use a linear warmup phase for the first 75 epochs.

\begin{table}[t!]
\centering
\caption{\small\textbf{Predictions of forget `truck' samples based on most semantically similar classes}. The experiments are conducted using CIFAR-10 and ResNet-18. The \textcolor{blue}{difference ($\Delta$)} and the (\textcolor{red}{best}) \textcolor{blue}{average difference} between each method and Retrain are reported.}
\resizebox{\linewidth}{!}{
\begin{tabular}{lllllll}
\toprule
\multirow{2}{*}{\shortstack[c]{\textbf{Forgetting} \\ \textbf{Scenario}}} & \multirow{2}{*}{\shortstack[c]{\textbf{Method}}} & \multicolumn{4}{c}{\textbf{Predictions (\%) - (\textcolor{blue}{$\Delta \downarrow$})}} & \multicolumn{1}{c}{\multirow{2}{*}{\shortstack[c]{\textbf{Avg.} \\ \textbf{Diff. $\downarrow$}}}} \\
\cmidrule(r){3-6}  
&&\multicolumn{1}{c}{\textbf{Truck}} & \multicolumn{1}{c}{\textbf{Automobile}} & \multicolumn{1}{c}{\textbf{Airplane}} & \multicolumn{1}{c}{\textbf{Ship}} & \\
\midrule

\multirow{8}{*}{\shortstack[c]{Class \\(`truck')}} & Original & 100.00 (\normalsize{\textcolor{blue}{0.00})} & 0.00 (\normalsize{\textcolor{blue}{0.00})} & 0.00 (\normalsize{\textcolor{blue}{0.00})} & 0.00 (\normalsize{\textcolor{blue}{0.00})}& \normalsize{\textcolor{blue}{0.00}}
 \\

& Retrain & 0.00 (\normalsize{\textcolor{blue}{0.00})} & 69.32 (\normalsize{\textcolor{blue}{0.00})} & 13.47 (\normalsize{\textcolor{blue}{0.00})} & 12.60 (\normalsize{\textcolor{blue}{0.00})}& \normalsize{\textcolor{blue}{0.00}} \\

\cline{2-7}
& FT & 0.00 (\normalsize{\textcolor{blue}{0.00})} & 70.29 (\normalsize{\textcolor{blue}{0.97})} & 12.38 (\normalsize{\textcolor{blue}{1.09})} & 13.12 (\normalsize{\textcolor{blue}{0.52})}& \normalsize{\textcolor{blue}{0.65}} \\

& NegGrad+ & 0.00 (\normalsize{\textcolor{blue}{0.00})} & 43.56 (\normalsize{\textcolor{blue}{25.76})} & 16.03 (\normalsize{\textcolor{blue}{2.56})} & 17.58 (\normalsize{\textcolor{blue}{4.98})}& \normalsize{\textcolor{blue}{8.32}} \\

& $\ell_1$-sparse & 0.00 (\normalsize{\textcolor{blue}{0.00})} & 65.70 (\normalsize{\textcolor{blue}{3.62})} & 9.84 (\normalsize{\textcolor{blue}{3.63})} & 18.25 (\normalsize{\textcolor{blue}{5.65})}& \normalsize{\textcolor{blue}{3.23}} \\

& SalUn & 0.00 (\normalsize{\textcolor{blue}{0.00})} & 66.05 (\normalsize{\textcolor{blue}{3.27})} & 16.67 (\normalsize{\textcolor{blue}{3.20})} & 16.71 (\normalsize{\textcolor{blue}{4.11})}& \normalsize{\textcolor{blue}{2.65}}
 \\

& NoT & 0.00 (\normalsize{\textcolor{blue}{0.00})} & 68.12 (\normalsize{\textcolor{blue}{1.20})} & 12.43 (\normalsize{\textcolor{blue}{1.04})} & 15.03 (\normalsize{\textcolor{blue}{2.43})}& \normalsize{\textcolor{blue}{1.17}} \\

\cline{2-7}
\rowcolor{lightgray!50} \cellcolor{white} & {\ours} & 0.00 (\normalsize{\textcolor{blue}{0.00})} & 69.60 (\normalsize{\textcolor{blue}{0.28})} & 13.96 (\normalsize{\textcolor{blue}{0.49})} & 13.13 (\normalsize{\textcolor{blue}{0.53})}& \normalsize{\textcolor{red}{0.33}} \\

\hline
\hline

\multirow{8}{*}{\shortstack[c]{Random \\(10\%)}} & Original & 100.00 (\normalsize{\textcolor{blue}{0.00})} & 0.00 (\normalsize{\textcolor{blue}{0.00})} & 0.00 (\normalsize{\textcolor{blue}{0.00})} & 0.00 (\normalsize{\textcolor{blue}{0.00})}& \normalsize{\textcolor{blue}{0.00}}
 \\
 
& Retrain & 97.42 (\normalsize{\textcolor{blue}{0.00})} & 1.23 (\normalsize{\textcolor{blue}{0.00})} & 0.38 (\normalsize{\textcolor{blue}{0.00})} & 0.40 (\normalsize{\textcolor{blue}{0.00})}& \normalsize{\textcolor{blue}{0.00}}\\

\cline{2-7}
& FT & 98.12 (\normalsize{\textcolor{blue}{0.70})} & 0.75 (\normalsize{\textcolor{blue}{0.48})} & 0.36 (\normalsize{\textcolor{blue}{0.02})} & 0.32 (\normalsize{\textcolor{blue}{0.08})}& \normalsize{\textcolor{blue}{0.32}}

 \\

& NegGrad+ & 97.86 (\normalsize{\textcolor{blue}{0.44})} & 0.97 (\normalsize{\textcolor{blue}{0.26})} & 0.42 (\normalsize{\textcolor{blue}{0.04})} & 0.30 (\normalsize{\textcolor{blue}{0.10})}& \normalsize{\textcolor{blue}{0.21}} \\

& $\ell_1$-sparse & 98.06 (\normalsize{\textcolor{blue}{0.64})} & 0.85 (\normalsize{\textcolor{blue}{0.38})} & 0.30 (\normalsize{\textcolor{blue}{0.08})} & 0.38 (\normalsize{\textcolor{blue}{0.02})}& \normalsize{\textcolor{blue}{0.28}}
 \\

& SalUn & 96.76 (\normalsize{\textcolor{blue}{0.66})} & 0.88 (\normalsize{\textcolor{blue}{0.35})} & 0.34 (\normalsize{\textcolor{blue}{0.04})} & 0.34 (\normalsize{\textcolor{blue}{0.06})}& \normalsize{\textcolor{blue}{0.28}}
 \\

& NoT & 97.98 (\normalsize{\textcolor{blue}{0.56})} & 0.89 (\normalsize{\textcolor{blue}{0.34})} & 0.40 (\normalsize{\textcolor{blue}{0.02})} & 0.32 (\normalsize{\textcolor{blue}{0.08})}& \normalsize{\textcolor{blue}{0.25}}
 \\

\cline{2-7}
\rowcolor{lightgray!50} \cellcolor{white} & {\ours} & 97.84 (\normalsize{\textcolor{blue}{0.42})} & 0.99 (\normalsize{\textcolor{blue}{0.24})} & 0.32 (\normalsize{\textcolor{blue}{0.06})} & 0.42 (\normalsize{\textcolor{blue}{0.02})}& \normalsize{\textcolor{red}{0.19}} \\
\bottomrule
\end{tabular}
}
\label{tbl_retrain_baseline_predicitons_baselines_cifar10}
\end{table}

\begin{table}[t!]
\centering
\caption{\small\textbf{Predictions of forget `man' samples based on most semantically similar classes}. The experiment is conducted using CIFAR-100 and VGG-16. The \textcolor{blue}{difference ($\Delta$)} and the (\textcolor{red}{best}) \textcolor{blue}{average difference} between each method and Retrain are reported.}
\resizebox{\linewidth}{!}{
\begin{tabular}{lllllll}
\toprule
\multirow{2}{*}{\shortstack[c]{\textbf{Forgetting} \\ \textbf{Scenario}}} & \multirow{2}{*}{\shortstack[c]{\textbf{Method}}} & \multicolumn{4}{c}{\textbf{Predictions (\%) - (\textcolor{blue}{$\Delta \downarrow$})}} & \multicolumn{1}{c}{\multirow{2}{*}{\shortstack[c]{\textbf{Avg.} \\ \textbf{Diff. $\downarrow$}}}} \\
\cmidrule(r){3-6}  
&&\multicolumn{1}{c}{\textbf{Man}} & \multicolumn{1}{c}{\textbf{Woman}} & \multicolumn{1}{c}{\textbf{Boy}} & \multicolumn{1}{c}{\textbf{Baby}} & \\
\midrule

\multirow{8}{*}{\shortstack[c]{Random \\(10\%)}} & Original & 100.00 (\normalsize{\textcolor{blue}{0.00})} & 0.00 (\normalsize{\textcolor{blue}{0.00})} & 0.00 (\normalsize{\textcolor{blue}{0.00})} & 0.00 (\normalsize{\textcolor{blue}{0.00})}& \normalsize{\textcolor{blue}{0.00}}
 \\
 
& Retrain & 49.53 (\normalsize{\textcolor{blue}{0.00})} & 15.18 (\normalsize{\textcolor{blue}{0.00})} & 10.06 (\normalsize{\textcolor{blue}{0.00})} & 3.42 (\normalsize{\textcolor{blue}{0.00})}& \normalsize{\textcolor{blue}{0.00}} \\

\cline{2-7}
& FT & 60.53 (\normalsize{\textcolor{blue}{11.00})} & 9.11 (\normalsize{\textcolor{blue}{6.07})} & 9.68 (\normalsize{\textcolor{blue}{0.38})} & 3.61 (\normalsize{\textcolor{blue}{0.19})}& \normalsize{\textcolor{blue}{4.41}} \\

& NegGrad+ & 41.94 (\normalsize{\textcolor{blue}{7.59})} & 13.47 (\normalsize{\textcolor{blue}{1.71})} & 12.71 (\normalsize{\textcolor{blue}{2.65})} & 4.74 (\normalsize{\textcolor{blue}{1.32})}& \normalsize{\textcolor{blue}{3.32}} \\

& $\ell_1$-sparse & 57.69 (\normalsize{\textcolor{blue}{8.16})} & 13.47 (\normalsize{\textcolor{blue}{1.71})} & 10.06 (\normalsize{\textcolor{blue}{0.00})} & 2.85 (\normalsize{\textcolor{blue}{0.57})}& \normalsize{\textcolor{blue}{2.61}} \\

& SalUn & 55.16 (\normalsize{\textcolor{blue}{5.63})} & 13.52 (\normalsize{\textcolor{blue}{1.66})} & 9.68 (\normalsize{\textcolor{blue}{0.38})} & 2.63 (\normalsize{\textcolor{blue}{0.79})}& \normalsize{\textcolor{blue}{2.12}} \\

& NoT & 51.23 (\normalsize{\textcolor{blue}{1.70})} & 14.61 (\normalsize{\textcolor{blue}{0.57})} & 9.68 (\normalsize{\textcolor{blue}{0.38})} & 3.61 (\normalsize{\textcolor{blue}{0.19})}& \normalsize{\textcolor{blue}{0.71}} \\

\cline{2-7}
\rowcolor{lightgray!50} \cellcolor{white} & {\ours} & 50.15 (\normalsize{\textcolor{blue}{0.62})} & 14.67 (\normalsize{\textcolor{blue}{0.51})} & 10.09 (\normalsize{\textcolor{blue}{0.03})} & 3.28 (\normalsize{\textcolor{blue}{0.14})}& \normalsize{\textcolor{red}{0.33}} \\

\bottomrule
\end{tabular}
}
\label{tbl_retrain_baseline_predicitons_baselines_cifar100}
\end{table}

\paragraph{Unlearned Model $\bm{\theta}_u$ Training.} For unlearning, all methods are run for 50 epochs. We used the SGD optimizer with a learning rate tuned within the range of [0.01, 0.1] for each MU method. A cosine annealing learning rate scheduler is used with a minimum learning rate set to 1 $\times$ 10$^{-4}$. Momentum is set to 0.9, and weight decay is set to 5 $\times$ 10$^{-4}$. The transformation distribution used for data augmentation is described in Appendix~\ref{sec_data_aug}. Additional details for each MU method are listed as follows:
\begin{itemize}
    \item \textbf{BadT:} The temperature is set to 1.
    \item \textbf{SSD:} The weight selection is tuned in the interval [0.1, 100], while the dampening constant is tuned in the interval [0.1, 5].
    \item \textbf{NegGrad+:} The $\beta$ hyperparameter is tuned in the interval [0.95, 0.9999].
    \item \textbf{SCRUB:} The temperature is set to 1. The number of maximization steps is tuned in the interval [1, 10]. Both $\gamma$ and $\alpha$ are tuned in the interval [0.1, 3]. 
    \item \textbf{CU:} The contrastive scaling constant is tuned within the interval [0.1, 2] and the temperature is tuned within the interval of (0, 0.2]. The constant $\omega$ is tuned within the interval [1, 5]. 
    \item \textbf{$\bm{\ell_1}$-sparse:} The $\ell_1$ regularization parameter is tuned in the interval [$10^{-4}$, $10^{-1}$]. $\ell_1$ regularization is applied for 4 epochs, except for sequential forgetting experiments, where it is applied for 2 epochs and then reapplied every 10 epochs for a total of 50 epochs.
    \item \textbf{SalUn:} The mask threshold is tuned within the interval of [0.1, 1.0].
    \item \textbf{NoT:} For all model architectures, the first CNN layer (index: 0) is negated. For ViT \cite{lee2021vision}, the positional representations and the second patch representation layer are negated (indices: 0 and 4).
    \item \textbf{{\ours}:} The scaling constant $\lambda$ is tuned within the interval [0.1, 6], and the CL temperature $\tau$ is tuned within the interval of (0, 0.3]. 
\end{itemize}

All experiments are implemented using the PyTorch platform \cite{paszke2019pytorch} and run using NVIDIA Tesla V100 GPUs.

\subsection{Data Augmentation}
\label{sec_data_aug}
In our experiments, the following operations are applied sequentially to augment images:
\begin{itemize}
    \item \textbf{Random cropping (C):} Output image size of $32 \times 32$ for CIFAR-10/100 and $64 \times 64$ for TinyImageNet, with a padding of 4 on each image border.
    \item \textbf{Random horizontal flip (H):} Applied with a probability of 0.5.
    \item \textbf{Color normalization (N):} Applied using mean values (0.4914, 0.4822, 0.4465) and standard deviations (0.2023, 0.1994, 0.2010).
\end{itemize}

However, in some ablation experiments, the following operations are added between horizontal flip and color normalization to augment images:
\begin{itemize}
    \item \textbf{Random color jitter (J):} Applied with a probability of 0.8. Brightness, contrast, saturation, and hue are set to 0.8, 0.8, 0.8, and 0.2, respectively. 
    \item \textbf{Random grayscale (G):} Applied with a probability of 0.2. 
\end{itemize}

\section{Further Results}
\label{sec_additional_results}

\subsection{Random Forgetting: Forget Ratio 10\% (Additional Baselines)}
\label{sec_rand_f10_additional_baselines}
Table \ref{tbl_ours_10p_extra_baselines} presents comparisons of {\ours} with additional baseline methods: \textbf{BadT} \citelinktext{chundawat2023can}{(AAAI, 2023)}, \textbf{SSD} \citelinktext{foster2024fast}{(AAAI, 2024)}, \textbf{SCRUB} \citelinktext{kurmanji2023towards}{(NeurIPS, 2023)}, and \textbf{CU} \cite{zhang2024contrastive} using random data forgetting with a 10\% forget ratio. {\ours} consistently achieves superior performance compared to all baselines. Since BadT, SSD, SCRUB, and CU do not perform better than other baselines, we did not include them under different settings. Further, BadT and SCRUB demand higher computational resources due to their reliance on two teacher models (Original and Random) to guide the unlearned model.

\begin{table*}[t!]
\centering
\caption{\small\textbf{Performance comparison of {\ours} to additional baseline methods with 10\% random data removal}. The \textcolor{blue}{gap ($\Delta$)} and the (\textcolor{red}{best}) \textcolor{blue}{average gap} between each method and the Retrain model are reported.}
\resizebox{\textwidth}{!}{
\begin{tabular}{llllllll}
\toprule
\multirow{2}{*}{\shortstack[c]{\textbf{Dataset} \\ \textbf{ \& Model}}} & \multirow{2}{*}{\shortstack[c]{\textbf{Method}}} & \multicolumn{3}{c}{\textbf{Accuracy (\%)}} & \multicolumn{1}{c}{\textbf{Efficacy (\%)}} & \multicolumn{1}{c}{\multirow{2}{*}{\shortstack[c]{\textbf{Avg.} \\ \textbf{Gap $\downarrow$}}}} & \multirow{2}{*}{\shortstack[c]{\textbf{Comp. Cost} \\ \textbf{(PFLOPs) $\downarrow$}}} \\
\cmidrule(r){3-5} \cmidrule(r){6-6} 
&& \multicolumn{1}{c}{\textbf{Retain (\textcolor{blue}{$\Delta \downarrow$})}} & \multicolumn{1}{c}{\textbf{Unlearn (\textcolor{blue}{$\Delta \downarrow$})}} & \multicolumn{1}{c}{\textbf{Test (\textcolor{blue}{$\Delta \downarrow$})}} & \multicolumn{1}{c}{\textbf{MIA (\textcolor{blue}{$\Delta \downarrow$})}} & & \\
\midrule
\multirow{10}{*}{\shortstack[c]{CIFAR-10 \\ ResNet-18}} & Retrain & 100.00\tiny{$\pm$ 0.00} \normalsize{(\textcolor{blue}{0.00})} & 4.81\tiny{$\pm$ 0.27} \normalsize{(\textcolor{blue}{0.00})} & 94.67\tiny{$\pm$ 0.24} \normalsize{(\textcolor{blue}{0.00})} & 11.02\tiny{$\pm$ 0.58} \normalsize{(\textcolor{blue}{0.00})} & \normalsize{\textcolor{blue}{0.00}} & 27.37 \\

\cline{2-8}
& FT  & 99.99\tiny{$\pm$ 0.00} \normalsize{(\textcolor{blue}{0.01})} & 3.76\tiny{$\pm$ 0.31} \normalsize{(\textcolor{blue}{1.05})} & 94.70\tiny{$\pm$ 0.14} \normalsize{(\textcolor{blue}{0.03})} & 9.51\tiny{$\pm$ 0.28} \normalsize{(\textcolor{blue}{1.51})} & \normalsize{\textcolor{blue}{0.65}} & 6.32 \\

& BadT & 99.94\tiny{$\pm$ 0.03} \normalsize{(\textcolor{blue}{0.06})} & 0.07\tiny{$\pm$ 0.05} \normalsize{(\textcolor{blue}{4.74})} & 94.05\tiny{$\pm$ 0.15} \normalsize{(\textcolor{blue}{0.62})} & 10.10\tiny{$\pm$ 2.35} \normalsize{(\textcolor{blue}{0.92})} & \normalsize{\textcolor{blue}{1.58}} & 9.89 \\

& SSD & 100.00\tiny{$\pm$ 0.00} \normalsize{(\textcolor{blue}{0.00})} & 0.02\tiny{$\pm$ 0.00} \normalsize{(\textcolor{blue}{4.79})} & 94.80\tiny{$\pm$ 0.00} \normalsize{(\textcolor{blue}{0.13})} & 0.62\tiny{$\pm$ 0.00} \normalsize{(\textcolor{blue}{10.40})} & \normalsize{\textcolor{blue}{3.83}} & 0.06 \\

& NegGrad+  & 99.95\tiny{$\pm$ 0.02} \normalsize{(\textcolor{blue}{0.05})} & 4.82\tiny{$\pm$ 0.24} \normalsize{(\textcolor{blue}{0.01})} & 94.32\tiny{$\pm$ 0.23} \normalsize{(\textcolor{blue}{0.35})} & 9.09\tiny{$\pm$ 0.30} \normalsize{(\textcolor{blue}{1.93})} & \normalsize{\textcolor{blue}{0.58}} & 6.02 \\

& SCRUB & 99.97\tiny{$\pm$ 0.01} \normalsize{(\textcolor{blue}{0.03})} & 3.93\tiny{$\pm$ 0.23} \normalsize{(\textcolor{blue}{0.88})} & 94.61\tiny{$\pm$ 0.17} \normalsize{(\textcolor{blue}{0.06})} & 9.53\tiny{$\pm$ 0.34} \normalsize{(\textcolor{blue}{1.49})} & \normalsize{\textcolor{blue}{0.62}} & 8.93 \\

& CU & 99.32\tiny{$\pm$ 0.06} \normalsize{(\textcolor{blue}{0.68})} & 5.48\tiny{$\pm$ 0.16} \normalsize{(\textcolor{blue}{0.67})} & 94.18\tiny{$\pm$ 0.22} \normalsize{(\textcolor{blue}{0.49})} & 11.63\tiny{$\pm$ 0.72} \normalsize{(\textcolor{blue}{0.61})} & \normalsize{\textcolor{blue}{0.61}} & 3.36 \\

& $\ell_1$-sparse  & 99.97\tiny{$\pm$ 0.01} \normalsize{(\textcolor{blue}{0.03})} & 5.40\tiny{$\pm$ 0.40} \normalsize{(\textcolor{blue}{0.59})} & 93.81\tiny{$\pm$ 0.21} \normalsize{(\textcolor{blue}{0.86})} & 10.97\tiny{$\pm$ 0.35} \normalsize{(\textcolor{blue}{0.05})} & \normalsize{\textcolor{blue}{0.38}} & 6.92 \\

& SalUn  & 99.10\tiny{$\pm$ 0.35} \normalsize{(\textcolor{blue}{0.90})} & 4.31\tiny{$\pm$ 0.42} \normalsize{(\textcolor{blue}{0.50})} & 93.84\tiny{$\pm$ 0.27} \normalsize{(\textcolor{blue}{0.83})} & 11.15\tiny{$\pm$ 2.04} \normalsize{(\textcolor{blue}{0.13})} & \normalsize{\textcolor{blue}{0.59}} & 8.66 \\

& NoT & 99.99\tiny{$\pm$ 0.00} \normalsize{(\textcolor{blue}{0.01})} & 4.19\tiny{$\pm$ 0.25} \normalsize{(\textcolor{blue}{0.62})} & 94.65\tiny{$\pm$ 0.24} \normalsize{(\textcolor{blue}{0.02})} & 10.45\tiny{$\pm$ 0.51} \normalsize{(\textcolor{blue}{0.57})} & \normalsize{\textcolor{blue}{0.30}} & 7.52 \\

\cline{2-8}
\rowcolor{lightgray!50} \cellcolor{white} & {\ours} & 99.99\tiny{$\pm$ 0.00} \normalsize{(\textcolor{blue}{0.01})} & 4.12\tiny{$\pm$ 0.31} \normalsize{(\textcolor{blue}{0.69})} & 94.57\tiny{$\pm$ 0.24} \normalsize{(\textcolor{blue}{0.10})} & 10.81\tiny{$\pm$ 0.31} \normalsize{(\textcolor{blue}{0.21})} & \normalsize{\textcolor{red}{0.25}} & 8.02 \\

\bottomrule
\end{tabular}
}
\label{tbl_ours_10p_extra_baselines}
\end{table*}

\subsection{Class-wise Forgetting}
\label{class_wise_forgetting}
Table~\ref{tbl_ours_classwise} presents the results for class-wise forgetting. All baselines, including FT, exhibit minimal performance gaps with the Retrain model, suggesting that class-wise forgetting is relatively easy and can be effectively addressed using just FT (i.e., through catastrophic forgetting). This observation aligns with the findings from \cite{zhao2024makes}, which shows that lower entanglement between retain and forget data simplifies the unlearning task, making class-wise forgetting easier than random forgetting. This trend is further illustrated by the similarity between the representations of FT and {\ours} in Figure~\ref{fig_rep_space_ft_coun} and those of the Retrain model in Figure~\ref{fig_rep_space_retrain}. Nevertheless, {\ours} achieves competitive results across all baselines. In this paper, the majority of our experiments focus on the more challenging unlearning scenarios (i.e., random forgetting).

\begin{table*}[t!]
\centering
\caption{\small\textbf{Performance comparison of {\ours} to the baseline methods with class-wise `truck' samples removal}. The \textcolor{blue}{gap ($\Delta$)} and the (\textcolor{red}{best}) \textcolor{blue}{average gap} between each method and the Retrain model are reported.}
\resizebox{\textwidth}{!}{
\begin{tabular}{llllllll}
\toprule
\multirow{2}{*}{\shortstack[c]{\textbf{Dataset} \\ \textbf{ \& Model}}} & \multirow{2}{*}{\shortstack[c]{\textbf{Method}}} & \multicolumn{3}{c}{\textbf{Accuracy (\%)}} & \multicolumn{1}{c}{\textbf{Efficacy (\%)}} & \multicolumn{1}{c}{\multirow{2}{*}{\shortstack[c]{\textbf{Avg.} \\ \textbf{Gap $\downarrow$}}}} & \multirow{2}{*}{\shortstack[c]{\textbf{Comp. Cost} \\ \textbf{(PFLOPs) $\downarrow$}}} \\
\cmidrule(r){3-5} \cmidrule(r){6-6}  
&& \multicolumn{1}{c}{\textbf{Retain (\textcolor{blue}{$\Delta \downarrow$})}} & \multicolumn{1}{c}{\textbf{Unlearn (\textcolor{blue}{$\Delta \downarrow$})}} & \multicolumn{1}{c}{\textbf{Test (\textcolor{blue}{$\Delta \downarrow$})}} & \multicolumn{1}{c}{\textbf{MIA (\textcolor{blue}{$\Delta \downarrow$})}} & & \\
\midrule
\multirow{7}{*}{\shortstack[c]{CIFAR-10 \\ ResNet-18}} & Retrain  & 100.00\tiny{$\pm$ 0.00} \normalsize{(\textcolor{blue}{0.00})} & 100.00\tiny{$\pm$ 0.00} \normalsize{(\textcolor{blue}{0.00})} & 95.14\tiny{$\pm$ 0.18} \normalsize{(\textcolor{blue}{0.00})} & 100.00\tiny{$\pm$ 0.00} \normalsize{(\textcolor{blue}{0.00})} & \normalsize{\textcolor{blue}{0.00}} & 27.37 \\

\cline{2-8}
& FT   & 99.97\tiny{$\pm$ 0.01} \normalsize{(\textcolor{blue}{0.03})} & 100.00\tiny{$\pm$ 0.00} \normalsize{(\textcolor{blue}{0.00})} & 94.99\tiny{$\pm$ 0.19} \normalsize{(\textcolor{blue}{0.15})} & 100.00\tiny{$\pm$ 0.00} \normalsize{(\textcolor{blue}{0.00})} & \normalsize{\textcolor{blue}{0.04}} & 6.02 \\

& NegGrad+ &  99.98\tiny{$\pm$ 0.02} \normalsize{(\textcolor{blue}{0.02})} & 100.00\tiny{$\pm$ 0.00} \normalsize{(\textcolor{blue}{0.00})} & 95.10\tiny{$\pm$ 0.18} \normalsize{(\textcolor{blue}{0.04})} & 100.00\tiny{$\pm$ 0.00} \normalsize{(\textcolor{blue}{0.00})} & \normalsize{\textcolor{blue}{0.02}} & 9.03 \\

& $\ell_1$-sparse   & 100.00\tiny{$\pm$ 0.00} \normalsize{(\textcolor{blue}{0.00})} & 100.00\tiny{$\pm$ 0.00} \normalsize{(\textcolor{blue}{0.00})} & 95.10\tiny{$\pm$ 0.14} \normalsize{(\textcolor{blue}{0.04})} & 100.00\tiny{$\pm$ 0.00} \normalsize{(\textcolor{blue}{0.00})} & \normalsize{\textcolor{blue}{0.01}} & 4.51 \\

& SalUn  & 100.00\tiny{$\pm$ 0.00} \normalsize{(\textcolor{blue}{0.00})} & 100.00\tiny{$\pm$ 0.00} \normalsize{(\textcolor{blue}{0.00})} & 95.11\tiny{$\pm$ 0.11} \normalsize{(\textcolor{blue}{0.03})} & 100.00\tiny{$\pm$ 0.00} \normalsize{(\textcolor{blue}{0.00})} & \normalsize{\textcolor{blue}{0.01}} & 8.66 \\

& NoT  & 100.00\tiny{$\pm$ 0.00} \normalsize{(\textcolor{blue}{0.00})} & 100.00\tiny{$\pm$ 0.00} \normalsize{(\textcolor{blue}{0.00})} & 95.14\tiny{$\pm$ 0.13} \normalsize{(\textcolor{blue}{0.00})} & 100.00\tiny{$\pm$ 0.00} \normalsize{(\textcolor{blue}{0.00})} & \normalsize{\textcolor{red}{0.00}} & 6.02 \\

\cline{2-8}
\rowcolor{lightgray!50} \cellcolor{white} & {\ours}  & 100.00\tiny{$\pm$ 0.00} \normalsize{(\textcolor{blue}{0.00})} & 100.00\tiny{$\pm$ 0.00} \normalsize{(\textcolor{blue}{0.00})} & 95.18\tiny{$\pm$ 0.20} \normalsize{(\textcolor{blue}{0.04})} & 100.00\tiny{$\pm$ 0.00} \normalsize{(\textcolor{blue}{0.00})} & \normalsize{\textcolor{blue}{0.01}} & 9.03 \\

\bottomrule
\end{tabular}
}
\label{tbl_ours_classwise}
\end{table*}

\subsection{Integration of {\ours}'s CL Module with Baselines: Detailed Results}
\label{sec_additional_addon_res}
Tables~\ref{tbl_ours_addon_10p} and~\ref{tbl_ours_addon_50p} provide detailed results for integrating {\ours}'s CL module with existing baselines for 10\% and 50\% random data forgetting, respectively. Results show that adding our CL module significantly empowers the performance of existing MU methods, as measured by the average gap.

\subsection{Effect of CL Transformation on Representation Space}
\label{sec_tsne_data_aug}

Huang et al. \cite{huang2023towards} provides a theoretical framework for understanding the generalization ability of CL, emphasizing the role of data augmentation and representation properties. They show that stronger transformations\footnote{Transformations drawn from a distribution combining multiple data augmentations.} in CL yield more compact and well-separated clusters in the representation space. These insights are particularly relevant to our observations on the effect of transformation strength in the context of MU. 

In MU, the goal of CL is not to enhance clustering, but rather to weaken the clustering of forget data. Looser clustering allows their representations to be pushed toward clusters of other retain representations that are semantically similar to the forget representations, even if those clusters are different from the original ones of the forget data. Therefore, strong augmentations reduce the likelihood of cluster overlap, even in the presence of false negative samples.

As illustrated in Figure~\ref{fig_diff_aug}, weaker transformations lead to more effective unlearning. This is because they result in less tightly clustered representations, allowing forget data to overlap with retain data from other classes, particularity in the absence of supervised signals for forget data. Figure~\ref{fig_tnse_augmentation} further shows that stronger CL transformations (e.g., $\mathcal{T_{\textrm{CL}}}=$ CHJGN) produce tighter clusters of forget data compared to weaker ones (e.g., $\mathcal{T_{\textrm{CL}}}=$ CHN), consistent with the findings of \cite{huang2023towards}. However, from an MU perspective, such tight clustering impedes unlearning. Hence, using simpler CL transformations generally enhances unlearning effectiveness. This inverse relationship between representation compactness and unlearning efficacy is reinforced by the results in Figure~\ref{fig_diff_aug}.

\begin{figure}[t!]
\centering
\includegraphics[trim={0.1cm 0.3cm 0.2cm 0.1cm},clip, width=0.8\linewidth]{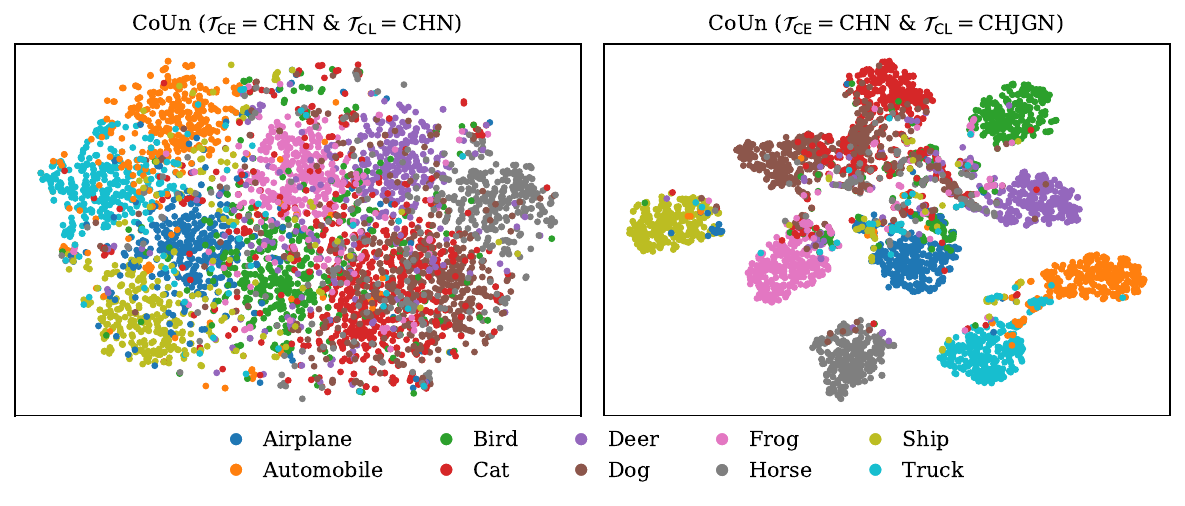}
\vspace*{-3mm}
\caption{\small\textbf{Effect of CL transformation on forget data representations.} t-SNE visualizations of forget data representations extracted from the penultimate layer of $\theta_u$ on CIFAR-10 with ResNet-18 under a 50\% forget data ratio. \textit{Left}: {\ours} with a simple CL transformation ($\mathcal{T}_{\textrm{CL}}=$ CHN). \textit{Right}: {\ours} with a strong CL transformation ($\mathcal{T}_{\textrm{CL}}=$ CHJGN). The transformation for supervised learning is fixed at $\mathcal{T}_{\textrm{CE}}=$ CHN. The CHJGN transformation applies a sequence of operations: crop, horizontal flip, color jitter, grayscale, and color normalization. Stronger transformations lead to tighter clustering of forget data representations, reducing unlearning effectiveness. Additionally, tighter clustering leads to less cluster overlap compared to weaker clustering (from simpler $\mathcal{T}_{\textrm{CL}}$).}
\label{fig_tnse_augmentation}
\end{figure}

\begin{table*}[t!]
\centering
\caption{\small\textbf{Performance comparison when {\ours}'s contrastive learning (CL) module is integrated into baselines} under \textbf{10\% random data forgetting.} The \textcolor{blue}{gap ($\Delta$)} and the (\textcolor{red}{best}) \textcolor{blue}{average gap} between each method and the Retrain model are reported.}
\resizebox{\textwidth}{!}{
\begin{tabular}{llcllllll}
\toprule
\multirow{2}{*}{\shortstack[c]{\textbf{Dataset} \\ \textbf{ \& Model}}} & \multirow{2}{*}{\shortstack[c]{\textbf{Method}}} & \multirow{2}{*}{\shortstack[c]{\textbf{CL}}} & \multicolumn{3}{c}{\textbf{Accuracy (\%)}} & \multicolumn{1}{c}{\textbf{Efficacy (\%)}} & \multicolumn{1}{c}{\multirow{2}{*}{\shortstack[c]{\textbf{Avg.} \\ \textbf{Gap $\downarrow$}}}} & \multirow{2}{*}{\shortstack[c]{\textbf{Comp. Cost} \\ \textbf{(PFLOPs) $\downarrow$}}} \\
\cmidrule(r){4-6} \cmidrule(r){7-7} 
&&& \multicolumn{1}{c}{\textbf{Retain (\textcolor{blue}{$\Delta \downarrow$})}} & \multicolumn{1}{c}{\textbf{Unlearn (\textcolor{blue}{$\Delta \downarrow$})}} & \multicolumn{1}{c}{\textbf{Test (\textcolor{blue}{$\Delta \downarrow$})}} & \multicolumn{1}{c}{\textbf{MIA (\textcolor{blue}{$\Delta \downarrow$})}} & & \\
\midrule
\multirow{9}{*}{\shortstack[c]{CIFAR-10 \\ ResNet-18}} & Retrain & - & 100.00\tiny{$\pm$ 0.00} \normalsize{(\textcolor{blue}{0.00})} & 4.81\tiny{$\pm$ 0.27} \normalsize{(\textcolor{blue}{0.00})} & 94.67\tiny{$\pm$ 0.24} \normalsize{(\textcolor{blue}{0.00})} & 11.02\tiny{$\pm$ 0.58} \normalsize{(\textcolor{blue}{0.00})} & \normalsize{\textcolor{blue}{0.00}} & 27.37 \\

\cline{2-9}
& \multirow{2}{*}{\shortstack[c]{FT}} & \xmark & 99.99\tiny{$\pm$ 0.00} \normalsize{(\textcolor{blue}{0.01})} & 3.76\tiny{$\pm$ 0.31} \normalsize{(\textcolor{blue}{1.05})} & 94.70\tiny{$\pm$ 0.14} \normalsize{(\textcolor{blue}{0.03})} & 9.51\tiny{$\pm$ 0.28} \normalsize{(\textcolor{blue}{1.51})} & \normalsize{\textcolor{blue}{0.65}} & 6.32 \\

&&\cmark & 99.99\tiny{$\pm$ 0.00} \normalsize{(\textcolor{blue}{0.01})} & 4.12\tiny{$\pm$ 0.31} \normalsize{(\textcolor{blue}{0.69})} & 94.57\tiny{$\pm$ 0.24} \normalsize{(\textcolor{blue}{0.10})} & 10.81\tiny{$\pm$ 0.31} \normalsize{(\textcolor{blue}{0.21})} & \normalsize{\textcolor{red}{0.25}} & 8.02 \\

\cline{2-9}
& \multirow{2}{*}{\shortstack[c]{NegGrad+}} & \xmark  & 99.95\tiny{$\pm$ 0.02} \normalsize{(\textcolor{blue}{0.05})} & 4.82\tiny{$\pm$ 0.24} \normalsize{(\textcolor{blue}{0.01})} & 94.32\tiny{$\pm$ 0.23} \normalsize{(\textcolor{blue}{0.35})} & 9.09\tiny{$\pm$ 0.30} \normalsize{(\textcolor{blue}{1.93})} & \normalsize{\textcolor{blue}{0.58}} & 6.02 \\

&&\cmark  & 99.98\tiny{$\pm$ 0.01} \normalsize{(\textcolor{blue}{0.02})} & 5.21\tiny{$\pm$ 0.29} \normalsize{(\textcolor{blue}{0.40})} & 94.55\tiny{$\pm$ 0.16} \normalsize{(\textcolor{blue}{0.12})} & 10.30\tiny{$\pm$ 0.46} \normalsize{(\textcolor{blue}{0.72})} & \normalsize{\textcolor{red}{0.32}} & 7.52 \\

\cline{2-9}
& \multirow{2}{*}{\shortstack[c]{$\ell_1$-sparse}} & \xmark & 99.97\tiny{$\pm$ 0.01} \normalsize{(\textcolor{blue}{0.03})} & 5.40\tiny{$\pm$ 0.40} \normalsize{(\textcolor{blue}{0.59})} & 93.81\tiny{$\pm$ 0.21} \normalsize{(\textcolor{blue}{0.86})} & 10.97\tiny{$\pm$ 0.35} \normalsize{(\textcolor{blue}{0.05})} & \normalsize{\textcolor{blue}{0.38}} & 6.92 \\

&&\cmark & 100.00\tiny{$\pm$ 0.00} \normalsize{(\textcolor{blue}{0.00})} & 4.35\tiny{$\pm$ 0.19} \normalsize{(\textcolor{blue}{0.46})} & 94.48\tiny{$\pm$ 0.18} \normalsize{(\textcolor{blue}{0.19})} & 10.80\tiny{$\pm$ 0.40} \normalsize{(\textcolor{blue}{0.22})} & \normalsize{\textcolor{red}{0.22}} & 7.62 \\

\cline{2-9}
& \multirow{2}{*}{\shortstack[c]{NoT}} & \xmark & 99.99\tiny{$\pm$ 0.00} \normalsize{(\textcolor{blue}{0.01})} & 4.19\tiny{$\pm$ 0.25} \normalsize{(\textcolor{blue}{0.62})} & 94.65\tiny{$\pm$ 0.24} \normalsize{(\textcolor{blue}{0.02})} & 10.45\tiny{$\pm$ 0.51} \normalsize{(\textcolor{blue}{0.57})} & \normalsize{\textcolor{blue}{0.30}} & 7.52 \\

&&\cmark & 100.00\tiny{$\pm$ 0.00} \normalsize{(\textcolor{blue}{0.00})} & 4.20\tiny{$\pm$ 0.24} \normalsize{(\textcolor{blue}{0.61})} & 94.65\tiny{$\pm$ 0.17} \normalsize{(\textcolor{blue}{0.02})} & 11.08\tiny{$\pm$ 0.38} \normalsize{(\textcolor{blue}{0.06})} & \normalsize{\textcolor{red}{0.17}} & 7.52 \\

\hline
\hline
\multirow{9}{*}{\shortstack[c]{CIFAR-100 \\ ResNet-18}} & Retrain & -  & 99.98\tiny{$\pm$ 0.00} \normalsize{(\textcolor{blue}{0.00})} & 24.26\tiny{$\pm$ 0.53} \normalsize{(\textcolor{blue}{0.00})} & 75.56\tiny{$\pm$ 0.26} \normalsize{(\textcolor{blue}{0.00})} & 48.44\tiny{$\pm$ 0.36} \normalsize{(\textcolor{blue}{0.00})} & \normalsize{\textcolor{blue}{0.00}} & 27.37 \\

\cline{2-9}
& \multirow{2}{*}{\shortstack[c]{FT}} & \xmark & 99.97\tiny{$\pm$ 0.00} \normalsize{(\textcolor{blue}{0.01})} & 16.39\tiny{$\pm$ 0.60} \normalsize{(\textcolor{blue}{7.87})} & 76.75\tiny{$\pm$ 0.25} \normalsize{(\textcolor{blue}{1.19})} & 44.06\tiny{$\pm$ 0.58} \normalsize{(\textcolor{blue}{4.38})} & \normalsize{\textcolor{blue}{3.36}} & 7.22 \\

&&\cmark & 99.97\tiny{$\pm$ 0.00} \normalsize{(\textcolor{blue}{0.01})} & 22.01\tiny{$\pm$ 0.44} \normalsize{(\textcolor{blue}{2.25})} & 72.88\tiny{$\pm$ 0.39} \normalsize{(\textcolor{blue}{2.68})} & 47.82\tiny{$\pm$ 0.96} \normalsize{(\textcolor{blue}{0.62})} & \normalsize{\textcolor{red}{1.39}} & 9.63 \\

\cline{2-9}
& \multirow{2}{*}{\shortstack[c]{NegGrad+}} & \xmark & 99.96\tiny{$\pm$ 0.01} \normalsize{(\textcolor{blue}{0.02})} & 30.09\tiny{$\pm$ 0.41} \normalsize{(\textcolor{blue}{5.83})} & 75.46\tiny{$\pm$ 0.36} \normalsize{(\textcolor{blue}{0.10})} & 47.72\tiny{$\pm$ 0.32} \normalsize{(\textcolor{blue}{0.72})} & \normalsize{\textcolor{blue}{1.67}} & 7.62 \\

&&\cmark  & 99.98\tiny{$\pm$ 0.00} \normalsize{(\textcolor{blue}{0.00})} & 22.48\tiny{$\pm$ 0.56} \normalsize{(\textcolor{blue}{1.78})} & 74.52\tiny{$\pm$ 0.26} \normalsize{(\textcolor{blue}{1.04})} & 48.32\tiny{$\pm$ 0.54} \normalsize{(\textcolor{blue}{0.12})} & \normalsize{\textcolor{red}{0.74}} & 12.53 \\

\cline{2-9}
& \multirow{2}{*}{\shortstack[c]{$\ell_1$-sparse}} & \xmark & 99.95\tiny{$\pm$ 0.01} \normalsize{(\textcolor{blue}{0.03})} & 23.94\tiny{$\pm$ 0.50} \normalsize{(\textcolor{blue}{0.32})} & 74.95\tiny{$\pm$ 0.32} \normalsize{(\textcolor{blue}{0.61})} & 42.81\tiny{$\pm$ 0.56} \normalsize{(\textcolor{blue}{5.63})} & \normalsize{\textcolor{blue}{1.65}} & 7.22 \\

&&\cmark & 99.96\tiny{$\pm$ 0.01} \normalsize{(\textcolor{blue}{0.02})} & 24.57\tiny{$\pm$ 0.38} \normalsize{(\textcolor{blue}{0.31})} & 74.46\tiny{$\pm$ 0.20} \normalsize{(\textcolor{blue}{1.10})} & 44.62\tiny{$\pm$ 0.61} \normalsize{(\textcolor{blue}{3.82})} & \normalsize{\textcolor{red}{1.31}} & 9.63 \\

\cline{2-9}
& \multirow{2}{*}{\shortstack[c]{NoT}} & \xmark & 99.97\tiny{$\pm$ 0.01} \normalsize{(\textcolor{blue}{0.01})} & 17.99\tiny{$\pm$ 0.40} \normalsize{(\textcolor{blue}{6.27})} & 76.27\tiny{$\pm$ 0.24} \normalsize{(\textcolor{blue}{0.71})} & 44.28\tiny{$\pm$ 0.57} \normalsize{(\textcolor{blue}{4.16})} & \normalsize{\textcolor{blue}{2.79}} & 7.22 \\

&&\cmark & 99.97\tiny{$\pm$ 0.00} \normalsize{(\textcolor{blue}{0.01})} & 21.53\tiny{$\pm$ 0.53} \normalsize{(\textcolor{blue}{2.73})} & 73.61\tiny{$\pm$ 0.39} \normalsize{(\textcolor{blue}{1.95})} & 46.84\tiny{$\pm$ 0.80} \normalsize{(\textcolor{blue}{1.60})} & \normalsize{\textcolor{red}{1.57}} & 9.63 \\

\hline
\hline

\multirow{9}{*}{\shortstack[c]{TinyImageNet \\ ResNet-18}} & Retrain & - & 99.98\tiny{$\pm$ 0.00} \normalsize{(\textcolor{blue}{0.00})} & 36.16\tiny{$\pm$ 0.35} \normalsize{(\textcolor{blue}{0.00})} & 63.82\tiny{$\pm$ 0.20} \normalsize{(\textcolor{blue}{0.00})} & 63.73\tiny{$\pm$ 0.42} \normalsize{(\textcolor{blue}{0.00})} & \normalsize{\textcolor{blue}{0.00}} & 218.98 \\

\cline{2-9}
& \multirow{2}{*}{\shortstack[c]{FT}} & \xmark & 99.98\tiny{$\pm$ 0.00} \normalsize{(\textcolor{blue}{0.00})} & 32.76\tiny{$\pm$ 0.42} \normalsize{(\textcolor{blue}{3.40})} & 64.65\tiny{$\pm$ 0.29} \normalsize{(\textcolor{blue}{0.83})} & 56.93\tiny{$\pm$ 0.59} \normalsize{(\textcolor{blue}{6.80})} & \normalsize{\textcolor{blue}{2.76}} & 60.16 \\

&&\cmark  & 99.95\tiny{$\pm$ 0.01} \normalsize{(\textcolor{blue}{0.03})} & 35.10\tiny{$\pm$ 0.30} \normalsize{(\textcolor{blue}{1.06})} & 63.27\tiny{$\pm$ 0.12} \normalsize{(\textcolor{blue}{0.55})} & 57.57\tiny{$\pm$ 0.17} \normalsize{(\textcolor{blue}{6.16})} & \normalsize{\textcolor{red}{1.95}} & 80.21 \\

\cline{2-9}
& \multirow{2}{*}{\shortstack[c]{NegGrad+}} & \xmark & 99.98\tiny{$\pm$ 0.00} \normalsize{(\textcolor{blue}{0.00})} & 38.01\tiny{$\pm$ 0.32} \normalsize{(\textcolor{blue}{1.85})} & 64.68\tiny{$\pm$ 0.26} \normalsize{(\textcolor{blue}{0.86})} & 57.84\tiny{$\pm$ 0.47} \normalsize{(\textcolor{blue}{5.89})} & \normalsize{\textcolor{blue}{2.15}} & 80.21 \\

&&\cmark & 99.96\tiny{$\pm$ 0.01} \normalsize{(\textcolor{blue}{0.02})} & 36.68\tiny{$\pm$ 0.13} \normalsize{(\textcolor{blue}{0.52})} & 63.73\tiny{$\pm$ 0.17} \normalsize{(\textcolor{blue}{0.09})} & 57.87\tiny{$\pm$ 0.36} \normalsize{(\textcolor{blue}{5.86})} & \normalsize{\textcolor{red}{1.62}} & 100.27 \\

\cline{2-9}
& \multirow{2}{*}{\shortstack[c]{$\ell_1$-sparse}} & \xmark & 99.96\tiny{$\pm$ 0.00} \normalsize{(\textcolor{blue}{0.02})} & 36.96\tiny{$\pm$ 0.37} \normalsize{(\textcolor{blue}{0.80})} & 62.62\tiny{$\pm$ 0.39} \normalsize{(\textcolor{blue}{1.20})} & 56.74\tiny{$\pm$ 0.46} \normalsize{(\textcolor{blue}{6.99})} & \normalsize{\textcolor{blue}{2.25}} & 60.16 \\

&&\cmark & 99.97\tiny{$\pm$ 0.00} \normalsize{(\textcolor{blue}{0.01})} & 36.33\tiny{$\pm$ 0.41} \normalsize{(\textcolor{blue}{0.17})} & 63.27\tiny{$\pm$ 0.32} \normalsize{(\textcolor{blue}{0.55})} & 56.96\tiny{$\pm$ 0.33} \normalsize{(\textcolor{blue}{6.77})} & \normalsize{\textcolor{red}{1.88}} & 80.21 \\

\cline{2-9}
& \multirow{2}{*}{\shortstack[c]{NoT}} & \xmark & 99.98\tiny{$\pm$ 0.00} \normalsize{(\textcolor{blue}{0.00})} & 35.64\tiny{$\pm$ 0.71} \normalsize{(\textcolor{blue}{0.52})} & 63.66\tiny{$\pm$ 0.70} \normalsize{(\textcolor{blue}{0.16})} & 56.08\tiny{$\pm$ 0.93} \normalsize{(\textcolor{blue}{7.65})} & \normalsize{\textcolor{blue}{2.08}} & 80.21 \\

&&\cmark  & 99.94\tiny{$\pm$ 0.01} \normalsize{(\textcolor{blue}{0.04})} & 36.19\tiny{$\pm$ 0.47} \normalsize{(\textcolor{blue}{0.03})} & 63.06\tiny{$\pm$ 0.33} \normalsize{(\textcolor{blue}{0.76})} & 56.70\tiny{$\pm$ 0.41} \normalsize{(\textcolor{blue}{7.03})} & \normalsize{\textcolor{red}{1.97}} & 80.21 \\

\hline
\hline
\multirow{9}{*}{\shortstack[c]{CIFAR-100 \\ VGG-16}} & Retrain & - & 99.75\tiny{$\pm$ 0.07} \normalsize{(\textcolor{blue}{0.00})} & 33.23\tiny{$\pm$ 0.38} \normalsize{(\textcolor{blue}{0.00})} & 67.07\tiny{$\pm$ 0.57} \normalsize{(\textcolor{blue}{0.00})} & 40.69\tiny{$\pm$ 0.40} \normalsize{(\textcolor{blue}{0.00})} & \normalsize{\textcolor{blue}{0.00}} & 15.58 \\

\cline{2-9}
& \multirow{2}{*}{\shortstack[c]{FT}} & \xmark & 99.26\tiny{$\pm$ 0.05} \normalsize{(\textcolor{blue}{0.49})} & 26.02\tiny{$\pm$ 0.55} \normalsize{(\textcolor{blue}{7.21})} & 68.42\tiny{$\pm$ 0.32} \normalsize{(\textcolor{blue}{1.35})} & 35.51\tiny{$\pm$ 0.62} \normalsize{(\textcolor{blue}{5.18})} & \normalsize{\textcolor{blue}{3.56}} & 3.42 \\

&&\cmark & 99.82\tiny{$\pm$ 0.01} \normalsize{(\textcolor{blue}{0.07})} & 32.37\tiny{$\pm$ 0.46} \normalsize{(\textcolor{blue}{0.86})} & 63.80\tiny{$\pm$ 0.35} \normalsize{(\textcolor{blue}{3.27})} & 39.64\tiny{$\pm$ 0.25} \normalsize{(\textcolor{blue}{1.05})} & \normalsize{\textcolor{red}{1.31}} & 5.71 \\

\cline{2-9}
& \multirow{2}{*}{\shortstack[c]{NegGrad+}} & \xmark & 94.92\tiny{$\pm$ 0.41} \normalsize{(\textcolor{blue}{4.83})} & 35.44\tiny{$\pm$ 0.62} \normalsize{(\textcolor{blue}{2.21})} & 65.54\tiny{$\pm$ 0.39} \normalsize{(\textcolor{blue}{1.53})} & 40.67\tiny{$\pm$ 0.60} \normalsize{(\textcolor{blue}{0.02})} & \normalsize{\textcolor{blue}{2.15}} & 3.42 \\

&&\cmark  & 98.43\tiny{$\pm$ 0.44} \normalsize{(\textcolor{blue}{1.32})} & 34.65\tiny{$\pm$ 0.91} \normalsize{(\textcolor{blue}{1.42})} & 65.98\tiny{$\pm$ 0.51} \normalsize{(\textcolor{blue}{1.09})} & 39.12\tiny{$\pm$ 1.29} \normalsize{(\textcolor{blue}{1.57})} & \normalsize{\textcolor{red}{1.35}} & 5.71 \\

\cline{2-9}
& \multirow{2}{*}{\shortstack[c]{$\ell_1$-sparse}} & \xmark & 99.27\tiny{$\pm$ 0.04} \normalsize{(\textcolor{blue}{0.48})} & 26.96\tiny{$\pm$ 0.66} \normalsize{(\textcolor{blue}{6.27})} & 68.01\tiny{$\pm$ 0.37} \normalsize{(\textcolor{blue}{0.94})} & 35.31\tiny{$\pm$ 0.50} \normalsize{(\textcolor{blue}{5.38})} & \normalsize{\textcolor{blue}{3.27}} & 3.42 \\

&&\cmark & 98.94\tiny{$\pm$ 0.05} \normalsize{(\textcolor{blue}{0.81})} & 29.70\tiny{$\pm$ 0.55} \normalsize{(\textcolor{blue}{3.53})} & 66.12\tiny{$\pm$ 0.28} \normalsize{(\textcolor{blue}{0.95})} & 37.28\tiny{$\pm$ 0.60} \normalsize{(\textcolor{blue}{3.41})} & \normalsize{\textcolor{red}{2.17}} & 5.71 \\

\cline{2-9}
& \multirow{2}{*}{\shortstack[c]{NoT}} & \xmark & 96.17\tiny{$\pm$ 4.28} \normalsize{(\textcolor{blue}{3.58})} & 30.11\tiny{$\pm$ 3.02} \normalsize{(\textcolor{blue}{3.12})} & 66.75\tiny{$\pm$ 1.73} \normalsize{(\textcolor{blue}{0.32})} & 36.47\tiny{$\pm$ 1.18} \normalsize{(\textcolor{blue}{4.22})} & \normalsize{\textcolor{blue}{2.81}} & 4.28 \\

&&\cmark & 98.72\tiny{$\pm$ 0.43} \normalsize{(\textcolor{blue}{1.03})} & 35.25\tiny{$\pm$ 1.96} \normalsize{(\textcolor{blue}{2.02})} & 61.90\tiny{$\pm$ 1.39} \normalsize{(\textcolor{blue}{5.17})} & 40.51\tiny{$\pm$ 0.94} \normalsize{(\textcolor{blue}{0.18})} & \normalsize{\textcolor{red}{2.10}} & 4.57 \\

\hline
\hline
\multirow{9}{*}{\shortstack[c]{CIFAR-100 \\ ViT}} & Retrain & - & 99.97\tiny{$\pm$ 0.00} \normalsize{(\textcolor{blue}{0.00})} & 38.73\tiny{$\pm$ 0.69} \normalsize{(\textcolor{blue}{0.00})} & 61.89\tiny{$\pm$ 0.62} \normalsize{(\textcolor{blue}{0.00})} & 61.75\tiny{$\pm$ 0.33} \normalsize{(\textcolor{blue}{0.00})} & \normalsize{\textcolor{blue}{0.00}} & 86.83 \\

\cline{2-9}
& \multirow{2}{*}{\shortstack[c]{FT}} & \xmark & 99.78\tiny{$\pm$ 0.04} \normalsize{(\textcolor{blue}{0.19})} & 10.83\tiny{$\pm$ 0.41} \normalsize{(\textcolor{blue}{27.90})} & 61.12\tiny{$\pm$ 0.45} \normalsize{(\textcolor{blue}{0.77})} & 31.50\tiny{$\pm$ 0.42} \normalsize{(\textcolor{blue}{30.25})} & \normalsize{\textcolor{blue}{14.78}} & 5.79 \\

&&\cmark & 99.91\tiny{$\pm$ 0.03} \normalsize{(\textcolor{blue}{0.06})} & 36.81\tiny{$\pm$ 1.08} \normalsize{(\textcolor{blue}{1.92})} & 56.49\tiny{$\pm$ 0.55} \normalsize{(\textcolor{blue}{5.40})} & 53.92\tiny{$\pm$ 0.42} \normalsize{(\textcolor{blue}{7.83})} & \normalsize{\textcolor{red}{3.80}} & 19.29 \\

\cline{2-9}
& \multirow{2}{*}{\shortstack[c]{NegGrad+}} & \xmark  & 99.88\tiny{$\pm$ 0.03} \normalsize{(\textcolor{blue}{0.09})} & 45.26\tiny{$\pm$ 0.41} \normalsize{(\textcolor{blue}{6.53})} & 59.33\tiny{$\pm$ 0.64} \normalsize{(\textcolor{blue}{2.56})} & 55.00\tiny{$\pm$ 0.40} \normalsize{(\textcolor{blue}{6.75})} & \normalsize{\textcolor{blue}{3.98}} & 11.58 \\

&&\cmark  & 99.96\tiny{$\pm$ 0.01} \normalsize{(\textcolor{blue}{0.01})} & 38.71\tiny{$\pm$ 0.46} \normalsize{(\textcolor{blue}{0.02})} & 59.07\tiny{$\pm$ 0.31} \normalsize{(\textcolor{blue}{2.82})} & 52.38\tiny{$\pm$ 0.43} \normalsize{(\textcolor{blue}{9.37})} & \normalsize{\textcolor{red}{3.05}} & 24.12 \\

\cline{2-9}
& \multirow{2}{*}{\shortstack[c]{$\ell_1$-sparse}} & \xmark & 99.32\tiny{$\pm$ 0.04} \normalsize{(\textcolor{blue}{0.65})} & 31.71\tiny{$\pm$ 0.52} \normalsize{(\textcolor{blue}{7.02})} & 63.33\tiny{$\pm$ 0.32} \normalsize{(\textcolor{blue}{1.44})} & 46.49\tiny{$\pm$ 0.82} \normalsize{(\textcolor{blue}{15.26})} & \normalsize{\textcolor{blue}{6.09}} & 14.47 \\

&&\cmark  & 99.63\tiny{$\pm$ 0.03} \normalsize{(\textcolor{blue}{0.34})} & 39.07\tiny{$\pm$ 0.53} \normalsize{(\textcolor{blue}{0.34})} & 58.18\tiny{$\pm$ 0.39} \normalsize{(\textcolor{blue}{3.71})} & 50.26\tiny{$\pm$ 0.24} \normalsize{(\textcolor{blue}{11.49})} & \normalsize{\textcolor{red}{3.97}} & 19.29 \\

\cline{2-9}
& \multirow{2}{*}{\shortstack[c]{NoT}} & \xmark & 99.89\tiny{$\pm$ 0.02} \normalsize{(\textcolor{blue}{0.08})} & 20.29\tiny{$\pm$ 1.93} \normalsize{(\textcolor{blue}{18.44})} & 61.82\tiny{$\pm$ 0.29} \normalsize{(\textcolor{blue}{0.07})} & 43.55\tiny{$\pm$ 1.36} \normalsize{(\textcolor{blue}{18.20})} & \normalsize{\textcolor{blue}{9.20}} & 8.68 \\

&& \cmark  & 99.93\tiny{$\pm$ 0.01} \normalsize{(\textcolor{blue}{0.04})} & 37.90\tiny{$\pm$ 0.72} \normalsize{(\textcolor{blue}{0.83})} & 58.85\tiny{$\pm$ 0.35} \normalsize{(\textcolor{blue}{3.04})} & 56.50\tiny{$\pm$ 0.77} \normalsize{(\textcolor{blue}{5.25})} & \normalsize{\textcolor{red}{2.29}} & 19.29 \\

\bottomrule
\end{tabular}
}
\label{tbl_ours_addon_10p}
\end{table*}

\begin{table*}[t!]
\centering
\caption{\small\textbf{Performance comparison when {\ours}'s contrastive learning (CL) module is integrated into baselines} under \textbf{50\% random data forgetting.} The \textcolor{blue}{gap ($\Delta$)} and the (\textcolor{red}{best}) \textcolor{blue}{average gap} between each method and the Retrain model are reported.}
\resizebox{\textwidth}{!}{
\begin{tabular}{llcllllll}
\toprule
\multirow{2}{*}{\shortstack[c]{\textbf{Dataset} \\ \textbf{ \& Model}}} & \multirow{2}{*}{\shortstack[c]{\textbf{Method}}} & \multirow{2}{*}{\shortstack[c]{\textbf{CL}}} & \multicolumn{3}{c}{\textbf{Accuracy (\%)}} & \multicolumn{1}{c}{\textbf{Efficacy (\%)}} & \multicolumn{1}{c}{\multirow{2}{*}{\shortstack[c]{\textbf{Avg.} \\ \textbf{Gap $\downarrow$}}}} & \multirow{2}{*}{\shortstack[c]{\textbf{Comp. Cost} \\ \textbf{(PFLOPs) $\downarrow$}}} \\
\cmidrule(r){4-6} \cmidrule(r){7-7} 
&&& \multicolumn{1}{c}{\textbf{Retain (\textcolor{blue}{$\Delta \downarrow$})}} & \multicolumn{1}{c}{\textbf{Unlearn (\textcolor{blue}{$\Delta \downarrow$})}} & \multicolumn{1}{c}{\textbf{Test (\textcolor{blue}{$\Delta \downarrow$})}} & \multicolumn{1}{c}{\textbf{MIA (\textcolor{blue}{$\Delta \downarrow$})}} & & \\
\midrule
\multirow{9}{*}{\shortstack[c]{CIFAR-10 \\ ResNet-18}} & Retrain & - & 100.00\tiny{$\pm$ 0.00} \normalsize{(\textcolor{blue}{0.00})} & 7.29\tiny{$\pm$ 0.36} \normalsize{(\textcolor{blue}{0.00})} & 92.28\tiny{$\pm$ 0.23} \normalsize{(\textcolor{blue}{0.00})} & 17.33\tiny{$\pm$ 0.65} \normalsize{(\textcolor{blue}{0.00})} & \normalsize{\textcolor{blue}{0.00}} & 15.24 \\

\cline{2-9}
& \multirow{2}{*}{\shortstack[c]{FT}} & \xmark & 99.38\tiny{$\pm$ 0.24} \normalsize{(\textcolor{blue}{0.62})} & 6.32\tiny{$\pm$ 0.41} \normalsize{(\textcolor{blue}{0.97})} & 91.91\tiny{$\pm$ 0.41} \normalsize{(\textcolor{blue}{0.37})} & 12.64\tiny{$\pm$ 0.51} \normalsize{(\textcolor{blue}{4.69})} & \normalsize{\textcolor{blue}{1.66}} & 2.51 \\

&&\cmark & 99.97\tiny{$\pm$ 0.03} \normalsize{(\textcolor{blue}{0.03})} & 6.19\tiny{$\pm$ 0.30} \normalsize{(\textcolor{blue}{1.10})} & 92.36\tiny{$\pm$ 0.26} \normalsize{(\textcolor{blue}{0.08})} & 16.94\tiny{$\pm$ 0.48} \normalsize{(\textcolor{blue}{0.39})} & \normalsize{\textcolor{red}{0.40}} & 3.35 \\

\cline{2-9}
& \multirow{2}{*}{\shortstack[c]{NegGrad+}} & \xmark & 100.00\tiny{$\pm$ 0.00} \normalsize{(\textcolor{blue}{0.00})} & 4.06\tiny{$\pm$ 0.20} \normalsize{(\textcolor{blue}{0.75})} & 93.81\tiny{$\pm$ 0.23} \normalsize{(\textcolor{blue}{0.86})} & 9.05\tiny{$\pm$ 0.22} \normalsize{(\textcolor{blue}{1.97})} & \normalsize{\textcolor{blue}{0.90}} & 5.58 \\

&&\cmark & 100.00\tiny{$\pm$ 0.00} \normalsize{(\textcolor{blue}{0.00})} & 4.65\tiny{$\pm$ 0.25} \normalsize{(\textcolor{blue}{0.16})} & 93.45\tiny{$\pm$ 0.24} \normalsize{(\textcolor{blue}{1.22})} & 11.45\tiny{$\pm$ 0.35} \normalsize{(\textcolor{blue}{0.43})} & \normalsize{\textcolor{red}{0.45}} & 5.58 \\

\cline{2-9}
& \multirow{2}{*}{\shortstack[c]{$\ell_1$-sparse}} & \xmark & 99.77\tiny{$\pm$ 0.03} \normalsize{(\textcolor{blue}{0.23})} & 9.02\tiny{$\pm$ 0.16} \normalsize{(\textcolor{blue}{1.73})} & 90.66\tiny{$\pm$ 0.24} \normalsize{(\textcolor{blue}{1.62})} & 16.05\tiny{$\pm$ 0.29} \normalsize{(\textcolor{blue}{1.28})} & \normalsize{\textcolor{blue}{1.22}} & 4.19 \\

&&\cmark & 100.00\tiny{$\pm$ 0.00} \normalsize{(\textcolor{blue}{0.00})} & 6.91\tiny{$\pm$ 0.23} \normalsize{(\textcolor{blue}{0.38})} & 92.30\tiny{$\pm$ 0.19} \normalsize{(\textcolor{blue}{0.02})} & 17.14\tiny{$\pm$ 0.52} \normalsize{(\textcolor{blue}{0.19})} & \normalsize{\textcolor{red}{0.15}} & 4.47 \\

\cline{2-9}
& \multirow{2}{*}{\shortstack[c]{NoT}} & \xmark & 99.98\tiny{$\pm$ 0.01} \normalsize{(\textcolor{blue}{0.02})} & 5.95\tiny{$\pm$ 0.18} \normalsize{(\textcolor{blue}{1.34})} & 92.84\tiny{$\pm$ 0.18} \normalsize{(\textcolor{blue}{0.56})} & 13.90\tiny{$\pm$ 0.37} \normalsize{(\textcolor{blue}{3.43})} & \normalsize{\textcolor{blue}{1.34}} & 3.35 \\

&&\cmark & 99.99\tiny{$\pm$ 0.01} \normalsize{(\textcolor{blue}{0.01})} & 6.84\tiny{$\pm$ 1.21} \normalsize{(\textcolor{blue}{0.45})} & 91.64\tiny{$\pm$ 0.73} \normalsize{(\textcolor{blue}{0.64})} & 17.00\tiny{$\pm$ 1.46} \normalsize{(\textcolor{blue}{0.33})} & \normalsize{\textcolor{red}{0.36}} & 5.58 \\

\hline
\hline
\multirow{9}{*}{\shortstack[c]{CIFAR-100 \\ ResNet-18}} & Retrain & -  & 99.98\tiny{$\pm$ 0.01} \normalsize{(\textcolor{blue}{0.00})} & 31.41\tiny{$\pm$ 0.40} \normalsize{(\textcolor{blue}{0.00})} & 68.41\tiny{$\pm$ 0.34} \normalsize{(\textcolor{blue}{0.00})} & 58.35\tiny{$\pm$ 0.53} \normalsize{(\textcolor{blue}{0.00})} & \normalsize{\textcolor{blue}{0.00}} & 15.24 \\

\cline{2-9}
& \multirow{2}{*}{\shortstack[c]{FT}} & \xmark & 99.98\tiny{$\pm$ 0.01} \normalsize{(\textcolor{blue}{0.00})} & 17.36\tiny{$\pm$ 0.19} \normalsize{(\textcolor{blue}{14.05})} & 74.16\tiny{$\pm$ 0.39} \normalsize{(\textcolor{blue}{5.75})} & 50.60\tiny{$\pm$ 0.42} \normalsize{(\textcolor{blue}{7.75})} & \normalsize{\textcolor{blue}{6.89}} & 4.19 \\

&&\cmark & 99.98\tiny{$\pm$ 0.01} \normalsize{(\textcolor{blue}{0.00})} & 31.43\tiny{$\pm$ 1.75} \normalsize{(\textcolor{blue}{0.02})} & 65.60\tiny{$\pm$ 0.71} \normalsize{(\textcolor{blue}{2.81})} & 55.99\tiny{$\pm$ 1.18} \normalsize{(\textcolor{blue}{2.36})} & \normalsize{\textcolor{red}{1.30}} & 5.58 \\

\cline{2-9}
& \multirow{2}{*}{\shortstack[c]{NegGrad+}} & \xmark  & 99.98\tiny{$\pm$ 0.01} \normalsize{(\textcolor{blue}{0.00})} & 26.32\tiny{$\pm$ 0.21} \normalsize{(\textcolor{blue}{5.09})} & 71.98\tiny{$\pm$ 0.30} \normalsize{(\textcolor{blue}{3.57})} & 52.32\tiny{$\pm$ 0.36} \normalsize{(\textcolor{blue}{6.03})} & \normalsize{\textcolor{blue}{3.67}} & 5.36 \\

&&\cmark &  99.98\tiny{$\pm$ 0.01} \normalsize{(\textcolor{blue}{0.00})} & 31.55\tiny{$\pm$ 0.39} \normalsize{(\textcolor{blue}{0.14})} & 68.34\tiny{$\pm$ 0.30} \normalsize{(\textcolor{blue}{0.07})} & 56.34\tiny{$\pm$ 0.45} \normalsize{(\textcolor{blue}{2.01})} & \normalsize{\textcolor{red}{0.55}} & 6.70 \\

\cline{2-9}
& \multirow{2}{*}{\shortstack[c]{$\ell_1$-sparse}} & \xmark & 99.94\tiny{$\pm$ 0.02} \normalsize{(\textcolor{blue}{0.04})} & 32.26\tiny{$\pm$ 0.23} \normalsize{(\textcolor{blue}{0.85})} & 67.66\tiny{$\pm$ 0.35} \normalsize{(\textcolor{blue}{0.75})} & 51.54\tiny{$\pm$ 0.29} \normalsize{(\textcolor{blue}{6.81})} & \normalsize{\textcolor{blue}{2.11}} & 4.19 \\

&&\cmark & 99.98\tiny{$\pm$ 0.00} \normalsize{(\textcolor{blue}{0.00})} & 29.93\tiny{$\pm$ 0.27} \normalsize{(\textcolor{blue}{1.48})} & 68.35\tiny{$\pm$ 0.39} \normalsize{(\textcolor{blue}{0.06})} & 54.84\tiny{$\pm$ 0.55} \normalsize{(\textcolor{blue}{3.51})} & \normalsize{\textcolor{red}{1.26}} & 5.58 \\

\cline{2-9}
& \multirow{2}{*}{\shortstack[c]{NoT}} & \xmark & 98.64\tiny{$\pm$ 0.43} \normalsize{(\textcolor{blue}{1.34})} & 26.43\tiny{$\pm$ 0.75} \normalsize{(\textcolor{blue}{4.98})} & 67.97\tiny{$\pm$ 0.83} \normalsize{(\textcolor{blue}{0.44})} & 43.82\tiny{$\pm$ 0.60} \normalsize{(\textcolor{blue}{14.53})} & \normalsize{\textcolor{blue}{5.32}} & 2.01 \\

&&\cmark & 99.98\tiny{$\pm$ 0.01} \normalsize{(\textcolor{blue}{0.00})} & 26.01\tiny{$\pm$ 0.42} \normalsize{(\textcolor{blue}{5.40})} & 69.01\tiny{$\pm$ 0.48} \normalsize{(\textcolor{blue}{0.60})} & 54.30\tiny{$\pm$ 0.53} \normalsize{(\textcolor{blue}{4.05})} & \normalsize{\textcolor{red}{2.51}} & 5.58 \\

\hline
\hline
\multirow{9}{*}{\shortstack[c]{TinyImageNet \\ ResNet-18}} & Retrain & - & 99.99\tiny{$\pm$ 0.00} \normalsize{(\textcolor{blue}{0.00})} & 43.01\tiny{$\pm$ 0.20} \normalsize{(\textcolor{blue}{0.00})} & 57.28\tiny{$\pm$ 0.43} \normalsize{(\textcolor{blue}{0.00})} & 71.22\tiny{$\pm$ 0.17} \normalsize{(\textcolor{blue}{0.00})} & \normalsize{\textcolor{blue}{0.00}} & 121.93 \\

\cline{2-9}
& \multirow{2}{*}{\shortstack[c]{FT}} & \xmark & 99.99\tiny{$\pm$ 0.00} \normalsize{(\textcolor{blue}{0.00})} & 36.78\tiny{$\pm$ 0.18} \normalsize{(\textcolor{blue}{6.23})} & 60.59\tiny{$\pm$ 0.38} \normalsize{(\textcolor{blue}{3.31})} & 66.28\tiny{$\pm$ 0.20} \normalsize{(\textcolor{blue}{4.94})} & \normalsize{\textcolor{blue}{3.62}} & 33.50 \\

&&\cmark & 99.98\tiny{$\pm$ 0.01} \normalsize{(\textcolor{blue}{0.01})} & 43.21\tiny{$\pm$ 1.57} \normalsize{(\textcolor{blue}{0.20})} & 55.75\tiny{$\pm$ 1.34} \normalsize{(\textcolor{blue}{1.53})} & 66.59\tiny{$\pm$ 0.41} \normalsize{(\textcolor{blue}{4.63})} & \normalsize{\textcolor{red}{1.59}} & 44.66 \\

\cline{2-9}
& \multirow{2}{*}{\shortstack[c]{NegGrad+}} & \xmark & 99.99\tiny{$\pm$ 0.00} \normalsize{(\textcolor{blue}{0.00})} & 47.62\tiny{$\pm$ 0.25} \normalsize{(\textcolor{blue}{4.61})} & 58.85\tiny{$\pm$ 0.32} \normalsize{(\textcolor{blue}{1.57})} & 66.43\tiny{$\pm$ 0.33} \normalsize{(\textcolor{blue}{4.79})} & \normalsize{\textcolor{blue}{2.74}} & 33.50 \\

&&\cmark & 99.98\tiny{$\pm$ 0.01} \normalsize{(\textcolor{blue}{0.01})} & 45.78\tiny{$\pm$ 0.19} \normalsize{(\textcolor{blue}{2.77})} & 56.68\tiny{$\pm$ 0.36} \normalsize{(\textcolor{blue}{0.60})} & 66.77\tiny{$\pm$ 0.28} \normalsize{(\textcolor{blue}{4.45})} & \normalsize{\textcolor{red}{1.96}} & 55.83 \\

\cline{2-9}
& \multirow{2}{*}{\shortstack[c]{$\ell_1$-sparse}} & \xmark & 99.99\tiny{$\pm$ 0.00} \normalsize{(\textcolor{blue}{0.00})} & 38.83\tiny{$\pm$ 0.21} \normalsize{(\textcolor{blue}{4.18})} & 60.25\tiny{$\pm$ 0.30} \normalsize{(\textcolor{blue}{2.97})} & 65.82\tiny{$\pm$ 0.21} \normalsize{(\textcolor{blue}{5.40})} & \normalsize{\textcolor{blue}{3.14}} & 33.50 \\

&&\cmark  & 99.94\tiny{$\pm$ 0.01} \normalsize{(\textcolor{blue}{0.05})} & 42.11\tiny{$\pm$ 0.22} \normalsize{(\textcolor{blue}{0.90})} & 57.02\tiny{$\pm$ 0.43} \normalsize{(\textcolor{blue}{0.26})} & 65.92\tiny{$\pm$ 0.26} \normalsize{(\textcolor{blue}{5.30})} & \normalsize{\textcolor{red}{1.63}} & 44.66 \\

\cline{2-9}
& \multirow{2}{*}{\shortstack[c]{NoT}} & \xmark & 99.99\tiny{$\pm$ 0.00} \normalsize{(\textcolor{blue}{0.00})} & 40.94\tiny{$\pm$ 0.43} \normalsize{(\textcolor{blue}{2.07})} & 58.27\tiny{$\pm$ 0.39} \normalsize{(\textcolor{blue}{0.99})} & 66.23\tiny{$\pm$ 0.36} \normalsize{(\textcolor{blue}{4.99})} & \normalsize{\textcolor{blue}{2.01}} & 33.50 \\

&&\cmark & 99.99\tiny{$\pm$ 0.00} \normalsize{(\textcolor{blue}{0.00})} & 42.11\tiny{$\pm$ 0.29} \normalsize{(\textcolor{blue}{0.90})} & 57.15\tiny{$\pm$ 0.51} \normalsize{(\textcolor{blue}{0.13})} & 65.91\tiny{$\pm$ 0.19} \normalsize{(\textcolor{blue}{5.31})} & \normalsize{\textcolor{red}{1.58}} & 44.66 \\

\hline
\hline
\multirow{9}{*}{\shortstack[c]{CIFAR-100 \\ VGG-16}} & Retrain & - & 99.65\tiny{$\pm$ 0.18} \normalsize{(\textcolor{blue}{0.00})} & 42.85\tiny{$\pm$ 0.54} \normalsize{(\textcolor{blue}{0.00})} & 57.70\tiny{$\pm$ 0.47} \normalsize{(\textcolor{blue}{0.00})} & 50.19\tiny{$\pm$ 0.93} \normalsize{(\textcolor{blue}{0.00})} & \normalsize{\textcolor{blue}{0.00}} & 8.67 \\

\cline{2-9}
& \multirow{2}{*}{\shortstack[c]{FT}} & \xmark & 97.71\tiny{$\pm$ 0.25} \normalsize{(\textcolor{blue}{1.94})} & 29.82\tiny{$\pm$ 0.58} \normalsize{(\textcolor{blue}{13.03})} & 63.72\tiny{$\pm$ 0.43} \normalsize{(\textcolor{blue}{6.02})} & 39.98\tiny{$\pm$ 0.62} \normalsize{(\textcolor{blue}{10.21})} & \normalsize{\textcolor{blue}{7.80}} & 1.43 \\

&&\cmark & 99.88\tiny{$\pm$ 0.02} \normalsize{(\textcolor{blue}{0.23})} & 42.37\tiny{$\pm$ 0.80} \normalsize{(\textcolor{blue}{0.48})} & 55.19\tiny{$\pm$ 0.68} \normalsize{(\textcolor{blue}{2.51})} & 50.00\tiny{$\pm$ 0.68} \normalsize{(\textcolor{blue}{0.19})} & \normalsize{\textcolor{red}{0.85}} & 3.18 \\

\cline{2-9}
& \multirow{2}{*}{\shortstack[c]{NegGrad+}} & \xmark  & 95.54\tiny{$\pm$ 0.56} \normalsize{(\textcolor{blue}{4.11})} & 43.42\tiny{$\pm$ 0.38} \normalsize{(\textcolor{blue}{0.57})} & 58.52\tiny{$\pm$ 0.44} \normalsize{(\textcolor{blue}{0.82})} & 43.51\tiny{$\pm$ 0.36} \normalsize{(\textcolor{blue}{6.68})} & \normalsize{\textcolor{blue}{3.04}} & 3.18 \\

&&\cmark & 96.70\tiny{$\pm$ 0.00} \normalsize{(\textcolor{blue}{2.95})} & 42.92\tiny{$\pm$ 0.00} \normalsize{(\textcolor{blue}{0.07})} & 59.14\tiny{$\pm$ 0.00} \normalsize{(\textcolor{blue}{1.44})} & 46.61\tiny{$\pm$ 0.00} \normalsize{(\textcolor{blue}{3.58})} & \normalsize{\textcolor{red}{2.01}} & 3.18 \\

\cline{2-9}
& \multirow{2}{*}{\shortstack[c]{$\ell_1$-sparse}} & \xmark & 98.25\tiny{$\pm$ 1.53} \normalsize{(\textcolor{blue}{1.40})} & 34.24\tiny{$\pm$ 1.87} \normalsize{(\textcolor{blue}{8.61})} & 62.76\tiny{$\pm$ 1.65} \normalsize{(\textcolor{blue}{5.06})} & 42.12\tiny{$\pm$ 0.55} \normalsize{(\textcolor{blue}{8.07})} & \normalsize{\textcolor{blue}{5.79}} & 1.91 \\

&&\cmark & 99.34\tiny{$\pm$ 0.23} \normalsize{(\textcolor{blue}{0.31})} & 42.78\tiny{$\pm$ 0.38} \normalsize{(\textcolor{blue}{0.07})} & 55.06\tiny{$\pm$ 0.39} \normalsize{(\textcolor{blue}{2.64})} & 46.41\tiny{$\pm$ 0.50} \normalsize{(\textcolor{blue}{3.78})} & \normalsize{\textcolor{red}{1.70}} & 3.18 \\

\cline{2-9}
& \multirow{2}{*}{\shortstack[c]{NoT}} & \xmark & 94.23\tiny{$\pm$ 7.94} \normalsize{(\textcolor{blue}{5.42})} & 34.64\tiny{$\pm$ 7.05} \normalsize{(\textcolor{blue}{8.21})} & 61.58\tiny{$\pm$ 4.67} \normalsize{(\textcolor{blue}{3.88})} & 39.84\tiny{$\pm$ 1.62} \normalsize{(\textcolor{blue}{10.35})} & \normalsize{\textcolor{blue}{6.96}} & 2.38 \\

&&\cmark  & 99.55\tiny{$\pm$ 0.19} \normalsize{(\textcolor{blue}{0.10})} & 42.78\tiny{$\pm$ 2.71} \normalsize{(\textcolor{blue}{0.07})} & 55.17\tiny{$\pm$ 1.71} \normalsize{(\textcolor{blue}{2.53})} & 48.88\tiny{$\pm$ 1.92} \normalsize{(\textcolor{blue}{1.31})} & \normalsize{\textcolor{red}{1.00}} & 3.18 \\

\hline
\hline
\multirow{9}{*}{\shortstack[c]{CIFAR-100 \\ ViT}} & Retrain & - & 99.98\tiny{$\pm$ 0.01} \normalsize{(\textcolor{blue}{0.00})} & 48.07\tiny{$\pm$ 0.33} \normalsize{(\textcolor{blue}{0.00})} & 52.40\tiny{$\pm$ 0.58} \normalsize{(\textcolor{blue}{0.00})} & 69.54\tiny{$\pm$ 0.29} \normalsize{(\textcolor{blue}{0.00})} & \normalsize{\textcolor{blue}{0.00}} & 48.35 \\

\cline{2-9}
& \multirow{2}{*}{\shortstack[c]{FT}} & \xmark & 98.71\tiny{$\pm$ 0.30} \normalsize{(\textcolor{blue}{1.27})} & 10.91\tiny{$\pm$ 0.96} \normalsize{(\textcolor{blue}{37.16})} & 56.79\tiny{$\pm$ 0.73} \normalsize{(\textcolor{blue}{4.39})} & 28.18\tiny{$\pm$ 0.93} \normalsize{(\textcolor{blue}{41.36})} & \normalsize{\textcolor{blue}{21.05}} & 1.61 \\

&&\cmark & 99.71\tiny{$\pm$ 0.62} \normalsize{(\textcolor{blue}{0.27})} & 45.95\tiny{$\pm$ 3.70} \normalsize{(\textcolor{blue}{2.12})} & 49.45\tiny{$\pm$ 2.15} \normalsize{(\textcolor{blue}{2.95})} & 59.24\tiny{$\pm$ 1.48} \normalsize{(\textcolor{blue}{10.30})} & \normalsize{\textcolor{red}{3.91}} & 10.74 \\

\cline{2-9}
& \multirow{2}{*}{\shortstack[c]{NegGrad+}} & \xmark & 99.30\tiny{$\pm$ 0.13} \normalsize{(\textcolor{blue}{0.68})} & 45.35\tiny{$\pm$ 0.48} \normalsize{(\textcolor{blue}{2.72})} & 50.82\tiny{$\pm$ 0.47} \normalsize{(\textcolor{blue}{1.58})} & 55.07\tiny{$\pm$ 0.33} \normalsize{(\textcolor{blue}{14.47})} & \normalsize{\textcolor{blue}{4.86}} & 6.45 \\

&&\cmark  & 99.14\tiny{$\pm$ 0.13} \normalsize{(\textcolor{blue}{0.84})} & 45.78\tiny{$\pm$ 0.38} \normalsize{(\textcolor{blue}{2.29})} & 50.93\tiny{$\pm$ 0.39} \normalsize{(\textcolor{blue}{1.47})} & 57.96\tiny{$\pm$ 0.50} \normalsize{(\textcolor{blue}{11.57})} & \normalsize{\textcolor{red}{4.04}} & 5.37 \\

\cline{2-9}
& \multirow{2}{*}{\shortstack[c]{$\ell_1$-sparse}} & \xmark & 71.18\tiny{$\pm$ 0.57} \normalsize{(\textcolor{blue}{28.80})} & 47.30\tiny{$\pm$ 0.26} \normalsize{(\textcolor{blue}{0.77})} & 53.32\tiny{$\pm$ 0.52} \normalsize{(\textcolor{blue}{0.92})} & 44.22\tiny{$\pm$ 2.98} \normalsize{(\textcolor{blue}{25.32})} & \normalsize{\textcolor{blue}{13.95}} & 8.06 \\

&&\cmark & 95.83\tiny{$\pm$ 0.68} \normalsize{(\textcolor{blue}{4.15})} & 49.36\tiny{$\pm$ 0.19} \normalsize{(\textcolor{blue}{1.29})} & 50.87\tiny{$\pm$ 0.54} \normalsize{(\textcolor{blue}{1.53})} & 53.66\tiny{$\pm$ 0.75} \normalsize{(\textcolor{blue}{15.88})} & \normalsize{\textcolor{red}{5.71}} & 10.74 \\

\cline{2-9}
& \multirow{2}{*}{\shortstack[c]{NoT}} & \xmark & 97.86\tiny{$\pm$ 1.71} \normalsize{(\textcolor{blue}{2.12})} & 31.81\tiny{$\pm$ 2.05} \normalsize{(\textcolor{blue}{16.26})} & 55.51\tiny{$\pm$ 1.15} \normalsize{(\textcolor{blue}{3.11})} & 48.85\tiny{$\pm$ 2.48} \normalsize{(\textcolor{blue}{20.69})} & \normalsize{\textcolor{blue}{10.55}} & 3.22 \\

&&\cmark & 99.93\tiny{$\pm$ 0.02} \normalsize{(\textcolor{blue}{0.05})} & 50.21\tiny{$\pm$ 0.66} \normalsize{(\textcolor{blue}{2.14})} & 49.50\tiny{$\pm$ 0.69} \normalsize{(\textcolor{blue}{2.90})} & 65.80\tiny{$\pm$ 0.44} \normalsize{(\textcolor{blue}{3.74})} & \normalsize{\textcolor{red}{2.21}} & 10.74 \\

\bottomrule
\end{tabular}
}
\label{tbl_ours_addon_50p}
\end{table*}

\section{Broader Impacts}
\label{sec_broader_impacts}
Research in machine unlearning holds significant societal value by empowering users to request the removal of their data from models and enhancing model safety and fairness through the elimination of harmful or outdated information. This work is exploratory in nature—we propose an approximate unlearning framework that uses contrastive learning to push forget representations into clusters of other retain samples that are semantically similar to the forget samples. Due to cluster collision, these retain samples may belong to clusters different from the original clusters of the forget samples. Given the nature of our approach, we do not foresee any direct negative societal impacts stemming from this work.

\section{Limitations and Future Work}
\label{sec_limitations}
While {\ours} marks a step forward in leveraging CL for unlearning, its evaluation is currently limited to vision-based classification tasks on relatively small datasets (CIFAR-10/100 \cite{krizhevsky2014cifar} and TinyImageNet \cite{le2015tiny}). Future work could explore its scalability to larger datasets such as ImageNet \cite{russakovsky2015imagenet} and its applicability to other domains, including natural language processing. Additionally, although {\ours} uses InfoNCE as the contrastive loss, investigating alternative self-supervised objectives, such as cross-correlation-based losses \cite{zbontar2021barlow, bardes2022vicreg}, on unlearning performance presents an interesting direction for future research. Moreover, like prior methods \cite{khalil2025NoT,fan2024salun, jia2023model}, {\ours} relies heavily on access to retain data for effective unlearning. Investigating performance of approximate unlearning methods under partial access to retain data would be an important direction for future research. Finally, while CL introduces additional computational overhead due to augmented data and extra forward passes, {\ours} consistently outperforms baselines even when computational budgets are matched. Future work could explore strategies to reduce this cost without sacrificing performance.

\end{document}